\documentclass[11pt]{article}

\usepackage[a4paper,margin=1in]{geometry}

\usepackage[round,authoryear]{natbib}

\usepackage{amsmath,amssymb,mathtools,amsthm}

\usepackage{xparse}   
\usepackage{xargs}    
\usepackage{ifthen}   
\usepackage{xcolor}
\usepackage{enumitem}
\usepackage{url}
\usepackage{dsfont}
\usepackage{algorithm}
\usepackage{algpseudocode}
\usepackage{cancel}

\usepackage[colorlinks=true,
            citecolor=blue,
            linkcolor=blue,
            urlcolor=blue]{hyperref}

\usepackage[capitalize,noabbrev]{cleveref}

\newcommand{\rmd}{\mathrm{d}}
\newcommand{\rme}{\mathrm{e}}
\newcommand{\eqsp}{\,}

\newcommand{\eqdef}{:=}
\newcommand{\pf}{p}
\newcommand{\pb}{\overleftarrow{p}}

\newcommand{\E}{\mathbb{E}}

\newcommand{\pidata}{\pi_{\mathrm{data}}}

\newcommand{\dobs}{d_\mathrm{y}}

\newcommand{\xobs}{y} 
 
\newcommand{\Xobs}{Y} 

\newcommand{\Xora}{\overrightarrow{\mathbf{X}}}
\newcommand{\Xola}{\overleftarrow{\mathbf{X}}}
\newcommand{\Xbar}{\bar{\mathbf{X}}}
\newcommand{\X}{\mathbf{X}}
\newcommand{\x}{\mathbf{x}}
\newcommand{\z}{\mathbf{z}}

\newcommand{\Id}{\mathbf{I}}

\newcommand{\Q}{\mathrm{Q}}

\newcommandx{\pdata}[1][1=]{\ifthenelse{\equal{#1}{}}{p_{\operatorname{data}}}{p_{\operatorname{data}}\left(#1\right)}}
\NewDocumentCommand{\gaussiand}{m m o}{\IfValueTF{#3}{\mathcal{N}\left(#3; #1, #2\right)}{\mathcal{N}\left(#1, #2\right)}}
\newcommandx{\gaussMarg}[2][2=]{\ifthenelse{\equal{#2}{}}{\varphi_{#1}}{\varphi_{#1}\left(#2\right)}}
\NewDocumentCommand{\unif}{m o}{\IfValueTF{#2}{\mathcal{U}_{#1}\left(#2\right)}{\mathcal{U}_{#1}}}
\NewDocumentCommand{\indi}{m o}{\mathrm{1}_{#1}\IfValueTF{#2}{\left(#2\right)}{}}

\newcommand{\set}[1]{\mathrm{#1}}
\newcommand{\kernel}[1]{\mathsf{#1}}

\newcommand{\borelians}[1]{\mathcal{B}(#1)}
\newcommand{\ball}[2]{\operatorname{B}_{#2}(#1)}

\NewDocumentCommand{\minconst}{m m o}{\varepsilon_{#2|#1}\IfValueTF{#3}{^{(#3)}}{}}

\NewDocumentCommand{\minmeas}{m m o}{\IfValueTF{#3}{\upsilon_{#2| #1}\left(#3\right)}{\upsilon_{#2|#1}}}

\def\rset{\mathbb{R}}
\def\rsetpos{\mathbb{R}_{+}}

\def\xdim{d}

\newcommand\signedmeasures[1]{\mathcal{M}_s\left(#1\right)}
\newcommand\vsignedmeasures[1]{\mathcal{M}_{V}\left(#1\right)}
\newcommand\measures[1]{\mathcal{M}_{+}\left(#1\right)}
\newcommand\probmeasures[1]{\mathcal{M}_{1}\left(#1\right)}
\newcommand\measfuns[1]{\mathcal{F}(#1)}

\NewDocumentCommand{\timechange}{o}{\IfValueTF{#1}{\tau\left(#1\right)}{\tau}}

\newcommandx{\fwdmarg}[2][2=]{\ifthenelse{\equal{#2}{}}{\pf_{#1}}{\pf_{#1}\left(#2\right)}}
\newcommandx{\fwdtrans}[4][3=,4=]{\ifthenelse{\equal{#3}{}}{\pf_{#2|#1}}{\pf_{#2|#1}\left(#4|#3\right)}}

\newcommand{\fwdstd}[2]{\sigma_{#2|#1}}
\newcommand{\fwdvar}[2]{\sigma^2_{#2|#1}}
\newcommandx{\bwdmarg}[2][2=]{\ifthenelse{\equal{#2}{}}{\pb_{#1}}{\pb_{#1}\left(#2\right)}}
\NewDocumentCommand{\bwdker}{m o o o}{\IfValueTF{#3}{\IfValueTF{#2}{\Q_{#2|#1}(#3; #4)}{\Q_{#1}(#3; #4)}}{\IfValueTF{#2}{\Q_{#2|#1}}{\Q_{#1}}}}
\NewDocumentCommand{\bwdkercomp}{O{} m m o o}{
\mathbb{Q}^{#1}_{#2:#3}\IfValueTF{#4}{(#4; #5)}{}}
\NewDocumentCommand{\pgen}{O{}}{\hat{\pi}^{\theta}}

\NewDocumentCommand{\bwdkerdiscr}{m o o o}{\IfValueTF{#3}{\IfValueTF{#2}{\bar{\Q}_{#2|#1}(#3; #4)}{\bar{\Q}_{#1}(#3; #4)}}{\IfValueTF{#2}{\bar{\Q}_{#2|#1}}{\bar{\Q}_{#1}}}}

\NewDocumentCommand{\approxbwdker}{O{\theta} m o o o o}{\IfValueTF{#4}{\IfValueTF{#3}{\Q^{#1}_{#3|#2}(#4; #5)}{\Q^{#1}_{#2}(#4; #5)}}{\IfValueTF{#3}{\Q^{#1}_{#3| #2}}{\Q^{#1}_{#2}}}}
\NewDocumentCommand{\approxbwdkercomp}{O{\theta} O{} m m o o}{
\mathbb{Q}^{\theta\IfValueTF{#2}{#2}{}}_{#3:#4}\IfValueTF{#5}{(#5; #6)}{}}
\newcommandx{\bridge}[4][3=,4=]{\ifthenelse{\equal{#3}{}}{\pf_{#2 | #1}}{\pf_{#2 | #1}\left(#4 |#3\right)}}

\NewDocumentCommand{\score}{o o}{\IfValueTF{#2}{\operatorname{S}_{#1}\left(#2\right)}{\operatorname{S}_{#1}}}
\NewDocumentCommand{\tildescore}{o o}{\IfValueTF{#2}{\operatorname{\tilde{S}}_{#1}\left(#2\right)}{\operatorname{\tilde{S}}_{#1}}}
\NewDocumentCommand{\jscore}{o o}{\IfValueTF{#2}{\nabla \operatorname{S}_{#1}\left(#2\right)}{\nabla \operatorname{S}_{#1}}}
\NewDocumentCommand{\scorenet}{o o O{\theta} }{\IfValueTF{#1}{\operatorname{s}_{#3}\left(#2, #1\right)}{\operatorname{s}_{#3}}}
\NewDocumentCommand{\denoiser}{o o}{\IfValueTF{#2}{\operatorname{D}_{#1}(#2)}{\operatorname{D}_{#1}}}
\NewDocumentCommand{\pset}{m O{}}{\IfValueTF{#2}{\mathcal{P}_{#2}(#1)}{\mathcal{P}\left(#1\right)}}

\newcommand{\dotprod}[2]{\left<#1, #2\right>}

\NewDocumentCommand{\csp}{m m o}{\IfValueTF{#3}{\mathcal{C}^{#1}\left(#2; #3\right)}{\mathcal{C}^{#1}\left(#2\right)}}
\NewDocumentCommand{\lpsp}{m m o}{\IfValueTF{#3}{\mathcal{L}_{#1}\left(#2; #3\right)}{\mathcal{L}_{#1}\left(#2\right)}}

\def\refmeas{\pi_{\infty}}
\def\isvp{\alpha}
\newcommand{\noisesch}[1]{\beta_{#1}}
\newcommand{\bwdnoisesch}[1]{\bar{\beta}_{#1}}

\newcommand{\brownvar}[1]{\operatorname{B}_{t}}

\NewDocumentCommand{\pnorm}{O{2} m}{\left\|#2\right\|_{#1}}

\NewDocumentCommand{\lyapunov}{m o}{%
  \operatorname{V}_{#1}%
  \IfValueT{#2}{\!\left(#2\right)}%
}

\NewDocumentCommand{\ctescorenorm}{o m}{%
  \IfValueTF{#1}
    {\tilde{\gamma}_{#2}}
    {\gamma_{#2}}
}

\NewDocumentCommand{\ctescoreoffset}{o m}{%
  \IfValueTF{#1}
    {\tilde{\kappa}_{#2}}
    {\kappa_{#2}}
}

\NewDocumentCommand{\grid}{o o}{
    {\IfValueTF{#1}{t_{\IfValueTF{#2}{#1:#2}{#1}}}{\mathcal{T}}}
}
\NewDocumentCommand{\multlyap}{m m o}{\lambda_{#2|#1}^{\IfValueTF{#3}{(#3)}{}}}
\NewDocumentCommand{\maxmultlyap}{m m o}{\overline{\lambda}_{#2 |#1}^{\IfValueTF{#3}{(#3)}{}}}
\NewDocumentCommand{\biaslyap}{m m o}{\mathrm{K}_{#2|#1}^{\IfValueTF{#3}{(#3)}{}}}
\NewDocumentCommand{\maxbiaslyap}{m m o}{\overline{\mathrm{K}}_{#2|#1}^{\IfValueTF{#3}{(#3)}{}}}

\NewDocumentCommand{\pmixtime}{o o}{%
  \IfValueTF{#1}
    {\bar\alpha_{#2|#1}}
    {\bar\alpha}
}

\NewDocumentCommand{\bmetric}{s m m}{%
  \IfBooleanTF{#1}
    {\vnorm[#2 - #3][b^\star V]} 
    {\vnorm[#2 - #3][bV]}       
}

\NewDocumentCommand{\vmetric}{s m m}{%
  \IfBooleanTF{#1}
    {\vnorm[#2 - #3][V]} 
    {\vnorm[#2 - #3][V]}       
}
\newcommand{\normEc}[1]{\left\|#1\right\|}

\NewDocumentCommand{\wasserstein}{O{} O{} m m}{%
  \mathcal{W}_{#1}%
  \IfNoValueF{#2}{^{#2}}%
  \!\left(#3,#4\right)%
}
\newcommand{\norminfty}[1]{\left\|#1\right\|_{\infty}}

\NewDocumentCommand{\Cdiscr}{m o}{%
  C^{\rm discr}_{#1}%
  \IfValueT{#2}{\!\left[#2\right]}%
}
\NewDocumentCommand{\Cmix}{m o}{%
  C^{\rm mix}_{#1}%
  \IfValueT{#2}{\!\left[#2\right]}%
}
\NewDocumentCommand{\Cnet}{m o}{%
  C^{\rm net}_{#1}%
  \IfValueT{#2}{\!\left[#2\right]}%
}
\newcommand{\ie}{\emph{i.e.}}

\newcommand{\un}{\mathds{1}}

\NewDocumentCommand{\filter}{m o}{%
  \phi_{#1}
  \IfValueT{#2}{\!\left[#2\right]}%
}

\NewDocumentCommand{\mcfilter}{m o O{\theta} O{\npart}}{%
  \phi_{#1}^{#3,#4}%
  \IfValueT{#2}{\!\left[#2\right]}%
}

\NewDocumentCommand{\tildemcfilter}{m o}{%
  \widetilde{\phi}_{#1}^{\theta,M}%
  \IfValueT{#2}{\!\left[#2\right]}%
}

\NewDocumentCommand{\approxfilter}{m o}{%
  \phi_{#1}^\theta
  \IfValueT{#2}{\!\left[#2\right]}%
}

\NewDocumentCommand{\filterinit}{o}{%
  \phi_{0}
  \IfValueT{#1}{\!\left[#1\right]}%
}

\NewDocumentCommand{\approxfilterinit}{o}{%
  \phi^{\theta}_{0}
  \IfValueT{#1}{\!\left[#1\right]}%
}

\NewDocumentCommand{\muterr}{m o}{%
  M_{#1}^{M}%
  \IfValueT{#2}{\!\left[#2\right]}%
}

\NewDocumentCommand{\selerr}{m o}{%
  S_{#1}^{M}%
  \IfValueT{#2}{\!\left[#2\right]}%
}

\NewDocumentCommand{\filtertransform}{m s o}{
  \mathbb{F}_{#1}
  \IfValueTF{#3}{^{#3}}{}
}

\newcommand{\backwardfunInternal}{\beta}

\NewDocumentCommand{\backwardfun}{m m O{\theta}}{
  \backwardfunInternal_{#1|#2}\IfValueTF{#3}{^{#3}}{}
}
\NewDocumentCommand{\barbackwardfun}{m m O{\theta}}{
  \bar{\backwardfunInternal}_{#1|#2}\IfValueTF{#3}{^{#3}}{}
}
\NewDocumentCommand{\backwardnu}{m m O{\theta}}{
  \nu_{#1|#2}\IfValueTF{#3}{^{#3}}{}
}
\NewDocumentCommand{\normbackwardfun}{m m O{\theta} o}{
  \widetilde{\backwardfunInternal}_{#1|#2}^{#3}%
  \IfValueT{#4}{\!\left(#4\right)}%
}

\NewDocumentCommand{\muBias}{m m o}{
  \mu_{#1|#2}\IfValueTF{#3}{^{#3}}{}
}
\NewDocumentCommand{\vosc}{m O{b}}{\operatorname{osc}_{#2V}\left(#1\right)}
\NewDocumentCommand{\measvnorm}{m O{b}}{\|#1\|_{#2V}}
\NewDocumentCommand{\pot}{m o}{g_{#1}\IfValueTF{#2}{\left(#2\right)}{}}

\NewDocumentCommand{\potnorm}{m o}{\tilde{g}_{#1}\IfValueTF{#2}{\left(#2\right)}{}}

\NewDocumentCommand{\unpredictive}{m o}{\kernel{K}_{#1}\IfValueTF{#2}{^{#2}}{}}
\NewDocumentCommand{\predictive}{m o}{\kernel{F}_{#1}\IfValueTF{#2}{^{#2}}{}}
\NewDocumentCommand{\fsmk}{m m o o o}{\kernel{G}_{#1|#2}\IfValueTF{#3}{^{#3}}{}\IfValueTF{#4}{(#4, #5)}{}}

\NewDocumentCommand{\fsmkstar}{m m o o o}{\kernel{G}_{#1|#2} \IfValueTF{#3}{^{#3,*}}{^{\star}} \IfValueTF{#4}{(#4, #5)}{} }

\NewDocumentCommand{\prodfsmk}{m m o o o}{\kernel{G}_{#1:#2}\IfValueTF{#3}{^{#3}}{}\IfValueTF{#4}{(#4, #5)}{}}

\NewDocumentCommand{\mk}{m o o o}{\kernel{Q}_{#1}\IfValueTF{#2}{^{#2}}{}\IfValueTF{#3}{(#3, #4)}{}}
\NewDocumentCommand{\eulermk}{m o}{ \overline{\kernel Q}_{#1}\IfValueTF{#2}{^{#2}}{} }
\def\particle{\xi}
\newcommand{\chunk}[3]{#1_{#2:#3}}
\NewDocumentCommand{\dirac}{m o}{\delta_{#1}\IfValueTF{#2}{\left(#2\right)}{}}
\def\npart{M}
\def\ndisc{N}
\NewDocumentCommand{\empmeas}{m o}{\mu_{\left(#1\right)}\IfValueTF{#2}{\left(#2\right)}{}}
\NewDocumentCommand{\vnorm}{O{} O{V}}{\left\|#1\right\|_{#2}}
\def\eg{\emph{e.g.}}
\def\ie{\emph{i.e.}}

\NewDocumentCommand{\CsupSMC}{O{\theta} m m}{%
  C_{\beta,#2|#3}^{#1}%
}
\NewDocumentCommand{\ClpSMC}{O{\theta} m m}{%
  C_{\Gamma,#2|#3}^{#1}%
}

\NewDocumentCommand{\twistedlyap}{m m O{\theta} o}{%
  V_{#1|#2}^{#3}%
  \IfValueT{#4}{\!\left(#4\right)}%
}

\NewDocumentCommand{\Gminor}{m m o}{%
  \nu_{#1,#2}^{\mathsf G}%
  \IfValueT{#3}{\!\left(#3\right)}%
}

\NewDocumentCommand{\lambdaG}{m}{\lambda_{#1}^{\mathsf G}}
\NewDocumentCommand{\KG}{m}{K_{#1}^{\mathsf G}}

\NewDocumentCommand{\Ftwist}{m m O{\theta} o}{%
  F_{#1|#2}^{#3}%
  \IfValueT{#4}{\!\left(#4\right)}%
}

\NewDocumentCommand{\kappaG}{m O{a}}{%
  \kappa_{#1}^{\mathsf G,#2}%
}

\theoremstyle{plain}
\newtheorem{theorem}{Theorem}[section]
\newtheorem{proposition}[theorem]{Proposition}
\newtheorem{lemma}[theorem]{Lemma}
\newtheorem{corollary}[theorem]{Corollary}
\theoremstyle{definition}

\newtheorem{assumption}[theorem]{Assumption}
\theoremstyle{remark}
\newtheorem{remark}[theorem]{Remark}

\newlist{assumplist}{enumerate}{1}
\setlist[assumplist]{label={(\Roman*)},ref=\theassumption~(\Roman*)}

\crefalias{assumplisti}{assumption}

\usepackage{booktabs}
\usepackage{tabularx}
\usepackage{array}

\usepackage[utf8]{inputenc} 
\usepackage[T1]{fontenc}    
\usepackage{hyperref}       
\usepackage{url}            
\usepackage{booktabs}       
\usepackage{amsfonts}       
\usepackage{nicefrac}       
\usepackage{microtype}      
\usepackage{xcolor}         

\title{Non-Asymptotic Error Bounds for SMC with Biased Proposals: Application to Conditional Diffusion Sampling}

\author{%
  Stanislas Strasman\textsuperscript{1} \quad
  Gabriel Victorino Cardoso\textsuperscript{2} \quad
  Sylvain Le Corff\textsuperscript{1} \\
  Vincent Lemaire\textsuperscript{1} \quad
  Antonio Ocello\textsuperscript{3}
  \\[0.75em]
  \textsuperscript{1} Sorbonne Université and Université Paris Cité, CNRS, LPSM, F-75005 Paris, France\\
  \textsuperscript{2} Center for Statistics and Images, Mines Paris, PSL University, Fontainebleau, France\\
  \textsuperscript{3} CREST, ENSAE Paris, Institut Polytechnique de Paris, Palaiseau, France\\
}

\date{} 

\begin{document}

\maketitle

\begin{abstract}
    Sequential Monte Carlo (SMC) methods are a natural tool for post-hoc conditioning of pretrained generative models, but in many applications the mutation kernels used by the particle system are biased approximations of an ideal Feynman--Kac flow. This paper develops a non-asymptotic error analysis for such SMC samplers. Under forward-smoothing forgetting conditions, we decompose the total error into a kernel bias, measuring the effect of replacing the ideal transition kernels by approximate ones, and a finite-particle Monte Carlo error. Our approach relies on extending local Doeblin-type conditions and Lyapunov drift arguments for Markov kernels to conditional distributions, thereby enabling a principled control of the bias. We then instantiate this general framework for conditional sampling with score-based diffusion models, and derive the first non-asymptotic error bound that jointly controls initialization error, time discretization, and score approximation in the reverse diffusion dynamics as well as finite-particle Monte Carlo error.\end{abstract}

\section{Introduction}

Sequential Monte Carlo (SMC), also known as particle filters, provide a general framework for approximating a sequence of probability distributions through interacting particles \citep[see, \eg,][]{delmoral2004feynman, douc2014nonlinear, chopin2020introduction}. At each step of the sequential algorithm, a population of particles is reweighted according to a potential function (\emph{selection}) and then propagated through a Markov kernel (\emph{mutation}). This selection-mutation mechanism makes SMC particularly well suited to conditional sampling: the potentials encode the information imposed by observations or constraints, while the mutation kernels describe how particles explore the underlying state space.

Most classical analyses of SMC assume that the mutation kernels are exactly available. However, in many modern applications, these kernels are themselves approximations of an ideal dynamics. This
is typically the case when the Markov transition kernel is governed by a
stochastic differential equation. For instance, \citet{gloaguen2022pseudo} analyzes SMC smoothing beyond the ideal case of known transition kernels and highlights that when the unnormalised transition densities are intractable but estimable, possibly with bias, pseudo-marginal techniques can be combined to obtain an online smoother with linear cost and constant memory. 
This creates two sources of error: a deterministic bias, as, even in the infinite limit, the particles target the approximate rather than the ideal sequence of distributions; and a finite-particle Monte Carlo error. A central question is therefore to establish under which assumptions a SMC sampler remains stable when its mutation kernels are approximations of an ideal flow.

This question is particularly relevant for conditional sampling with Score-based Generative Models (SGMs).
Indeed, SGMs provide powerful unconditional generative priors \citep{dickstein2015, song2019generative, ho2020denoising, song2021score}, but many scientific and inverse-problem applications require sampling from conditional distributions. For example, such conditional sampling scenarios arise in Bayesian inverse problems, missing data imputation, or conditional generation subject to partial observations. To perform conditional sampling, existing strategies for SGMs can be broadly grouped into two categories.
\begin{itemize}
    \item Training conditioning: where the conditions are integrated directly into the model’s architecture or training objective \citep{ho2021classifierfree, rombach2022highresolution, saharia2022palette};
    \item Post-hoc conditioning: where an unconditional model is adapted at sampling time using external guidance or constraints.
\end{itemize}

Among the latter class of techniques, particle-based approaches address conditional sampling through Sequential Monte Carlo (SMC) techniques \citep{wu2023practical, cardoso2024monte, nazemi2024particlefilteringbasedlatentdiffusioninverse}. Instead of generating a single conditional trajectory, these methods maintain multiple concurrent samples (particles) that evolve simultaneously and interpret conditional sampling as a filtering problem within the diffusion's state space. Despite its practical appeal, this post-hoc conditioning approach lacks a non-asymptotic theory that jointly accounts for model approximation, discretization, initialization, and finite-particle error.

Our main contributions can be summarized as follows.
\begin{itemize}
    \item We propose a general perturbation theory for Feynman--Kac particle systems with biased mutation kernels. Our result decomposes the sampling error into two distinct contributions: a deterministic bias induced by the approximate kernels, and a finite-particle Monte Carlo error. Both terms are controlled through local one-step errors propagated by forward-smoothing forgetting estimates.
    \item We develop this perturbation framework for conditional sampling with score-based diffusion models. In this setting, the biased mutation kernels arise from time-discretized reverse diffusion dynamics driven by an approximate learned score. The resulting theory provides explicit non-asymptotic error bounds that account for initialization, time discretization, score approximation, and particle approximation.
    \item Building on recent stability and forgetting results for unconditional score-based generative models, we provide verifiable conditions under which the required drift and local Doeblin assumptions hold for the learned time-discretized proposal kernels. In particular, under dissipativity and growth assumptions on the learned score, these stability properties can be established directly for the approximate (proposal) kernels, rather than being imposed only on the ideal dynamics associated with the true score function~\citep{strasman2026forgetting}.
\end{itemize}

\section{Conditional sampling with particle filters}

\subsection{Notations}

Let $\measures{\rset^d}$ (resp. $\probmeasures{\rset^d}$) denote the set of finite (positive) measures (resp. probability measures) on the measurable space $(\rset^{d}, \borelians{\rset^d})$.
We define $\measfuns{\rset^d}$ as the set of Borel measurable functions from $\rset^d \rightarrow \rset$.
For every $(\mu, h) \in \measures{\rset^d}\times \measfuns{\rset^d}$, we define $\mu [h] \eqdef \int h(x) \mu(\rmd x) \in \rset$. We also define, for a set of \emph{particles} $\chunk{\particle}{0}{\npart-1} \in \rset^{\npart\xdim}$, the occupation (or empirical) measure supported on these particles as
\begin{align*}
    \empmeas{\chunk{\particle}{0}{\npart-1}}[\rmd \x] \eqdef \npart^{-1}\sum_{i=0}^{\npart-1}\dirac{\particle_i}[\rmd \x]\eqsp, 
\end{align*} 
where, for $\x\in \rset^\xdim$,
$$
    \dirac{\x}: \borelians{\rset^\xdim} \ni \set{A} \rightarrow
    \begin{cases}
        1 & \x \in \set{A} \eqsp, \\
        0 & \x \notin \set{A}\eqsp.
    \end{cases}
$$

We call a function $\kernel{K} : \rset^{d} \times \borelians{\rset^d} \to \rsetpos$ a finite kernel if, for each $\x \in \rset^d$, the map 
$\set{A} \mapsto \kernel{K}(\x,\set{A})$ is a measure on $\borelians{\rset^d}$, and for each $\set{A} \in \borelians{\rset^d}$, the map $\x \mapsto \kernel{K}(\x,\set{A})$ is a Borel measurable function from $\rset^d$ to $[0, \infty)$. Such kernels induce an operator on real-valued measurable functions defined as:
\begin{align*}
    \kernel{K} h : \rset^{d} \ni \x \mapsto \int h(y) \kernel{K} (\x,\rmd y) \eqsp,
\end{align*}
and an operator on measures $\mu \in \measures{\rset^d}$:
\begin{align*}
    \mu \kernel{K} : \borelians{\rset^d} \ni \set{A} \mapsto \int \mu(\rmd \x) \kernel{K} (\x,\set{A}) \eqsp. 
\end{align*}
A positive function $h \in \measfuns{\rset^d}$ may also act on kernels $\kernel{K}$ by defining a $h$-weighted kernels
\begin{align*}
    h \kernel{K}: \rset^\xdim \times \borelians{\rset^\xdim} \ni (\x, \set{A}) \rightarrow h(\x) \kernel{K}(\x, \set{A}) \in \rsetpos \eqsp.
\end{align*}
Denote also by $\un$ the constant function on $\rset^\xdim$, which is equal to one. For any positive measurable function $g$ and any probability measure $\nu$ such that $0<\nu[g]<\infty$, we define, the tilting map by
$$
\tau_g(\nu)[\psi]
\eqdef
\tfrac{\nu[g\psi]}{\nu[g]} \eqsp,
$$
for every measurable test function \(\psi\) such that \(\nu[|g\psi|]<\infty\). We also say that a function $\mu: \borelians{\rset^\xdim} \rightarrow \rset$ is a signed-measure if there exists $\mu_1$, $\mu_2$ in $\measures{\rset^\xdim}$ such that $\mu = \mu_1 - \mu_2$.
We define the set of signed measures as $\signedmeasures{\rset^\xdim}$. For every $\mu \in \signedmeasures{\rset^\xdim}$ we define the total variation measure $|\mu|$ as the smallest positive measure $\lambda$ satisfying $\lambda(\set{A}) \geq |\mu(\set{A})|$ for all $\set{A} \in \borelians{\rset^\xdim}$. We do not prove the existence of such a measure and refer the reader to \cite[Chapter 6]{rudin1987real} for the proof.
For any Borel measurable $V: \rset^{\xdim} \rightarrow [0, \infty)$, we define 
\begin{equation}
    \label{eq:def:rho_b}
    \vnorm: \signedmeasures{\rset^\xdim} \ni \mu \rightarrow |\mu|(1+V) = \int_{\rset^\xdim}(1+V(\x))\,|\mu|(\rmd \x)
 \in [0, \infty]\eqsp.
\end{equation}
If we define the set $\vsignedmeasures{\rset^\xdim} \eqdef \{\mu \in \signedmeasures{\rset^\xdim} | \vnorm[\mu][] < \infty\}$, it is possible to show \cite[Proposition D.3.3]{douc2018markov} that $ (\vsignedmeasures{\rset^\xdim}, \vnorm)$ is a Banach space.

\subsection{Sequential Monte Carlo with biased Feynman--Kac kernels}
\label{sec:fk_smc_general}

We start by describing the
Feynman--Kac framework underlying our analysis, and later introduce its specialization to conditional sampling with diffusion models.

\paragraph{Idealized Feynman--Kac model.}
Let $\rset^\xdim$ be the state space. We consider a sequence of positive potentials
$(\pot{k})_{0\le k\le N-1}$ and Markov kernels $(\mk{k})_{1\le k\le N}$.
A pair $(\pot{k-1}, \mk{k})$ induces the following unnormalized predictive kernels
\begin{align} \label{eq:fk_kernel_approx_intro}
\unpredictive{k} \eqdef \pot{k-1}\mk{k}, \qquad \unpredictive{k} \psi(\x) = \pot{k-1}(\x)\mk{k}\psi(\x) \eqsp.
\end{align} 
The corresponding normalized Feynman--Kac transformation is the nonlinear map
\begin{equation} \label{eq:filter_recursion}
\filtertransform{k}: \measures{\rset^\xdim} \to \probmeasures{\rset^\xdim},
\qquad
\filtertransform{k} (\nu)
\eqdef
\frac{\nu\unpredictive{k}}
{\nu\unpredictive{k}\un},
\end{equation}
whenever $0<\nu\unpredictive{k}\un<\infty$. Equivalently, for every bounded measurable test function $\psi$,
$$
\filtertransform{k} (\nu) \left[\psi \right] = \frac{ \nu[\pot{k-1}\mk{k}\psi] }{ \nu[\pot{k-1}] } = \left( \tau_{\pot{k-1}}(\nu)\mk{k} \right) \left[\psi \right].
$$
With this map, we define the selection-\textit{before}-propagation Feynman--Kac flow by the recursion
$$
\filter{k} =\filtertransform{k} (\filter{k-1}), \qquad k=1,\dots,N \eqsp.
$$
Intuitively, this dynamical system proceeds by first biasing the state at time $k-1$ toward higher-potential regions via $\pot{k-1}$, and then propagating the resulting distribution to time $k$ through $\mk{k}$.
Equivalently, if $\X_0\sim\filter{0}$ and the distribution of $\X_k$ given $\X_{k-1}$ is $\mk{k}(\X_{k-1},\rmd \x)$,
the Feynman--Kac model is the path law obtained by tilting the Markov chain $(\X_k)_{0\le k\le N}$ by the accumulated potential $\prod_{i=0}^{N-1}\pot{i}(\X_i)$. 

\paragraph{Approximate Feynman--Kac model.}
As in many applications the ideal kernels $(\mk{k})_{1\le k\le N}$ cannot be simulated exactly, we have access to an approximation $(\mk{k}[\theta])_{1\le k\le N}$. These Markov kernels induce the approximate Feynman--Kac flow $\approxfilter{k}
= \filtertransform{k}[\theta] (\approxfilter{k-1})$,
where $\filtertransform{k}[\theta]$ is the approximate counterpart of \eqref{eq:filter_recursion}, obtained with $\unpredictive{k}[\theta] \eqdef \pot{k-1}\mk{k}[\theta]$.
Equivalently, it defines an approximate Markov chain starting from $\X_0^\theta\sim \approxfilter{0}$, and such that the distribution of $\X_k^\theta$  given $\X_{k-1}^\theta$ is $\mk{k}[\theta](\X_{k-1}^\theta,\rmd \x)$, tilted with the same accumulated potentials as in the idealized model.

\paragraph{Particle-based approximation.}
Although the approximate flow $(\approxfilter{k})_{0\le k\le N}$ is well defined recursively, it is generally not available in closed form. SMC approximates it by empirical measures carried by $M$ particles. Suppose that, at time $k$, we have particles $\X_k^{\theta,(1)},\ldots,\X_k^{\theta,(M)} $ and the empirical measure
$ \mcfilter{k} \eqdef \frac1M\sum_{i=1}^{M}\delta_{\X_k^{\theta,(i)}}$. Applying the approximate Feynman--Kac transformation $\filtertransform{k+1}[\theta]$ to $\mcfilter{k}$ gives the finite mixture
$$
\filtertransform{k+1}[\theta](\mcfilter{k})(\rmd \x) = \sum_{i=1}^{M} w_k^{(i)} \mk{k+1}[\theta](\X_k^{\theta,(i)},\rmd \x) \eqsp, \qquad w_k^{(i)} \eqdef \frac{ \pot{k}(\X_k^{\theta,(i)}) }{ \sum_{j=1}^{M}\pot{k}(\X_k^{\theta,(j)}) } \eqsp.
$$
Equivalently, particles are first selected according to the weighted empirical
measure $\tau_{\pot{k}}(\mcfilter{k}) = \sum_{i=1}^{M} w_k^{(i)}\delta_{\X_k^{\theta,(i)}}$, and then propagated independently with the approximate kernel
$\mk{k+1}[\theta]$. Sampling $M$ particles from this mixture produces $\frac1M\sum_{i=1}^{M}\delta_{\X_{k+1}^{\theta,(i)}}$. One implementation of this selection--mutation step is given in \Cref{alg:smc}. The selection step concentrates particles in regions favored by the potential $\pot{k}$, while the mutation step explores the dynamics prescribed by the approximate kernel $\mk{k+1}[\theta]$. Thus, the particle system provides a tractable simulation-based approximation of the biased Feynman--Kac flow $(\approxfilter{k})_{0\le k\le N}$.
\begin{algorithm}[H]
\caption{SMC approximation for conditional / guided sampling}
\label{alg:smc}
\begin{algorithmic}[1]
\State \textbf{Input:} Number of particles $M$, horizon $N$, initial law $\approxfilter{0}$, proposal kernels $(\mk{k}[\theta])_{k=1}^N$,
\State \phantom{\textbf{Input:}} potentials $(\pot{k})_{k=0}^{N-1}$.

\State \textbf{Initialization:}
\For{$j=1,\dots,M$}
    \State Sample $\X^{\theta,(j)}_0\sim \approxfilter{0}$.
\EndFor
\State Set
$$
\mcfilter{0} \eqdef \frac1M \sum_{j=1}^M \delta_{\X_0^{\theta,(j)}} \eqsp.
$$

\For{$k=0,\dots,N-1$}
    \State Define the normalized weights
    $$
    w_k^{(j)} \eqdef \frac{\pot{k}(\X_k^{\theta,(j)})} {\sum_{m=1}^M\pot{k}(\X_k^{\theta,(m)})} \eqsp, \qquad j=1,\dots,M \eqsp.
    $$

    \State \textbf{Selection:}
    Conditionally on
    $$
    \mathcal F_k^\theta \eqdef \sigma\left(\X_\ell^{\theta,(j)}:\,1\le j\le M,\ 0\le \ell\le k\right) \eqsp,
    $$
    sample independently
    $$
    \widetilde \X_k^{\theta,(1)},\ldots,\widetilde \X_k^{\theta,(M)} \sim \sum_{j=1}^M w_k^{(j)}\delta_{\X_k^{\theta,(j)}} \eqsp.
    $$

    \State \textbf{Mutation:}
    \For{$j=1,\dots,M$}
        \State Sample
        $$
        \X_{k+1}^{\theta,(j)} \sim \mk{k+1}[\theta](\widetilde \X_k^{\theta,(j)},\rmd \x) \eqsp.
        $$
    \EndFor

    \State Set
    $$
    \mcfilter{k+1} \eqdef \frac1M\sum_{j=1}^M\delta_{\X_{k+1}^{\theta,(j)}} \eqsp.
    $$
\EndFor

\State \textbf{Output:} Empirical measure $\mcfilter{N}$, or particles $(\X_N^{\theta,(j)})_{j=1}^M$.
\end{algorithmic}
\end{algorithm}

\paragraph{Target, approximation, and error decomposition.}
The distribution of interest is the ideal Feynman--Kac flow $(\filter{k})_{0\le k\le N}$, obtained from the ideal mutation kernels $(\mk{k})_{1\le k\le N}$ and the potentials $(\pot{k})_{0\le k\le N-1}$. In practice, the ideal mutation kernels are not available exactly. The particle system is therefore run with the approximate kernels $(\mk{k}[\theta])_{1\le k\le N}$, and targets the corresponding approximate
Feynman--Kac flow $(\approxfilter{k})_{0\le k\le N}$. Thus, the error with respect to the ideal target naturally splits as
$$
\mcfilter{N}[\psi]-\filter{N}[\psi] = \left(\mcfilter{N}[\psi]-\approxfilter{N}[\psi]\right) + \left(\approxfilter{N}[\psi]-\filter{N}[\psi]\right) \eqsp.
$$
The first term is the finite-particle Monte Carlo error: it measures how well
the particle system approximates the approximate flow. The second term is the
kernel bias: it measures the deterministic discrepancy between the approximate
flow and the ideal flow, caused by replacing the ideal kernels $\mk{k}$ by
$\mk{k}[\theta]$, and by any mismatch between the initial laws
$\filter{0}$ and $\approxfilter{0}$.

\subsection{Diffusion-based SMC}
\label{sec:diffusion_smc}

\paragraph{Forward-backward Diffusion processes.}
We now instantiate the abstract Feynman--Kac framework in the setting of
conditional sampling with SGMs. Following the approach of \citet{strasman2026forgetting}, let $\pidata \in \probmeasures{\rset^\xdim}$. The \emph{forward diffusion process} is defined for $t \in [0,T]$ as
\begin{align}
\label{eq:forward_sde_main}
\rmd \Xora_t = -\isvp \beta(t)\Xora_t \rmd t + \sqrt{2\beta(t)} \rmd B_t \eqsp, \qquad \Xora_0\sim \pidata \eqsp,
\end{align}
where $(B_t)_{t \in [0,T]}$ is a $d$-dimensional Brownian motion. Specific choices of $(\isvp,\noisesch{})$ recover either the \emph{Variance Exploding (VE)} \citep{song2019generative} model, for $\isvp=0$, $\noisesch{t}=\fwdstd{t}{0} \dot{\fwdstd{t}{0}}$, and $\fwdvar{t}{0} = 2 \int_0^t \noisesch{s} \rmd s$, or the \emph{Variance Preserving (VP)} \citep{dickstein2015,ho2020denoising}, for $\isvp = 1$. Writing $p_t$ for the probability density function of $\Xora_t$ (and identify with abuse of notation as its distribution), the time-reversed process, also known as \emph{backward process} \citep{haussmann1986time} is given by
\begin{align}
\label{eq:reverse_sde_main}
    \rmd \Xola_t = \left( \isvp \bwdnoisesch{t}\Xola_t + 2\bwdnoisesch{t}\nabla \log p_{T-t}(\Xola_t) \right)\rmd t + \sqrt{2\bwdnoisesch{t}}\,\rmd B_t \eqsp, \qquad \Xola_0\sim p_T \eqsp,
\end{align}
with $\bwdnoisesch{t}\eqdef\beta(T-t)$. To sample \eqref{eq:reverse_sde_main} sequentially, first, fix a time discretization $0=t_0<t_1<\cdots<t_N=T$. For $k=0,\dots,N-1$, we denote the exact reverse transition kernel $\mk{k+1}[]$ from time $t_k$ to time $t_{k+1}$ as $ \mk{k+1}[][\x][\set{A}]
\eqdef \mathbb P \left( \Xola_{t_{k+1}}\in \set{A} \middle|  \Xola_{t_k}=\x \right) $, for $ \set{A}\in\borelians{\rset^\xdim} $. Thus, in the absence of conditioning potentials (that is, when $g_k =1$ for $k = 0, \cdots, N$), $p_T \mk{1}\cdots \mk{\ndisc} = \pidata$. These kernels play the role of the \emph{ideal mutation kernels} in the idealized Feynman--Kac model.

\paragraph{Score-based generative models.}
In practice, the reverse dynamics \eqref{eq:reverse_sde_main} cannot be simulated
exactly. Instead, in the SGMs formalism, we replace the ideal reverse process by a
tractable approximation, introducing three sources of error.

\begin{enumerate}
    \item \textbf{Initialization.}
    As the law $p_T$, Gaussian convolution with $\pidata$, is generally not available in closed form, it is replaced by a tractable reference
    distribution $\refmeas$. In the VE case, we have $\refmeas=\gaussiand{0}{\fwdvar{0}{T} \Id_\xdim}$, while in the VP case we consider the stationary distribution of \eqref{eq:forward_sde_main}, \ie, $\refmeas=\gaussiand{0}{ \isvp^{-1} \Id_{\xdim}}$.

    \item \textbf{Score approximation.}
    The reverse drift depends on the unknown score function $(t,x)\mapsto\nabla\log p_{T-t}(x)$. It is thus replaced by a function $\scorenet:\,(0,T]\times\rset^\xdim\to\rset^\xdim$ parameterized in $\theta$, usually of the form of a Neural Network.
    Training this architecture by denoising score matching \citep{Vincent}, one aims to learn $\scorenet[t][\x][\theta]\approx \nabla\log p_t(\x)$.

    \item \textbf{Time discretization.}
    As the transition kernels of \eqref{eq:reverse_sde_main} are not available exactly, they are replaced by a time-discretized approximation.
    Given the grid $0=t_0<\cdots<t_N=T$, we use the \emph{time-changed Euler--Maruyama scheme} with step sizes $\Delta_k \eqdef \int_{t_k}^{t_{k+1}} \bwdnoisesch{u}\,\rmd u$ to discretize the continuous-time dynamics.
\end{enumerate}
Together, these approximations yield the simulable reverse Markov Chain
$$
\Xbar_{t_{k+1}}^\theta = \Xbar_{t_k}^\theta + \Delta_k \left( \isvp \Xbar_{t_k}^\theta
+ 2 \scorenet[T-t_k][\Xbar_{t_k}^\theta][\theta] \right) + \sqrt{2\Delta_k}\,\xi_k,
    \qquad \text{ with } \quad \xi_k\sim\gaussiand{0}{\Id_\xdim} \eqsp,
$$
initialized from $\Xbar_{t_0}^\theta\sim\refmeas$. We denote by $\mk{k+1}[\theta] \eqdef \approxbwdker{t_k}[t_{k+1}] $ the Markov kernel induced by this update. Thus $\mk{k+1}[\theta]$ is the biased, but simulable, counterpart of the ideal reverse kernel $\mk{k+1}[]$. These kernels play the role of the \emph{approximate mutation kernels} in the approximate Feynman--Kac model.

\section{Stability of particle filters with biased proposals}

This section develops our stability results for particle filters driven by biased or approximate proposal kernels.  We introduce Harris-type stability assumptions for the approximate proposal kernels and detail how drift and minorization conditions imply exponential forgetting in weighted total variation norms. We then extend these ideas from Markov chains to Feynman--Kac models through the introduction of forward smoothing kernels, which encode the combined effect of mutation and reweighting. We show  that the drift structure of the proposal kernels transfers to the forward smoothing kernels. This yields a contraction property for the associated Feynman--Kac semigroups, providing exponential forgetting of initial conditions for the approximate filtering dynamics.
Building on this stability framework, we then quantify the effect of the kernel approximation bias and analyze the propagation of Monte Carlo errors in the particle system. Combining these two contributions leads to a non-asymptotic decomposition of the total filtering error into (i) a stochastic term due to particle approximation and (ii) a deterministic bias term induced by the approximate proposal dynamics extending exponential forgetting of unconditional samplers to conditional sampling procedures.

In \citet{gloaguen2022pseudo}, the authors study online smoothing for Feynman--Kac models with intractable transition densities replaced by possibly biased estimators and show that a local bias of size $\varepsilon$ in these estimators induces an asymptotic bias of order $\mathcal O(n\varepsilon)$ over a path of length $n$. Even when the local approximation error is controlled uniformly, its global effect is expected to accumulate at most linearly in time under suitable forgetting conditions. Our analysis focuses on biased mutation kernels arising from learned and time-discretized reverse diffusion proposals, and gives a non-asymptotic decomposition separating kernel approximation error from finite-particle Monte Carlo error.


\paragraph{Forgetting of Markov Chains in unbounded state space.}

A central tool for establishing time-uniform stability of SMC algorithms is \emph{Harris theory}, which provides quantitative exponential forgetting of initial conditions for Markov chains \citep{douc2004quantitative, meyn2009markov, HairerMattingly2008}. Let $(\mk{k}[\theta])_{1\le k\le N}$ be a sequence of Markov kernels on $\rset^\xdim$, and $V:\rset^\xdim\to[0,\infty)$ a Lyapunov function. \emph{Forgetting} is naturally expressed in the weighted total variation distance associated with $V$ and defined as
\begin{align}
\label{def:rho_b}
    \vmetric{\mu_1}{\mu_2}
    \eqdef
    \int_{\rset^\xdim} \left( 1+ \lyapunov{} (\x) \right)
    |\mu_1-\mu_2|(\rmd \x) \eqsp,  \qquad \mu_1, \mu_2 \in \mathcal{P} (\rset^\xdim) \eqsp.
\end{align}
A sufficient condition for uniform contraction in $\vmetric{\cdot}{\cdot}$ is the combination of a Lyapunov drift condition and a minorization condition \citep{HairerMattingly2008}.

\begin{assumption}
\label{ass:proposal_harris}
The approximate proposal kernels $(\mk{k}[\theta])_{1\le k\le N}$ satisfy the
following conditions:

\begin{assumplist}

\item \textbf{Drift.}
There exists a measurable function $V: \rset^\xdim \to [0,\infty)$ such that, for every $1 \le k \le N$, there exist constants $\lambda_k^Q\in(0,1)$ and $K_k^Q<\infty$ satisfying, for any $\x \in \rset^\xdim$,
\begin{equation} \label{eq:drift_Q}
\mk{k}[\theta] V(\x) \le \lambda_k^Q V(\x) + K_k^Q \eqsp.
\end{equation}

\item \textbf{Minorization.}
For every $R>0$ and every $1\le k\le N$, there exist
$\varepsilon_k^Q(R)>0$ and a probability measure $\nu_{k,R}^Q$ such that, for every $\x \in C_R \eqdef \{ \x \in \rset^\xdim: V(\x) \le R\} $, and every $A \in \borelians{\rset^\xdim}$,
\begin{equation} \label{eq:minorization_Q}
\mk{k}[\theta](\x,A) \ge \varepsilon_k^Q(R)\nu_{k,R}^Q(A) \eqsp.
\end{equation}

\end{assumplist}
\end{assumption}

The drift condition controls excursions to infinity, while the minorization
condition provides uniform mixing once the chain returns to the sublevel set
$C_R$. Together, these two properties replace the global mixing assumptions that are typically unavailable on unbounded state spaces \citep{meyn2009markov}. Standard Harris theory then implies contraction in a suitably weighted total variation distance; see, for instance, \citet[Theorem~1.3]{HairerMattingly2008}. More precisely, for each $k$, one can choose a weight parameter $b>0$ and obtain a constant $\rho_k\in(0,1)$ such that
$$
\bmetric{\mu_1\mk{k}[\theta]}{\mu_2\mk{k}[\theta]} \le \rho_k \bmetric{\mu_1}{\mu_2} \eqsp, \qquad \mu_1,\mu_2\in\probmeasures{\rset^\xdim} \eqsp,
$$
where the metric $\bmetric{\cdot}{\cdot}$ is built from the Lyapunov function
$V$. We refer to this contraction property as \emph{forgetting} for the
proposal kernel $\mk{k}[\theta]$.

\paragraph{Forgetting in Feynman--Kac models.}
Moving from stability results for Markov chains to Feynman--Kac models requires accounting for the successive reweighting induced by the potentials. Indeed, the evolution is not driven only by the mutation kernels, but by the selection--mutation mechanism. Forward-smoothing kernels encode this combined effect: they describe how a perturbation introduced at an intermediate time is
transported to the terminal time under the tilted Feynman--Kac dynamics. Such backward--forward representations are standard tools in the analysis of Feynman--Kac models and particle smoothing
\citep{delmoral2004feynman,delmoral2010backward,dubarry2013non,
cardoso2023state}. We impose the following standard conditions on the potentials to prevent weights from vanishing.
\begin{assumption}
\label{ass:bounded_normalized_potentials}
The potentials are strictly positive and bounded above. For $0\le k\le N-1$, define
$$
\potnorm{k} \eqdef \frac{\pot{k}}{\norminfty{\pot{k}}} \eqsp.
$$
\end{assumption}
The following normalized \emph{continuation function} records the remaining Feynman--Kac weight from time $k$ to $\ell$. For $0\le  k \le \ell \le N$, define
\begin{equation} \label{eq:approx_backward_functions_main_norm}
\normbackwardfun{k}{\ell}[\theta][\x] \eqdef \E_{k,\x}^{\theta} \left[ \prod_{i=k}^{\ell-1} \potnorm{i} (X_i^\theta) \right] \eqsp, \qquad \normbackwardfun{\ell}{\ell}[\theta]\eqdef \un \eqsp. 
\end{equation}
Here $\E_{k,\x}^{\theta}$ denotes expectation for the approximate Markov chain propagated with $\mk{k+1}[\theta], \cdots \mk{\ell}[\theta]$ and started from $\delta_{\x}$ at time $k$. An immediate consequence of \Cref{ass:bounded_normalized_potentials} is that, $0<\normbackwardfun{k}{\ell}[\theta]\le 1,$ for all $0\le k \le \ell \le N$.
We can now define \emph{forward smoothing kernels}, obtained by tilting the proposal kernel with the future continuation weight. For $1\le k \le \ell \le N$, define
\begin{equation} \label{eq:approx_smoothing_kernel_main}
\fsmk{k}{\ell}[\theta][\x_{k-1}][\rmd \x_k] \eqdef \frac{ \mk{k}[\theta][\x_{k-1}][\rmd \x_k]\, \normbackwardfun{k}{\ell}[\theta][\x_k] }{ \mk{k}[\theta]\normbackwardfun{k}{\ell}[\theta](\x_{k-1}) } \eqsp.
\end{equation}
Forward smoothing kernels describe how a perturbation introduced at time $k$ is transported to a later time $\ell \ge k$ under the tilted dynamics. We write $\prodfsmk{k}{\ell}[\theta] \eqdef \fsmk{k+1}{\ell} [\theta]\cdots \fsmk{\ell}{\ell}[\theta]$ and $\prodfsmk{\ell}{\ell}[\theta]\eqdef \operatorname{Id} $. Moreover, following \citet{whiteley2012sequential}, to transfer the proposal minorization to the tilted kernels, we need the minorization measure of the proposal to assign positive mass to the future continuation weights. 
\begin{assumption}
\label{ass:normalized_continuation_lower_bound}
For every $R>0$ and every $1\le k\le N$, assume that
\begin{equation} \label{eq:minoration_cQ}
c_{k,R}^Q \eqdef \nu_{k,R}^Q \left[ \normbackwardfun{k}{N}[\theta]\right] > 0 \eqsp.
\end{equation}
\end{assumption}
The next lemma shows that the drift and minorization conditions on the proposal kernels transfer to the forward-smoothing kernels. The price to pay is that the natural Lyapunov function becomes time-inhomogeneous.
\begin{lemma}
\label{lem:harris_forward_smoothing_main}
Suppose that Assumptions
\ref{ass:proposal_harris},
\ref{ass:bounded_normalized_potentials}, and
\ref{ass:normalized_continuation_lower_bound} hold. Define
$$
\twistedlyap{k}{N}[\theta][\x] \eqdef \frac{V(\x)}{\normbackwardfun{k}{N}[\theta][\x]} \eqsp,
\qquad 0\le k\le N \eqsp.
$$
Then, for every $R>0$, every $1\le k\le N$, every $\x_{k-1}\in C_R$, and every $A\in\borelians{\rset^\xdim}$, there exists a probability measure $\Gminor{k}{R}$ such that
$$
\fsmk{k}{N}[\theta][\x_{k-1}][A] \ge \varepsilon_k^Q(R)c_{k,R}^Q\,\Gminor{k}{R}[A] \eqsp.
$$
Moreover, for every $1\le k\le N$ and $R_k>K_k^Q / (1-\lambda_k^Q)$, there exist $\lambdaG{k} \in (0,1)$ and $\KG{k} < \infty$, such that for every $\x_{k-1}\in\rset^\xdim$,
$$
\fsmk{k}{N}[\theta]\twistedlyap{k}{N}[\theta](\x_{k-1}) \le \lambdaG{k}\twistedlyap{k-1}{N}[\theta](\x_{k-1}) + \KG{k}\un_{C_{R_k}}(\x_{k-1}) \eqsp,
$$
where
$$
C_{R_k} \eqdef \left\{\x\in\rset^\xdim:V(\x)\le R_k\right\}.
$$
\end{lemma}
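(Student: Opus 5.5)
The plan is to compute directly with the definition \eqref{eq:approx_smoothing_kernel_main}, using one algebraic identity for the continuation functions. By the Markov property, conditioning on the first step of the approximate chain started at time $k-1$ gives
\begin{equation*}
\normbackwardfun{k-1}{N}[\theta][\x] = \potnorm{k-1}(\x)\, \mk{k}[\theta]\normbackwardfun{k}{N}[\theta](\x) \eqsp, \qquad \x\in\rset^\xdim \eqsp.
\end{equation*}
Since $\normbackwardfun{k}{N}[\theta]\le 1$ (from \Cref{ass:bounded_normalized_potentials}), we also have $\mk{k}[\theta]\normbackwardfun{k}{N}[\theta]\le 1$. These two facts drive both parts of the lemma.

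\emph{Minorization.} Fix $\x_{k-1}\in C_R$ and $A\in\borelians{\rset^\xdim}$. The numerator of $\fsmk{k}{N}[\theta][\x_{k-1}][A]$ is $\mk{k}[\theta](\x_{k-1},\un_A\normbackwardfun{k}{N}[\theta])$. Apply \eqref{eq:minorization_Q}, extended from indicators to nonnegative functions by the usual monotone-class argument. This gives the lower bound $\varepsilon_k^Q(R)\,\nu_{k,R}^Q[\un_A\normbackwardfun{k}{N}[\theta]]$. The denominator is bounded above by $1$. Define
\begin{equation*}
\Gminor{k}{R}[A]\eqdef \frac{\nu_{k,R}^Q[\un_A\normbackwardfun{k}{N}[\theta]]}{c_{k,R}^Q} \eqsp,
\end{equation*}
which is a probability measure by \Cref{ass:normalized_continuation_lower_bound}. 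Then $\fsmk{k}{N}[\theta][\x_{k-1}][A]\ge \varepsilon_k^Q(R)c_{k,R}^Q\,\Gminor{k}{R}[A]$, as claimed.

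\emph{Drift.} The weights cancel:
\begin{equation*}
\fsmk{k}{N}[\theta]\twistedlyap{k}{N}[\theta](\x) = \frac{\mk{k}[\theta]V(\x)}{\mk{k}[\theta]\normbackwardfun{k}{N}[\theta](\x)} = \frac{\potnorm{k-1}(\x)\,\mk{k}[\theta]V(\x)}{\normbackwardfun{k-1}{N}[\theta][\x]} \eqsp.
\end{equation*}
Using $\potnorm{k-1}\le1$ and \eqref{eq:drift_Q}, this is at most $\lambda_k^Q\twistedlyap{k-1}{N}[\theta](\x)+K_k^Q/\normbackwardfun{k-1}{N}[\theta][\x]$. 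We then split into two cases.

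If $\x\notin C_{R_k}$, then $V(\x)>R_k$, so $K_k^Q<(K_k^Q/R_k)V(\x)$. The bias term is absorbed, giving $(\lambda_k^Q+K_k^Q/R_k)\twistedlyap{k-1}{N}[\theta](\x)$. We set $\lambdaG{k}\eqdef\lambda_k^Q+K_k^Q/R_k$, which is in $(0,1)$ exactly because $R_k>K_k^Q/(1-\lambda_k^Q)$.

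If $\x\in C_{R_k}$, we keep $\lambda_k^Q\le\lambdaG{k}$ on the first term. For the second term we need $\KG{k}\eqdef K_k^Q\sup_{C_{R_k}}1/\normbackwardfun{k-1}{N}[\theta] < \infty$. Using the identity and the minorization just proved, on $C_{R_k}$ we get
\begin{equation*}
\normbackwardfun{k-1}{N}[\theta][\x]\ge \potnorm{k-1}(\x)\,\varepsilon_k^Q(R_k)\,c_{k,R_k}^Q \eqsp.
\end{equation*}

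The main obstacle is this last step. It requires $\inf_{C_{R_k}}\potnorm{k-1}>0$, which does not follow formally from pointwise strict positivity in \Cref{ass:bounded_normalized_potentials}. The first thing I would try is to read \Cref{ass:bounded_normalized_potentials} as giving potentials bounded below on sublevel sets of $V$; this holds, e.g., for continuous positive potentials when the $C_R$ are compact. Failing that, I would state explicitly that $\KG{k}$ is finite under this mild local lower bound.
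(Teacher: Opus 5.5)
Your minorization part is correct and coincides with the paper's proof: same measure $\nu^{\mathsf G}_{k,R}$, same use of $\mk{k}[\theta]\normbackwardfun{k}{N}[\theta]\le 1$. The drift part, however, has a genuine gap, and it is self-inflicted. After rewriting $\fsmk{k}{N}[\theta]\twistedlyap{k}{N}[\theta](\x)=\potnorm{k-1}(\x)\,\mk{k}[\theta]V(\x)/\normbackwardfun{k-1}{N}[\theta](\x)$, you drop the factor $\potnorm{k-1}(\x)\le 1$ in front of \emph{both} terms of $\lambda_k^Q V+K_k^Q$. The offset then becomes $K_k^Q/\normbackwardfun{k-1}{N}[\theta](\x)=K_k^Q/\bigl(\potnorm{k-1}(\x)\,\mk{k}[\theta]\normbackwardfun{k}{N}[\theta](\x)\bigr)$. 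Its supremum over $C_{R_k}$ is infinite as soon as $\potnorm{k-1}$ is not bounded away from zero on $C_{R_k}$, which \Cref{ass:bounded_normalized_potentials} allows (e.g.\ a strictly positive but discontinuous potential). So your $\KG{k}$ is not shown to be finite under the stated hypotheses. Both remedies you suggest amount to adding an assumption the lemma does not need.

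The fix is to use $\potnorm{k-1}\le1$ only on the $\lambda$-term and keep the original denominator in the offset:
\begin{align*}
\fsmk{k}{N}[\theta]\twistedlyap{k}{N}[\theta](\x)
= \frac{\mk{k}[\theta]V(\x)}{\mk{k}[\theta]\normbackwardfun{k}{N}[\theta](\x)}
\le \lambda_k^Q\,\twistedlyap{k-1}{N}[\theta](\x) + \frac{K_k^Q}{\mk{k}[\theta]\normbackwardfun{k}{N}[\theta](\x)} \eqsp,
\end{align*}
where the first term is bounded via $\normbackwardfun{k-1}{N}[\theta]\le\mk{k}[\theta]\normbackwardfun{k}{N}[\theta]$.

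Off $C_{R_k}$, multiply and divide the offset by $V(\x)>R_k$ and use the same inequality to absorb it into $(K_k^Q/R_k)\twistedlyap{k-1}{N}[\theta](\x)$.

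On $C_{R_k}$, apply \eqref{eq:minorization_Q} directly to the function $\normbackwardfun{k}{N}[\theta]$. By \Cref{ass:normalized_continuation_lower_bound}, this gives $\mk{k}[\theta]\normbackwardfun{k}{N}[\theta](\x)\ge\varepsilon_k^Q(R_k)\,c_{k,R_k}^Q>0$.

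Together these give $\lambdaG{k}=\lambda_k^Q+K_k^Q/R_k$ and $\KG{k}=K_k^Q/(\varepsilon_k^Q(R_k)c_{k,R_k}^Q)$, with no lower bound on the potentials. Your own displayed lower bound on $\normbackwardfun{k-1}{N}[\theta]$ is exactly this inequality multiplied by the harmful factor $\potnorm{k-1}(\x)$. This corrected route is the paper's argument.
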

For proof and precise statement of all the constants see \Cref{lem:harris_forward_smoothing}. The previous lemma transfers the proposal drift and minorization conditions to the forward-smoothing kernels. We now use these transferred conditions to state a finite-horizon stability estimate for the forward-smoothing flow for the weighted norm associated
with $V$.

\begin{proposition}
\label{lem:finite_horizon_forward_smoothing_stability_main}
Suppose that Assumptions
\ref{ass:proposal_harris},
\ref{ass:bounded_normalized_potentials}, and
\ref{ass:normalized_continuation_lower_bound} hold. Then, for every $0\le \ell\le N$, there exists an explicit finite constant $\Gamma_{\ell,N}$, such that for all probability measures $\mu_1,\mu_2$ satisfying $\mu_1 \left[1+\twistedlyap{\ell}{N}[\theta]\right] + \mu_2 \left[1+\twistedlyap{\ell}{N}[\theta]\right] <\infty$ , we have
$$
\vnorm[ \mu_1\prodfsmk{\ell}{N}[\theta] - \mu_2\prodfsmk{\ell}{N}[\theta] ][V] \le \Gamma_{\ell,N} \left\{ \mu_1 \left[1+\twistedlyap{\ell}{N}[\theta]\right] + \mu_2 \left[1+\twistedlyap{\ell}{N}[\theta]\right] \right\} \eqsp.
$$
\end{proposition}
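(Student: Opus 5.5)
The plan is to apply a Harris-type one-step contraction to each forward-smoothing kernel $\fsmk{k}{N}[\theta]$ using \Cref{lem:harris_forward_smoothing_main}, and then chain the bounds over $k=\ell+1,\dots,N$. Since the Lyapunov function $\twistedlyap{k}{N}[\theta]$ changes with $k$, each step maps the weighted space built on $\twistedlyap{k-1}{N}[\theta]$ into the one built on $\twistedlyap{k}{N}[\theta]$.

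\textbf{Step 1: per-step contraction.} Fix $k$ and $R_k>K_k^Q/(1-\lambda_k^Q)$. \Cref{lem:harris_forward_smoothing_main} provides:
\begin{itemize}
\item the drift bound $\fsmk{k}{N}[\theta]\twistedlyap{k}{N}[\theta]\le \lambdaG{k}\twistedlyap{k-1}{N}[\theta]+\KG{k}\un_{C_{R_k}}$;
\item a minorization with constant $\varepsilon_k^Q(R)c_{k,R}^Q$ on sublevel sets $C_R$ of $V$.
\end{itemize}
The minorization is required on a sublevel set of $\twistedlyap{k-1}{N}[\theta]$. This is automatic, because $\normbackwardfun{k-1}{N}[\theta]\le 1$ gives $V\le \twistedlyap{k-1}{N}[\theta]$, so $\{\twistedlyap{k-1}{N}[\theta]\le R\}\subset C_R$.

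I would then rerun the argument of \citet[Theorem~1.3]{HairerMattingly2008}, allowing distinct Lyapunov functions before and after the step. Choose $R$ large, for instance $R> 2\KG{k}/(1-\lambdaG{k})$, and choose $b_k>0$ small. For $\mu_1,\mu_2$ with finite $\twistedlyap{k-1}{N}[\theta]$-moments, this should give
$$
\vnorm[\mu_1\fsmk{k}{N}[\theta]-\mu_2\fsmk{k}{N}[\theta]][b_k\twistedlyap{k}{N}[\theta]]\le \rho_k\,\vnorm[\mu_1-\mu_2][b_k\twistedlyap{k-1}{N}[\theta]]
$$
with $\rho_k<1$. The proof splits on whether $\twistedlyap{k-1}{N}[\theta](\x)+\twistedlyap{k-1}{N}[\theta](\x')\le R$, working pointwise with Dirac pairs $(\delta_\x,\delta_{\x'})$ and then integrating over an optimal coupling of the positive and negative parts. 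When the sum is at most $R$, the minorization couples the two points. When it exceeds $R$, the drift forces a strict decrease.

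\textbf{Step 2: chaining.} Iterating over $k$ is possible only if consecutive steps use the same weight. Either one takes a common $b=\min_k b_k$, which is legitimate since a smaller $b$ still yields contraction, or one absorbs norm-equivalence factors $\max(b_k/b_{k+1},b_{k+1}/b_k)$. This gives
$$
\vnorm[\mu_1\prodfsmk{\ell}{N}[\theta]-\mu_2\prodfsmk{\ell}{N}[\theta]][b\twistedlyap{N}{N}[\theta]]\le \prod_{k=\ell+1}^N\rho_k\;\vnorm[\mu_1-\mu_2][b\twistedlyap{\ell}{N}[\theta]].
$$

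\textbf{Step 3: endpoints.} Since $\normbackwardfun{N}{N}[\theta]=\un$, we have $\twistedlyap{N}{N}[\theta]=V$. Hence the left-hand norm dominates $\min(1,b)\,\vnorm[\cdot][V]$. On the right, $\vnorm[\mu_1-\mu_2][b\twistedlyap{\ell}{N}[\theta]]\le \max(1,b)\{\mu_1[1+\twistedlyap{\ell}{N}[\theta]]+\mu_2[1+\twistedlyap{\ell}{N}[\theta]]\}$. This yields the statement with
$$
\Gamma_{\ell,N}=\max(1,b)\min(1,b)^{-1}\prod_{k>\ell}\rho_k .
$$
Even the crude bound $\Gamma_{\ell,N}\le \max(1,b)/\min(1,b)$, obtained by bounding each step's norm by a constant, would suffice for a finite constant. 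The contraction factor is what makes the constant useful later.

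\textbf{Main obstacle.} The hardest part is Step 1 in the time-inhomogeneous setting. The drift indicator is supported on $C_{R_k}$, a sublevel set of $V$ rather than of $\twistedlyap{k-1}{N}[\theta]$. Moreover, the minorization constant $c_{k,R}^Q$ depends on $R$, so $R$ must be fixed before $b_k$ and $\rho_k$ are chosen. I would first follow the Hairer--Mattingly bookkeeping with $\alpha_0=1-\varepsilon_k^Q(R)c_{k,R}^Q$ and $\gamma_0=\lambdaG{k}+2\KG{k}/R$, and check that $\rho_k=\max\{1-(\alpha_0'), (2+Rb\gamma_0)/(2+Rb)\}$ remains below one when the input and output Lyapunov functions differ.
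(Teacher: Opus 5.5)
Your proposal is correct, but it takes a genuinely different route from the paper.

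\textbf{The paper's route.} The paper never proves a one-step contraction. It forms the bivariate function $\overline F_{k|N}(\x,\x')=\tfrac12[F_{k|N}(\x)+F_{k|N}(\x')]$ with $F_{k|N}=1+V_{k|N}^\theta$. It then verifies the time-inhomogeneous coupling conditions NS1--NS2 of \citet{douc2004quantitative} on the sets $\overline C_k=\{\overline F_{k-1|N}\le d_k\}$. The minorization there is taken on $C_{L_k}$ with $L_k=2d_k-1$, and it uses $\normbackwardfun{k-1}{N}[\theta]\le 1$ exactly as your inclusion does. Finally it quotes their Theorem~8, so $\Gamma_{\ell,N}$ appears as an infimum, over the number $j$ of coupling events, of a coupling-failure term plus a drift term.

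\textbf{Why your Step 1 goes through.} Your re-run of \citet{HairerMattingly2008}, with input Lyapunov function $V_{k-1|N}^\theta$ and output $V_{k|N}^\theta$, works. Their argument is pointwise in $(\x,\x')$ and uses only two facts:
\begin{itemize}
\item the drift bound $\fsmk{k}{N}[\theta]V_{k|N}^\theta\le\lambda_k^{\mathsf G}V_{k-1|N}^\theta+K_k^{\mathsf G}$, where the indicator of $C_{R_k}$ can simply be bounded by one, so the obstacle you flag is harmless;
\item the minorization on $\{(\x,\x'):V_{k-1|N}^\theta(\x)+V_{k-1|N}^\theta(\x')\le R\}\subset C_R\times C_R$.
\end{itemize}
Passing from Dirac pairs to $\mu_1,\mu_2$ works with the product coupling of the normalized Jordan parts. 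That coupling puts no mass on the diagonal, so no optimal coupling is needed.

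\textbf{The constant.} The constant you left garbled is $\rho_k=\max\{1-\varepsilon_k^Q(R)c_{k,R}^Q+bK_k^{\mathsf G},\,(2+Rb\gamma_0)/(2+Rb)\}$, with $\gamma_0=\lambda_k^{\mathsf G}+2K_k^{\mathsf G}/R<1$ and $bK_k^{\mathsf G}<\varepsilon_k^Q(R)c_{k,R}^Q$. Both entries stay below one when $b$ is decreased, which justifies the common weight $b=\min_k b_k$.

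\textbf{Comparison.} Your route gives a genuine per-step contraction, so $\Gamma_{\ell,N}$ is essentially a product $\prod_{k>\ell}\rho_k$. Geometric decay under uniform constants is then immediate, without the paper's later choice $j=\lceil\delta(N-\ell)\rceil$ and the control of $\overline B_k$. The costs are:
\begin{itemize}
\item an auxiliary weight $b$ shared by all steps, hence the factor $\max(1,b)/\min(1,b)$ when returning to $\vnorm[\cdot][V]$, so the worst step degrades the constant over the whole horizon;
\item a second radius $R>2K_k^{\mathsf G}/(1-\lambda_k^{\mathsf G})$ at which the minorization must be evaluated, playing the role of the paper's $L_k$.
\end{itemize}
The coupling theorem of \citet{douc2004quantitative} instead bounds the $(1+V)$-weighted norm directly, without an auxiliary weight.
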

The proof is postponed to \Cref{app:forward_smothing_forgetting} in \Cref{prop:dmr_conditions_forward_smoothing} and \Cref{cor:finite_horizon_forward_smoothing_stability}. The finite-horizon coefficient $\Gamma_{\ell,N}$ is always finite for fixed $N$. To obtain a geometric rate with constants independent of the terminal horizon, we need to impose a uniform nondegeneracy condition at the level of the proposal kernels and continuation masses.

\begin{assumption}
\label{ass:time_uniform}
There exist constants $\lambda_Q\in(0,1)$, $K_Q<\infty$, and
$R_\star>K_Q/(1-\lambda_Q)$, independent of $N$, such that, for every
$N\ge1$ and every $1\le k\le N$,
$$
\lambda_k^Q\le \lambda_Q \eqsp, \qquad K_k^Q\le K_Q \eqsp.
$$
Moreover, assume that
$$
\eta_{R_\star} \eqdef \inf_{N\ge1}\inf_{1\le k\le N} \varepsilon_k^Q(R_\star)c_{k,R_\star}^Q  > 0 \eqsp, 
\qquad \text{and} \qquad 
\eta_{L_\star} \eqdef \inf_{N\ge1}\inf_{1\le k\le N} \varepsilon_k^Q(L_\star)c_{k,L_\star}^Q > 0 \eqsp,
$$
with $L_\star \eqdef 3+ 4K_Q / (\eta_{R_\star} (1-\lambda_Q-(K_Q/R_\star))) $.
\end{assumption}

\begin{corollary}[Geometric forward-smoothing stability]
\label{cor:geometric_forward_smoothing_stability_main}
Suppose that Assumptions
\ref{ass:proposal_harris},
\ref{ass:bounded_normalized_potentials},
\ref{ass:normalized_continuation_lower_bound}, and \ref{ass:time_uniform} hold. Then, \Cref{lem:finite_horizon_forward_smoothing_stability_main} holds with $M_{\mathsf G}<\infty$ and $\rho_{\mathsf G}\in(0,1)$, independent of
$N$ and $\ell$, such that, for every $0\le \ell\le N$,
$$
\Gamma_{\ell,N} \le M_{\mathsf G} \rho_{\mathsf G}^{N-\ell} \eqsp.
$$
\end{corollary}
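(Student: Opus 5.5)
The plan is to prove \Cref{cor:geometric_forward_smoothing_stability_main} by iterating one-step contractions of the forward-smoothing kernels $\fsmk{k}{N}[\theta]$ in a weighted total-variation metric, and to do it with constants that come only from \Cref{ass:time_uniform}. The first step is to make the constants of \Cref{lem:harris_forward_smoothing_main} uniform. Apply that lemma with $R_k=R_\star$ for every $k$. By \Cref{ass:time_uniform}, $\lambda_k^Q\le\lambda_Q$ and $K_k^Q\le K_Q$. Tracing the explicit constants, which we expect to be $\lambdaG{k}=\lambda_k^Q+K_k^Q/R_k$ and $\KG{k}$ of order $K_k^Q/c^Q_{k,R_k}$, gives
\[
\lambdaG{k}\le \lambda_\star \eqdef \lambda_Q+K_Q/R_\star<1 .
\]
The bound on $\KG{k}$ should be controlled through $\eta_{R_\star}$, roughly $\KG{k}\le 2K_Q/\eta_{R_\star}$. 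The minorization constant on the sublevel set $\{V\le L_\star\}$ is bounded below by $\eta_{L_\star}$, uniformly in $k$ and $N$.

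The second step is to convert this drift--minorization pair into a uniform one-step contraction. The drift holds for the time-inhomogeneous Lyapunov function $\twistedlyap{k}{N}[\theta]$. Minorization is only available on sets of the form $\{V\le R\}$, but $\{\twistedlyap{k-1}{N}[\theta]\le R\}\subset\{V\le R\}$ because $\normbackwardfun{k-1}{N}[\theta]\le 1$. This inclusion is exactly what lets the minorization be applied on the level sets of $\twistedlyap{k-1}{N}[\theta]$. Run the Hairer--Mattingly argument \citep[Theorem~1.3]{HairerMattingly2008} for the inhomogeneous chain with the metric $\rho_{b,k}(\mu_1,\mu_2)=\int(1+b\,\twistedlyap{k}{N}[\theta])\,\rmd|\mu_1-\mu_2|$. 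The proof checks that
\[
\rho_{b,k}(\mu_1\fsmk{k}{N}[\theta],\mu_2\fsmk{k}{N}[\theta])\le \rho_\star\,\rho_{b,k-1}(\mu_1,\mu_2) .
\]
It works by coupling pairs of points: if both points lie in the small set, use the minorization; otherwise use the drift, with the level $L_\star$ chosen so that $\lambda_\star+2\KG{k}/L_\star<1$. The specific value $L_\star=3+4K_Q/(\eta_{R_\star}(1-\lambda_Q-K_Q/R_\star))$ in \Cref{ass:time_uniform} is presumably tuned for exactly this. With a suitable $b$ depending only on $\lambda_\star$, $K_Q/\eta_{R_\star}$ and $\eta_{L_\star}$, this yields $\rho_\star\in(0,1)$.

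The third step is to iterate and compare norms. Composing the contractions from $\ell+1$ to $N$ gives
\[
\rho_{b,N}\big(\mu_1\prodfsmk{\ell}{N}[\theta],\mu_2\prodfsmk{\ell}{N}[\theta]\big)\le \rho_\star^{N-\ell}\rho_{b,\ell}(\mu_1,\mu_2).
\]
At the terminal time $\normbackwardfun{N}{N}[\theta]=\un$, so $\twistedlyap{N}{N}[\theta]=V$ and $\vnorm[\cdot][V]\le \max(1,b^{-1})\rho_{b,N}$. At the initial time, $\rho_{b,\ell}(\mu_1,\mu_2)\le \max(1,b)\{\mu_1[1+\twistedlyap{\ell}{N}[\theta]]+\mu_2[1+\twistedlyap{\ell}{N}[\theta]]\}$ by the triangle inequality. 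Setting $M_{\mathsf G}=\max(1,b)\max(1,b^{-1})$ and $\rho_{\mathsf G}=\rho_\star$ then gives the claim, and this also identifies $\Gamma_{\ell,N}$ from \Cref{lem:finite_horizon_forward_smoothing_stability_main}.

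The main obstacle is the second step. The Lyapunov function changes with $k$, so the contraction maps the metric weighted by $\twistedlyap{k-1}{N}[\theta]$ to the one weighted by $\twistedlyap{k}{N}[\theta]$. The standard homogeneous proof has to be redone with care so that $b$ and $\rho_\star$ do not depend on $k$. The approach is to reprove the Hairer--Mattingly one-step lemma for a kernel mapping one weighted space to another: it only needs the drift inequality linking the two Lyapunov functions and minorization on $\{\twistedlyap{k-1}{N}[\theta]\le L_\star\}$, and both are uniform by the first step.
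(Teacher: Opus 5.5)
Your proof is correct, but it takes a different route from the paper.

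\textbf{The paper's route.} The paper never proves a one-step contraction. It checks the bivariate conditions NS1--NS2 of \citet{douc2004quantitative} for $\overline F_{k|N}(\x,\x')=\tfrac12[F_{k|N}(\x)+F_{k|N}(\x')]$. Minorization is imposed on the pair set $\overline C_k=\{\overline F_{k-1|N}\le d_k\}\subset C_{L_k}\times C_{L_k}$, with $L_k=2d_k-1$. Their Theorem~8 then gives the explicit finite-horizon constant $\Gamma_{\ell,N}$ of \eqref{eq:def:_gamma_ell}, as an infimum over the number $j$ of coupling opportunities. \Cref{lem:structural_uniform_dmr_constants} bounds $\overline\varepsilon_k,\overline\lambda_k,\overline b_k,\overline B_k$ uniformly. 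Finally the choice $j=\lceil\delta(N-\ell)\rceil$ yields $\rho_{\mathsf G}=\max\{(1-\varepsilon_\star)^\delta,\lambda_\star B_\star^\delta\}$.

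\textbf{Your choice of $L_\star$.} The value $L_\star=2d_\star-1$ is tuned to that bivariate small set, not to a Hairer--Mattingly condition. It still works for you. Write $\lambda_{\mathsf G,\star}\eqdef\lambda_Q+K_Q/R_\star$ (your $\lambda_\star$). The lemma gives exactly $K_k^{\mathsf G}\le K_Q/\eta_{R_\star}\eqqcolon K_{\mathsf G,\star}$, with no factor $2$. Then $L_\star>4K_{\mathsf G,\star}/(1-\lambda_{\mathsf G,\star})$, so your requirement $\lambda_{\mathsf G,\star}+2K_{\mathsf G,\star}/L_\star<1$ holds.

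\textbf{Your one-step lemma.} It goes through as claimed. The Hairer--Mattingly estimate is pointwise in $(\x,\x')$ and uses only two inputs:
\begin{itemize}
\item the drift $\fsmk{k}{N}[\theta]\twistedlyap{k}{N}[\theta]\le\lambda_k^{\mathsf G}\twistedlyap{k-1}{N}[\theta]+K_k^{\mathsf G}$;
\item minorization on $\{\twistedlyap{k-1}{N}[\theta]\le L_\star\}\subset C_{L_\star}$, with constant at least $\eta_{L_\star}$.
\end{itemize}
So using different Lyapunov functions at source and target costs nothing. A single weight $b$, of order $\eta_{L_\star}/K_{\mathsf G,\star}$, serves all $k$.

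\textbf{Comparison.} Your route gives a transparent per-step rate and geometric decay by plain composition. It avoids the $j$-optimization and the auxiliary $\delta$ and $B_\star$. The paper's route gives a single explicit $\Gamma_{\ell,N}$ from an off-the-shelf time-inhomogeneous theorem. That constant already yields the finite-horizon proposition without any uniformity, and then specializes to the geometric bound. Your argument is tailored to the uniform case; without it you would need $k$-dependent weights and comparison factors.

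Finally, you produce a new admissible constant $\max(b,b^{-1})\rho_\star^{N-\ell}$ in the proposition, rather than bounding the paper's specific $\Gamma_{\ell,N}$. This is harmless, since all downstream results only use the proposition's inequality.
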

Exact values for $M_{\mathsf G}$ and $\rho_{\mathsf G}$ and proofs are given in \Cref{cor:geometric_forward_smoothing_stability} and \Cref{lem:structural_uniform_dmr_constants} in the appendix.

\paragraph{Kernel bias assumptions.}
To control the deterministic bias induced by replacing the ideal kernels $(\mk{k})_{1\le k\le N}$ by the approximate kernels $(\mk{k}[\theta])_{1\le k\le N}$ we will use the forward smoothing kernel. The key point is that a local discrepancy at time $k$ is not propagated by the raw proposal kernels, but by the forward smoothing flow from $k$ to the terminal time $N$. To get a quantified control on the kernel bias we make the following assumption.
\begin{assumption}
\label{ass:local_bias_error}
There exist $\mathcal B_0^{\rm bias}\ge0$, $h\ge0$, and measurable
functions $\mathcal E_k^{\rm bias}:\rset^\xdim\to[0,\infty)$, $1\le k\le N$, such
that the following conditions hold.
\begin{assumplist}
\item \textbf{Initial moment and initial error:} $\approxfilter{0}[V]<\infty$ and $\vnorm[\filter{0}-\approxfilter{0} ][V] \le \mathcal B_0^{\rm bias}$.

\item \textbf{Local kernel errors and integrability:} for all $1\le k\le N$ and all $\x\in\rset^\xdim$,
$$
\vnorm[ \delta_\x\mk{k} -\delta_\x\mk{k}[\theta] ][V]
\le h \mathcal E_k^{\rm bias}(\x) \eqsp, \quad \text{and} \quad 
\mathcal B_k^{\rm bias} \eqdef\tau_{\pot{k-1}}(\filter{k-1})[\mathcal E_k^{\rm bias}] < \infty \eqsp.
$$
\end{assumplist}
\end{assumption}
\Cref{ass:local_bias_error} separates the initial mismatch from the local one-step kernel errors. The parameter $h$ should be interpreted as the accuracy scale of the approximation, for instance a time-discretization step size of some numerical integrator in \Cref{sec:diffusion_application}. Regarding the integrability conditions, under Assumptions \ref{ass:proposal_harris}, \ref{ass:bounded_normalized_potentials}, and \ref{ass:local_bias_error}, both flows have finite $V$-moments: $\filter{k}[V]+\approxfilter{k}[V]<\infty$ for $0\le k\le N$. An upper bound to the kernel bias is given in \Cref{prop:kernel_bias_from_forward_forgetting}.

%
\paragraph{Finite-Particle Approximation.} To control the finite-particle approximation error in $L_q$ norm. We need the following local moment condition ensuring that the selection and mutation empirical averages have finite $L_q$ fluctuations.
\begin{assumption} \label{ass:local_lp_bound_moment}
Let $q\ge 2$ and assume that, for every $1\le k\le N$,
$$
\approxfilter{0}[(1+V)^{q}]<\infty \eqsp, \qquad \sup_{M\ge1} \E\left[ \tau_{\pot{k-1}}(\mcfilter{k-1}) \left[ \mk{k}[\theta](1+V)^{q} \right] \right] < \infty \eqsp.
$$
\end{assumption}
Under this assumption, the local Monte Carlo error is of order $M^{-1/2}$. The proof is the standard conditional Marcinkiewicz--Zygmund argument applied separately to the selection and mutation errors (see \Cref{lem:local_error_lp}). Moreover, the propagation of local empirical errors through the normalized Feynman--Kac flow introduces random normalization terms. For $0\le \ell\le N$, we define
$$
\eta_\ell^{M,\theta} \eqdef \begin{cases} \approxfilter{0}, & \ell=0,\\ \filtertransform{\ell}[\theta](\mcfilter{\ell-1}), & 1\le \ell\le N \eqsp.
\end{cases}
$$
\begin{assumption}
\label{ass:mc_normalization_moments}
Let $p\ge1$. For every $0\le \ell\le N$, assume that
$$
\mathcal A_{\ell,N} \eqdef \sup_{M\ge1} \normEc{ \frac{1}{ \mcfilter{\ell} [ \normbackwardfun{\ell}{N}[\theta] ]} \left( 1+ \frac{ \eta_\ell^{M,\theta} [ \normbackwardfun{\ell}{N}[\theta]+V ] }{ \eta_\ell^{M,\theta} [ \normbackwardfun{\ell}{N}[\theta] ] } \right) }_{L_{2p}} <\infty \eqsp.
$$
\end{assumption}
An upper bound to the finite-particle approximation is given in \Cref{thm:final_smc_error_bound}

\paragraph{Main result.} Combining the Monte Carlo estimate with the deterministic kernel-bias bound gives the following total error estimate. The first term is the finite-particle error for the approximate Feynman--Kac flow, while the second term is the accumulated bias due to the approximate proposal kernels. For notational compactness, define the effective local bias terms
$$
\overline{\mathcal B}_0^{\rm bias} \eqdef \mathcal B_0^{\rm bias} \eqsp,
\qquad \overline{\mathcal B}_\ell^{\rm bias} \eqdef h\mathcal B_\ell^{\rm bias} \eqsp,
\qquad 1 \le \ell\le N \eqsp.
$$

\begin{theorem}
\label{thm:total_smc_kernel_bias_error}
Let $p\ge1$. Suppose that Assumptions
\ref{ass:proposal_harris},
\ref{ass:bounded_normalized_potentials},
\ref{ass:normalized_continuation_lower_bound},
\ref{ass:local_bias_error},
\ref{ass:local_lp_bound_moment} with $q=2p$, and
\ref{ass:mc_normalization_moments} hold. Then, for every measurable $\psi:\rset^\xdim\to\rset$ such that $\|\psi\|_{V,\infty}<\infty$,
\begin{align*}
\normEc{ \mcfilter{N}[\psi]-\filter{N}[\psi] }_{L_p}
&\le \|\psi\|_{V,\infty} \left[ \frac{1}{\sqrt M} \sum_{\ell=0}^{N} C_\ell^{\rm MC} \mathcal A_{\ell,N} \Gamma_{\ell,N} + \sum_{\ell=0}^{N} \Gamma_{\ell,N}\Lambda_{\ell,N} \overline{\mathcal B}_\ell^{\rm bias} \right] \eqsp.
\end{align*}
If moreover \Cref{ass:time_uniform} holds, then
\begin{align*}
\normEc{ \mcfilter{N}[\psi]-\filter{N}[\psi]}_{L_p}
&\le \|\psi\|_{V,\infty}M_{\mathsf G} \left[ \frac{1}{\sqrt M} \sum_{\ell=0}^{N} C_\ell^{\rm MC} \mathcal A_{\ell,N} \rho_{\mathsf G}^{N-\ell} + \sum_{\ell=0}^{N} \Lambda_{\ell,N} \overline{\mathcal B}_\ell^{\rm bias} \rho_{\mathsf G}^{N-\ell} \right] \eqsp.
\end{align*}
where $C_0^{\rm MC}\eqdef C_0^{(2p)}$ and $C_\ell^{\rm MC} \eqdef C_\ell^{S,(2p)}+C_\ell^{M,(2p)}$
are the constants of \Cref{lem:local_error_lp} applied with exponent $2p$ and $\Lambda_{\ell,N}$ is defined in \Cref{prop:kernel_bias_from_forward_forgetting}.
\end{theorem}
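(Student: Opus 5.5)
The plan is to use the splitting already displayed in \Cref{sec:fk_smc_general},
$$
\mcfilter{N}[\psi]-\filter{N}[\psi] = \left(\mcfilter{N}[\psi]-\approxfilter{N}[\psi]\right) + \left(\approxfilter{N}[\psi]-\filter{N}[\psi]\right),
$$
and apply Minkowski's inequality in $L_p$. The second term is deterministic. Since $|\mu[\psi]|\le \|\psi\|_{V,\infty}\vnorm[\mu][V]$, it suffices to bound $\vnorm[\approxfilter{N}-\filter{N}][V]$.

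For the bias I would telescope along the approximate flow. Set $\filtertransform{\ell:N}[\theta]$ as the composition of the approximate Feynman--Kac maps from $\ell$ to $N$. Then
$$
\approxfilter{N}-\filter{N}=\sum_{\ell=0}^{N}\left[\filtertransform{\ell:N}[\theta](\nu_\ell)-\filtertransform{\ell:N}[\theta](\nu'_\ell)\right],
$$
where $\nu_0=\approxfilter{0}$, $\nu'_0=\filter{0}$, and for $\ell\ge1$ we take $\nu_\ell=\filtertransform{\ell}[\theta](\filter{\ell-1})$ and $\nu'_\ell=\filter{\ell}=\filtertransform{\ell}(\filter{\ell-1})$. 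The key identity is the standard representation of the normalized semigroup through forward smoothing kernels: $\filtertransform{\ell:N}[\theta](\mu)=\tilde\mu\,\prodfsmk{\ell}{N}[\theta]$, with $\tilde\mu\eqdef\tau_{\normbackwardfun{\ell}{N}[\theta]}(\mu)$. This converts each summand into a difference of two measures pushed through $\prodfsmk{\ell}{N}[\theta]$, so \Cref{lem:finite_horizon_forward_smoothing_stability_main} applies and contributes the factor $\Gamma_{\ell,N}$. What remains is to relate $\tilde\nu_\ell[1+\twistedlyap{\ell}{N}[\theta]]$ to the local error. I would split the difference as $\tilde\nu_\ell-\tilde\nu'_\ell$ plus a common part, using linearity of the forward smoothing kernels in the initial law. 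Here $\nu_\ell-\nu'_\ell=\tau_{\pot{\ell-1}}(\filter{\ell-1})(\mk{\ell}[\theta]-\mk{\ell})$, whose $V$-norm is at most $h\mathcal B_\ell^{\rm bias}$ by \Cref{ass:local_bias_error}. Dividing by $\normbackwardfun{\ell}{N}[\theta]$ and renormalizing produces the constant $\Lambda_{\ell,N}$ of \Cref{prop:kernel_bias_from_forward_forgetting}, which depends on the $V$-moments and on the lower bound of the normalizing constant. For $\ell=0$ the same argument gives $\mathcal B_0^{\rm bias}$. Summing over $\ell$ gives $\sum_\ell\Gamma_{\ell,N}\Lambda_{\ell,N}\overline{\mathcal B}_\ell^{\rm bias}$.

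For the Monte Carlo term I would telescope in the same way, now with $\nu_\ell=\mcfilter{\ell}$ and $\nu'_\ell=\eta_\ell^{M,\theta}$, using that $\filtertransform{\ell+1}[\theta](\mcfilter{\ell})=\eta^{M,\theta}_{\ell+1}$. Each summand is again a forward smoothing image of $\tau_{\normbackwardfun{\ell}{N}[\theta]}(\mcfilter{\ell})-\tau_{\normbackwardfun{\ell}{N}[\theta]}(\eta_\ell^{M,\theta})$. Writing the tilt difference explicitly, it equals $(\mcfilter{\ell}[\normbackwardfun{\ell}{N}[\theta]])^{-1}$ times a linear functional of $\mcfilter{\ell}-\eta_\ell^{M,\theta}$, together with a correction involving the ratio $\eta_\ell^{M,\theta}[\normbackwardfun{\ell}{N}[\theta]+V]/\eta_\ell^{M,\theta}[\normbackwardfun{\ell}{N}[\theta]]$.

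This is where the main difficulty lies. The contraction constant $\Gamma_{\ell,N}$ is stated for probability measures with integrability weighted by $1+\twistedlyap{\ell}{N}[\theta]$, whereas the random error is a signed empirical fluctuation. I would handle it by applying the Lipschitz form of the stability bound conditionally on $\mathcal F^\theta_{\ell-1}$. Hölder's inequality with exponents $(2p,2p)$ then separates the random normalization factor, bounded by $\mathcal A_{\ell,N}$ through \Cref{ass:mc_normalization_moments}, from the local fluctuation, bounded in $L_{2p}$ by $C_\ell^{\rm MC}M^{-1/2}$ through \Cref{lem:local_error_lp} under \Cref{ass:local_lp_bound_moment} with $q=2p$. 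Summing over $\ell$ yields the first term of the bound.

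The time-uniform statement then follows by substituting $\Gamma_{\ell,N}\le M_{\mathsf G}\rho_{\mathsf G}^{N-\ell}$ from \Cref{cor:geometric_forward_smoothing_stability_main}.
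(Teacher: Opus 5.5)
Your treatment of the bias term, and the overall architecture, coincide with the paper's:
\begin{itemize}
\item the telescoping along the approximate flow is \Cref{prop:filtering_error_decomposition};
\item the representation $\mathbb F^\theta_{\ell:N}(\mu)=\tau_{\widetilde\beta^\theta_{\ell|N}}(\mu)\,\mathsf G^\theta_{\ell:N}$ is \Cref{prop:approx_tail_factorization};
\item your ``common part'' splitting is the Jordan-decomposition extension \eqref{eq:contraction_sgn_measure};
\item the renormalisation step is the twisting estimate \eqref{eq:twisting_estimate_kernel_bias}, which produces $\Lambda_{\ell,N}$.
\end{itemize}

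The gap is in the Monte Carlo term, exactly at the step you flag. Abbreviate $\widetilde\beta=\widetilde\beta^\theta_{\ell|N}$. As written, applying the Lipschitz (signed-measure) form of the stability bound to the empirical fluctuation yields $\Gamma_{\ell,N}\,|\tau_{\widetilde\beta}(\phi_\ell^{\theta,M})-\tau_{\widetilde\beta}(\eta_\ell^{M,\theta})|[1+V^\theta_{\ell|N}]$. This quantity does not decay in $M$. The measure $\phi_\ell^{\theta,M}$ is atomic. The measure $\eta_\ell^{M,\theta}$ has no atoms as soon as the proposals are diffuse, e.g.\ the Gaussian Euler kernels, and likewise at $\ell=0$ when $\phi^\theta_0=\pi_\infty$. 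So the two tilted measures are mutually singular. Their weighted total variation then equals the sum of their $(1+V^\theta_{\ell|N})$-moments, which is at least $2$. Equivalently, after the twisting estimate you would need $\|\,\|\phi_\ell^{\theta,M}-\eta_\ell^{M,\theta}\|_V\|_{L_{2p}}\lesssim M^{-1/2}$. But \Cref{lem:local_error_lp} only controls $(\phi_\ell^{\theta,M}-\eta_\ell^{M,\theta})[\varphi]$ for one test function at a time, not the supremum over $|\varphi|\le 1+V$. The H\"older step you describe therefore has nothing of order $M^{-1/2}$ to act on.

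The missing idea is to dualise: keep $\psi$ fixed and move the stability bound onto the test function.
\begin{itemize}
\item Write $\eta=\eta_\ell^{M,\theta}$, $c=\tau_{\widetilde\beta}(\eta)\mathsf G^\theta_{\ell:N}[\psi]$ and $\varphi_{\ell,N}=\widetilde\beta\,(\mathsf G^\theta_{\ell:N}\psi-c)/\eta[\widetilde\beta]$. By \Cref{prop:approx_tail_factorization} and the centering $\eta[\varphi_{\ell,N}]=0$, the $\ell$-th summand equals $\frac{\eta[\widetilde\beta]}{\phi_\ell^{\theta,M}[\widetilde\beta]}(\phi_\ell^{\theta,M}-\eta)[\varphi_{\ell,N}]$.
\item Apply the probability-measure version of \Cref{lem:finite_horizon_forward_smoothing_stability_main} to $\mu_1=\delta_x$ and $\mu_2=\tau_{\widetilde\beta}(\eta)$. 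This gives the pointwise bound $|\mathsf G^\theta_{\ell:N}\psi(x)-c|\le\|\psi\|_{V,\infty}\Gamma_{\ell,N}\{1+V(x)/\widetilde\beta(x)+\eta[\widetilde\beta+V]/\eta[\widetilde\beta]\}$.
\item Multiply by $\widetilde\beta(x)$ and use $\widetilde\beta+V\le1+V$ and $\widetilde\beta\le 1$. This turns the twisted weight back into $1+V$ and gives $\|\varphi_{\ell,N}\|_{V,\infty}\le\|\psi\|_{V,\infty}\Gamma_{\ell,N}\,\eta[\widetilde\beta]^{-1}\{1+\eta[\widetilde\beta+V]/\eta[\widetilde\beta]\}$.
\item Since $\varphi_{\ell,N}$ is $\mathcal F^\theta_{\ell-1}$-measurable, the conditional Marcinkiewicz--Zygmund bounds of \Cref{lem:local_error_lp} apply to $\varphi_{\ell,N}/\|\varphi_{\ell,N}\|_{V,\infty}$.
\item H\"older with exponents $(2p,2p)$ then produces exactly $\mathcal A_{\ell,N}$, with the factor $\eta[\widetilde\beta]$ cancelling.
\end{itemize}
The contrast with the bias term is the point. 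There, the input $\mathbb F_\ell(\phi_{\ell-1})-\mathbb F^\theta_\ell(\phi_{\ell-1})$ is genuinely small in $\|\cdot\|_V$, so the total-variation route is legitimate. For the particle error it is not.
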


\begin{remark}
Under \Cref{ass:time_uniform}, the propagation coefficients satisfy $\Gamma_{\ell,N}\le M_{\mathsf G}\rho_{\mathsf G}^{N-\ell}$. Thus both the Monte Carlo local errors and the kernel-bias local errors are discounted geometrically according to their distance from the terminal time. Obtaining a bound uniformly bounded in $N$ for the full error additionally requires uniform control in $N$ of the multiplicative factors $C_\ell^{\rm MC}\mathcal A_{\ell,N}$ and $\Lambda_{\ell,N}\overline{\mathcal B}_\ell^{\rm bias}$.
\end{remark}

\begin{proof}
By Minkowski's inequality,
$$
\normEc{ \mcfilter{N}[\psi]-\filter{N}[\psi] }_{L_p} \le \normEc{ \mcfilter{N}[\psi]-\approxfilter{N}[\psi] }_{L_p} + \left| \approxfilter{N}[\psi]-\filter{N}[\psi] \right|.
$$
The first term is controlled by \Cref{thm:final_smc_error_bound}. For the second term, using the duality representation of the $V$ norm, we get
$$
\left| \approxfilter{N}[\psi]-\filter{N}[\psi] \right|\le \|\psi\|_{V,\infty} \vnorm[ \approxfilter{N}-\filter{N} ][V] \eqsp.
$$
Using \Cref{prop:kernel_bias_from_forward_forgetting} to bound the right-hand side finishes the proof.
\end{proof}

\section{Application to diffusion-based SMC}
\label{sec:diffusion_application}

We now explain how the theory developed in the last section can be instantiated when an unconditional SGM model is used as a proposal kernel, that is for $(\mk{k}[\theta])_{1\le k\le N}$. Throughout this section, we take $V(\x)\eqdef \|\x\|^2$. Recall from \Cref{sec:diffusion_smc} that the implementable reverse proposal is
defined, for $k=0,\ldots,N-1$, by the discrete Markov Chain
$$
\Xbar_{t_{k+1}}^\theta = \Xbar_{t_k}^\theta + \Delta_k b_k^\theta(\Xbar_{t_k}^\theta) + \sqrt{2\Delta_k}\xi_k \eqsp,
\qquad b_k^\theta(\x)\eqdef \isvp \x+2\scorenet[T-t_k][\x][\theta] \eqsp, \qquad \xi_k \stackrel{i.i.d}{\sim} \gaussiand{0}{\Id_\xdim} \eqsp.
$$

\paragraph{Lyapunov drift condition and minorization of the SGM kernel.} The following conditions are imposed directly on the learned score. This is important because the particle system is propagated with the implementable proposal kernels \((\mk{k}[\theta])_{1\le k\le N}\), not with the ideal reverse diffusion kernels.
\begin{assumption}
\label{ass:dissipative_score_main}
There exist constants $\gamma_k>\isvp/2$, $\kappa_k\ge0$, and $L_k\ge0$ such
that, for every $k=0,\ldots,N-1$ and every $\x\in\rset^\xdim$,
$$
\dotprod{\x}{\scorenet[T-t_k][\x][\theta]} \le -\gamma_k\normEc{\x}^2+\kappa_k \eqsp,
\qquad \normEc{\scorenet[T-t_k][\x][\theta]}^2 \le L_k(1+\normEc{\x}^2) \eqsp,
$$
\end{assumption}
 \Cref{ass:dissipative_score_main} is the proposal-level counterpart of the conditions used in \citet{strasman2026forgetting} to obtain forgetting for the SGM reverse flow. In that work, dissipativity and growth assumptions on the score are shown to be natural sufficient conditions for establishing Harris-type stability. They are therefore natural in the present conditional SMC setting as well. The only difference is
that we impose them directly on the learned score $\scorenet[\cdot][\cdot][\theta]$, since the particle system is propagated with the implementable Euler proposal $\mk{k}[\theta]$, rather than with the ideal reverse kernel $\mk{k}$. 

The first inequality is a dissipativity condition: it requires the learned score to point sufficiently inward in the tails. The second inequality is a linear-growth condition, ensuring that the proposal drift has enough Lyapunov moments. This condition is mild in practice and is implied, for instance, by a globally Lipschitz networks.

Under \Cref{ass:dissipative_score_main} and the step-size condition stated in
\Cref{prop:drift_minorization_proposal_theta}, the learned Euler kernels satisfy
a Lyapunov drift condition and a local minorization on every sublevel set
$\{V\le R\}$. Hence \Cref{ass:proposal_harris} holds for the approximate
proposal kernels $(\mk{k}[\theta])_{1\le k\le N}$.

\begin{remark}[Unconditional case]
If $\pot{k} \eqdef \un$, then $\normbackwardfun{k}{N}[\theta] \eqdef \un$ and
$\fsmk{k}{N}[\theta]=\mk{k}[\theta]$. Hence the forward-smoothing stability
reduces to the usual forgetting of the learned reverse Markov chain. This shows
that \Cref{ass:dissipative_score_main} is a direct proposal-level route to the
unconditional stability studied in \citet{strasman2026forgetting}.
\end{remark}
\paragraph{Conditioning potentials and particle moment assumptions.}
In diffusion-based conditional samplers, the potentials are usually constructed
from the observation likelihood. A common choice is to introduce likelihood
functions $p(y\mid \Xora_{T-t_k}=\x)$, or tractable approximations thereof. In the
linear-Gaussian inverse problem setting, this leads to Gaussian likelihood
factors of the form
$$
\pot{k}(\x) = \exp\left( -\frac12\|H_k\x-y\|_{R_k^{-1}}^2 \right) \eqsp,
$$
up to a multiplicative normalizing constant, for some problem dependent $R_k$ and $H_k$; see, for instance, \citet{cardoso2024monte}. Such potentials are strictly positive and bounded above, and therefore satisfy \Cref{ass:bounded_normalized_potentials}. Similarly, the continuation functions $\normbackwardfun{k}{N}[\theta]$ are strictly positive as well. Thus, at any fixed horizon $N$, the minorization masses $\nu_{k,R}^Q[\normbackwardfun{k}{N}[\theta]]$ are positive, and \Cref{ass:normalized_continuation_lower_bound} holds. A similar lower-bound condition appears in the Monte Carlo analysis. For example, in $\mcfilter{\ell}[\normbackwardfun{\ell}{N}[\theta]]$, for which \Cref{ass:mc_normalization_moments} controls their inverse moments. Gaussian likelihood potentials are not uniformly bounded away from zero on $\rset^\xdim$. Therefore the empirical normalizers are not deterministically bounded from below, and the inverse-moment condition should be kept as a separate assumption. Finally, \Cref{ass:local_lp_bound_moment}, the moment condition for the particle approximation, requires that the particles retain enough Lyapunov moments to apply the local
$L_q$ empirical error bound. For $V(x)=\|x\|^2$, a standard sufficient route is a $q$-th order moment bound for the Euler proposal $\mk{k}[\theta]$, which can be derived from \Cref{ass:dissipative_score_main} by techniques similar to Proposition 3.2 in \citet{strasman2026forgetting}, together with  suitable control of the selection normalizers.
\paragraph{Initialization and local kernel bias.}
It remains to identify the two terms appearing in \Cref{ass:local_bias_error}. In the diffusion setting, the ideal initial law is the terminal forward law $p_T$ of \eqref{eq:forward_sde_main}, whereas the implementable sampler is usually initialized from the Gaussian reference $\pi_\infty$. The initialization term
is therefore $\mathcal B_0^{\rm bias} = \vnorm[p_T-\pi_\infty][V]$. 
This is precisely the initialization discrepancy controlled in \citet[Section~H.1]{strasman2026forgetting} with $b=1$. This gives an estimate for $\mathcal B_0^{\rm bias}$. For the local kernel bias, under the corresponding regularity assumptions on the true score \citet[Section~H.2]{strasman2026forgetting} gives a one-step comparison between the exact reverse kernel and the SGM kernel in weighted total variation. In our notation, this yields measurable functions $C_k^{\rm discr}$ and $C_k^{\rm net}$ such that
$$
\vnorm[ \delta_\x \mk{k}-\delta_\x \mk{k}[\theta] ][V] \le \Delta_{k-1} C_k^{\rm discr}(\x) + \sqrt{\Delta_{k-1}} C_k^{\rm net}(\x) \eqsp, \qquad 1\le k\le N \eqsp.
$$
Here $C_k^{\rm discr}$ collects the constants associated with the time-discretization error of the reverse drift, whereas $C_k^{\rm net}$ contains the score approximation terms, namely the discrepancy between the true score $\nabla \log p_{T-t_{k-1}}(\cdot)$ and the learned score $\scorenet[T-t_{k-1}][\cdot][\theta]$. Hence \Cref{ass:local_bias_error} holds with
$$
h_\Delta\eqdef \max_{0\le m\le N-1}\sqrt{\Delta_m} \eqsp,
\qquad \mathcal E_k^{\rm bias} = h_\Delta C_k^{\rm discr}(\x)+C_k^{\rm net}(\x) \eqsp,
$$
provided that $\tau_{\pot{k-1}}(\filter{k-1})[\mathcal E_k^{\rm bias}]<\infty$.

Combining the previous verifications, \Cref{thm:total_smc_kernel_bias_error} yields a non-asymptotic error bound for conditional SGM samplers based on SMC. The bound separates the finite-particle error from the bias due to initialization, time discretization, and score approximation. It applies to the common structure underlying practical post-hoc SMC conditioning methods such as \citet{wu2023practical, cardoso2024monte,
nazemi2024particlefilteringbasedlatentdiffusioninverse}.

\section{Numerical illustration}
\label{sec:numerical_illustration_main}

We illustrate the error decomposition of \Cref{thm:total_smc_kernel_bias_error} on a two-dimensional Gaussian-mixture benchmark in the diffusion setting described in Section \ref{sec:diffusion_application}. 
A higher-dimensional Gaussian-mixture benchmark, where the sampler evolves in dimension $50$ and the diagnostic is based on a projected modal region, is reported in \Cref{sec:num_50d_gmm_experiment}. The data distribution is a four-component anisotropic Gaussian mixture on
$\rset^2$, conditioned through the scalar observation model
$$
\Xobs = H\Xora_0+\varepsilon,
\qquad
H=(1,0) \eqsp,
\qquad
\varepsilon\sim\gaussiand{0}{0.16}.
$$
The benchmark is chosen so that the exact forward marginals, scores, reverse kernels, and conditional posterior are available in closed form; full details are given in \Cref{app:diffusion_exp_SMC}. This allows us to compare three mutation mechanisms: the exact reverse kernels $\{\mk{k+1}\}_{k=0}^{N-1}$, defined in \eqref{eq:num_exact_backward_kernel}; the oracle Euler kernels $\{\eulermk{k+1}\}_{k=0}^{N-1}$, driven by the exact score, defined in \eqref{eq:num_oracle_euler_proposal}; and the perturbed-score Euler kernels $\{\eulermk{k+1}[\theta]\}_{k=0}^{N-1}$, defined in \eqref{eq:num_learned_euler_proposal}. The first sampler gives the ideal Feynman--Kac reference, the second isolates the time-discretization bias, and the third adds a controlled score-approximation bias.

The test function is the indicator of a rectangular region $\mathcal R$ around
one posterior mode, $\psi_{\mathcal R}(\x)=\indi{\mathcal R}(\x)$. Thus $\filter{N}[\psi_{\mathcal R}]$ is the posterior probability of this region conditional on the observation $\Xobs=\xobs$.  \Cref{fig:num_2d_geometry_main} shows the original mixture, the conditional posterior, and the selected event $\{\psi_{\mathcal R}=1\}$.  The exact value $\filter{N}[\psi_{\mathcal R}]$ is computed from the closed-form posterior GMM via numerical methods.

\begin{figure}[t]
    \centering
    \includegraphics[width=.82\linewidth]{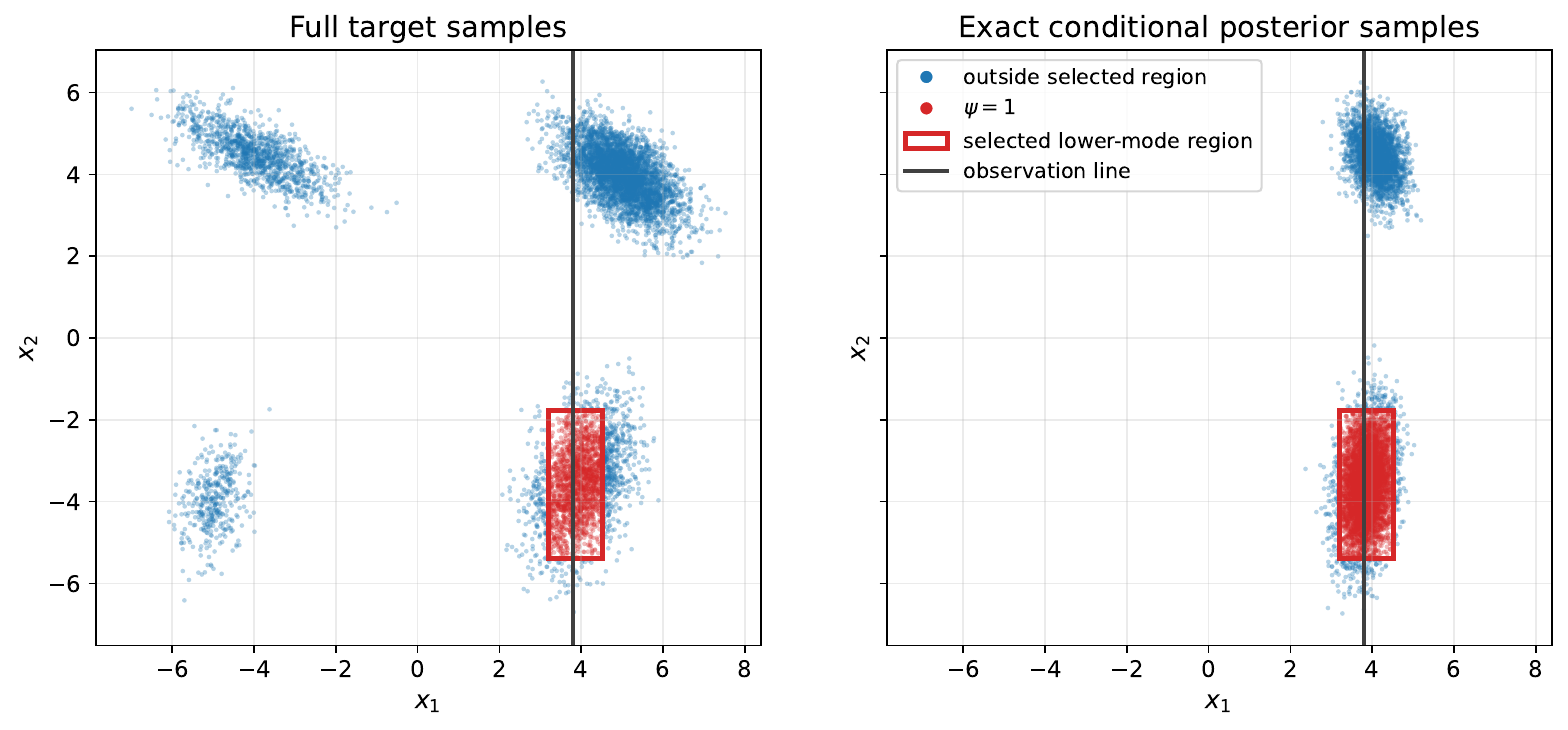}
    \caption{
    Two-dimensional Gaussian-mixture benchmark. The red rectangle defines the
    event $\{\psi_{\mathcal R}=1\}$ used to evaluate the samplers. Left: samples
    from the original Gaussian mixture. Right: samples from the exact
    conditional posterior. Blue dots correspond to points outside the selected region, while red dots correspond to points inside it.  The black vertical line indicates the observed value $\xobs=3.8$.
    }
    \label{fig:num_2d_geometry_main}
\end{figure}

\Cref{fig:num_2d_theorem_illustration} displays the two effects predicted by \Cref{thm:total_smc_kernel_bias_error}. In the left panel, for each method and particle count, we report the empirical $L_2$ error of between the method used and the oracle $\filter{N}[\psi_{\mathcal R}]$ over independent SMC repetitions. The SMC sampler based on the exact reverse kernels has no proposal bias, and its empirical error decreases with a rate compatible with the Monte Carlo scale $M^{-1/2}$. The oracle Euler kernels isolate the deterministic time-discretization bias, while the perturbed-score Euler kernels add a score-approximation bias. Accordingly, the corresponding curves initially decrease with $M$, but eventually reach a non-vanishing large-particle plateau when the deterministic bias dominates the finite-particle error.

In the right panel, we isolate the propagation of a single local kernel perturbation.  To make the effect visible, we use a larger particle count than in the particle-count experiment, so that the remaining Monte Carlo variability is small compared with the induced local-kernel bias.  For each
$\ell\in\{1,\ldots,N\}$, we consider a hybrid sampler in which all mutation
kernels are exact except the $\ell$-th one:
$$
Q_m^{(\ell)} \eqdef
\begin{cases}
\eulermk{\ell}[\theta], & m=\ell \eqsp, \\
\mk{m}, & m\neq \ell \eqsp.
\end{cases}
$$
Thus the local perturbation replaces the exact reverse kernel $\mk{\ell}$ by an Euler kernel driven by a perturbed score.  This single replacement contains both a time-discretization error and a local score-approximation error.  We denote by $\filter{N}^{(\ell)}$ the corresponding terminal distribution and estimate
$$
\left|
\filter{N}^{(\ell)}[\psi_{\mathcal R}]
-
\filter{N}[\psi_{\mathcal R}]
\right|
$$
as a function of the number of remaining steps $N-\ell$.  Perturbations introduced early in the sampling are attenuated strongly, consistently with the forward-smoothing forgetting factors $\rho_{\mathsf G}^{N-\ell}$ appearing in \Cref{thm:total_smc_kernel_bias_error}.

\begin{figure}[t]
    \centering
    \begin{minipage}{.49\linewidth}
        \centering
        \includegraphics[width=\linewidth]{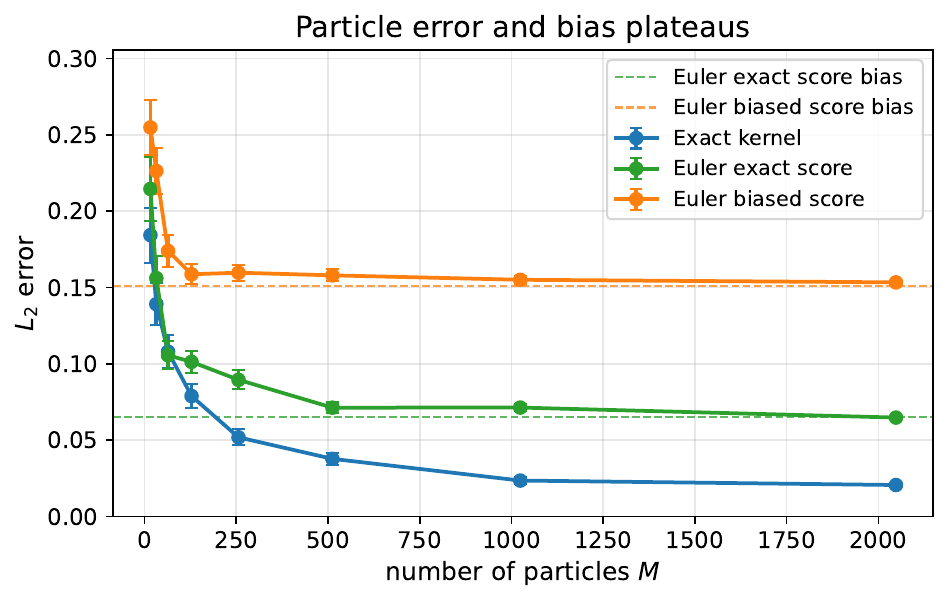}
    \end{minipage}
    \hfill
    \begin{minipage}{.49\linewidth}
        \centering
        \includegraphics[width=\linewidth]{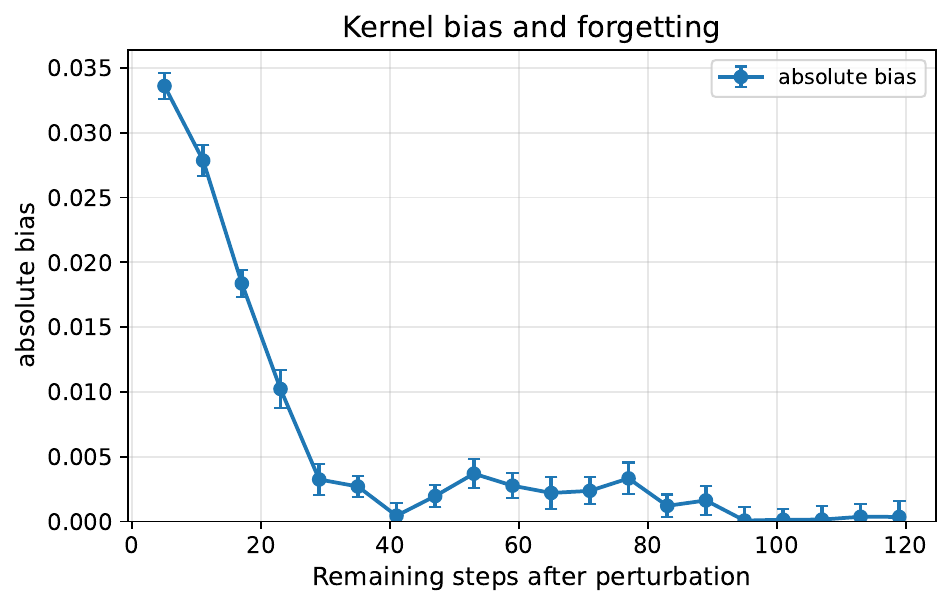}
    \end{minipage}
    \caption{
    Illustration of the two terms in
    \Cref{thm:total_smc_kernel_bias_error}. Left: empirical $L_2$ error as a
    function of the number of particles $M$, showing Monte Carlo decay and
    deterministic bias plateaus for approximate mutation kernels. Right:
    propagated effect of a single local kernel perturbation as a function of the
    number of remaining bridge steps, illustrating forward-smoothing forgetting. On the right panel, the left side corresponds to perturbations close to the data distribution $p_0$, while the right side corresponds to perturbations closer to the initial noised distribution $p_T$. Error bars show empirical standard errors across independent repetitions.
    }
    \label{fig:num_2d_theorem_illustration}
\end{figure}

\section{Discussion}

This work provides a first perturbation analysis of Feynman--Kac particle systems with biased mutation kernels and specializes it to conditional sampling with score-based diffusion models.  Although our bounds are explicit and non-asymptotic, the constants are not optimized. Deriving sharper constants would be important both theoretically and practically. In particular, more precise estimates of the forgetting constants would make it possible to adapt the training effort and the number of Monte Carlo particles to the stability properties of the underlying conditional sampler.

\bibliographystyle{apalike}
\bibliography{ref}

\begin{thebibliography}{}

\bibitem[Cardoso et~al., 2023]{cardoso2023state}
Cardoso, G., El~Idrissi, Y.~J., Le~Corff, S., Moulines, E., and Olsson, J.
  (2023).
\newblock State and parameter learning with paris particle gibbs.
\newblock In {\em International Conference on Machine Learning}, pages
  3625--3675. PMLR.

\bibitem[Cardoso et~al., 2024]{cardoso2024monte}
Cardoso, G., Janati El~Idrissi, Y., Le~Corff, S., and Moulines, E. (2024).
\newblock Monte carlo guided denoising diffusion models for bayesian linear
  inverse problems.
\newblock In {\em International Conference on Learning Representations}.

\bibitem[Chopin and Papaspiliopoulos, 2020]{chopin2020introduction}
Chopin, N. and Papaspiliopoulos, O. (2020).
\newblock {\em An Introduction to Sequential Monte Carlo}.
\newblock Springer Series in Statistics. Springer.

\bibitem[Del~Moral, 2004]{delmoral2004feynman}
Del~Moral, P. (2004).
\newblock {\em Feynman-Kac Formulae: Genealogical and Interacting Particle
  Systems with Applications}.
\newblock Probability and Its Applications. Springer.

\bibitem[Del~Moral et~al., 2010]{delmoral2010backward}
Del~Moral, P., Doucet, A., and Singh, S.~S. (2010).
\newblock A backward particle interpretation of feynman-kac formulae.
\newblock {\em ESAIM: Mathematical Modelling and Numerical Analysis},
  44(5):947--975.

\bibitem[Douc et~al., 2018]{douc2018markov}
Douc, R., Moulines, E., Priouret, P., and Soulier, P. (2018).
\newblock {\em Markov Chains}.
\newblock Springer {{Series}} in {{Operations Research}} and {{Financial
  Engineering}}. Springer, Cham.

\bibitem[Douc et~al., 2004]{douc2004quantitative}
Douc, R., Moulines, E., and Rosenthal, J.~S. (2004).
\newblock Quantitative bounds on convergence of time-inhomogeneous markov
  chains.
\newblock {\em Annals of Applied Probability}, pages 1643--1665.

\bibitem[Douc et~al., 2014]{douc2014nonlinear}
Douc, R., Moulines, E., and Stoffer, D.~S. (2014).
\newblock {\em Nonlinear Time Series: Theory, Methods and Applications with R
  Examples}.
\newblock Chapman \& Hall/CRC Texts in Statistical Science. CRC Press.

\bibitem[Dubarry and Le~Corff, 2013]{dubarry2013non}
Dubarry, C. and Le~Corff, S. (2013).
\newblock Non-asymptotic deviation inequalities for smoothed additive
  functionals in nonlinear state-space models.
\newblock {\em Bernoulli}, 19(5B):2222 -- 2249.

\bibitem[Gloaguen et~al., 2022]{gloaguen2022pseudo}
Gloaguen, P., Le~Corff, S., and Olsson, J. (2022).
\newblock A pseudo-marginal sequential monte carlo online smoothing algorithm.
\newblock {\em Bernoulli}, 28(4):2606--2633.

\bibitem[Hairer and Mattingly, 2008]{HairerMattingly2008}
Hairer, M. and Mattingly, J.~C. (2008).
\newblock Yet another look at harris' ergodic theorem for markov chains.
\newblock {\em arXiv preprint arXiv:0810.2777}.

\bibitem[Haussmann and Pardoux, 1986]{haussmann1986time}
Haussmann, U.~G. and Pardoux, E. (1986).
\newblock Time reversal of diffusions.
\newblock {\em The Annals of Probability}, 14(4):1188--1205.

\bibitem[Ho et~al., 2020]{ho2020denoising}
Ho, J., Jain, A., and Abbeel, P. (2020).
\newblock Denoising diffusion probabilistic models.
\newblock In {\em Advances in Neural Information Processing Systems (NeurIPS)},
  pages 6840--6851.

\bibitem[Ho and Salimans, 2021]{ho2021classifierfree}
Ho, J. and Salimans, T. (2021).
\newblock Classifier-free diffusion guidance.
\newblock In {\em NeurIPS 2021 Workshop on Deep Generative Models and
  Downstream Applications}.

\bibitem[Meyn and Tweedie, 2009]{meyn2009markov}
Meyn, S.~P. and Tweedie, R.~L. (2009).
\newblock {\em Markov Chains and Stochastic Stability}.
\newblock Cambridge University Press, Cambridge, 2 edition.

\bibitem[Nazemi et~al.,
  2024]{nazemi2024particlefilteringbasedlatentdiffusioninverse}
Nazemi, A., Sepanj, M.~H., Pellegrino, N., Czarnecki, C., and Fieguth, P.
  (2024).
\newblock Particle-filtering-based latent diffusion for inverse problems.

\bibitem[Rombach et~al., 2022]{rombach2022highresolution}
Rombach, R., Blattmann, A., Lorenz, D., Esser, P., and Ommer, B. (2022).
\newblock High-resolution image synthesis with latent diffusion models.
\newblock In {\em Proceedings of the IEEE/CVF Conference on Computer Vision and
  Pattern Recognition (CVPR)}, pages 10684--10695.

\bibitem[Rudin, 1987]{rudin1987real}
Rudin, W. (1987).
\newblock {\em Real and {{Complex Analysis}}}.
\newblock McGraw-Hill, Singapore.

\bibitem[Saharia et~al., 2022]{saharia2022palette}
Saharia, C., Chan, W., Chang, H., Lee, C.~A., Ho, J., Salimans, T., Fleet,
  D.~J., and Norouzi, M. (2022).
\newblock Palette: Image-to-image diffusion models.
\newblock In {\em ACM SIGGRAPH 2022 Conference Proceedings}, New York, NY, USA.
  Association for Computing Machinery.

\bibitem[Sohl-Dickstein et~al., 2015]{dickstein2015}
Sohl-Dickstein, J., Weiss, E., Maheswaranathan, N., and Ganguli, S. (2015).
\newblock Deep unsupervised learning using nonequilibrium thermodynamics.
\newblock In Bach, F. and Blei, D., editors, {\em Proceedings of the 32nd
  International Conference on Machine Learning}, volume~37 of {\em Proceedings
  of Machine Learning Research}, pages 2256--2265, Lille, France. PMLR.

\bibitem[Song and Ermon, 2019]{song2019generative}
Song, Y. and Ermon, S. (2019).
\newblock Generative modeling by estimating gradients of the data distribution.
\newblock In {\em Advances in Neural Information Processing Systems (NeurIPS)},
  pages 11918--11930.

\bibitem[Song et~al., 2021]{song2021score}
Song, Y.~W., Sohl-Dickstein, J., Kingma, D.~P., Kumar, A., Ermon, S., and
  Poole, B. (2021).
\newblock Score-based generative modeling through stochastic differential
  equations.
\newblock In {\em International Conference on Learning Representations}.

\bibitem[Strasman et~al., 2026]{strasman2026forgetting}
Strasman, S., Cardoso, G., Le~Corff, S., Lemaire, V., and Ocello, A. (2026).
\newblock On forgetting and stability of score-based generative models.
\newblock {\em arXiv preprint arXiv:2601.21868}.

\bibitem[Vincent, 2011]{Vincent}
Vincent, P. (2011).
\newblock A connection between score matching and denoising autoencoders.
\newblock {\em Neural Computation}, 23(7):1661--1674.

\bibitem[Whiteley, 2012]{whiteley2012sequential}
Whiteley, N. (2012).
\newblock Sequential monte carlo samplers: error bounds and insensitivity to
  initial conditions.
\newblock {\em Stochastic Analysis and Applications}, 30(5):774--798.

\bibitem[Wu et~al., 2023]{wu2023practical}
Wu, L., Trippe, B.~L., Naesseth, C.~A., Blei, D.~M., and Cunningham, J.~P.
  (2023).
\newblock Practical and asymptotically exact conditional sampling in diffusion
  models.
\newblock In {\em Advances in Neural Information Processing Systems}.

\end{thebibliography}

\clearpage

\appendix

\begin{table}[ht]
    \centering
    \normalsize
    \renewcommand{\arraystretch}{1.1}
    \setlength{\tabcolsep}{4pt}

    \begin{tabularx}{\textwidth}{@{}l>{\raggedright\arraybackslash}X@{}}
    \toprule
    Notation & Meaning \\
    \midrule
    $\{t_k\}_{k=0}^N$ & Time grid \\
    $\Delta_k$ & Time step or integrated noise step between $t_k$ and $t_{k+1}$ \\
    $\mk{k}$ & Ideal mutation kernel from time $t_{k-1}$ to time $t_k$ \\
    $\mk{k}[\theta]$ & Approximate mutation kernel from time $t_{k-1}$ to time $t_k$ \\
    $\pot{k}$ & Potential function at time $t_k$ \\
    $\potnorm{k}$ & Normalized potential, $\potnorm{k}=\pot{k}/\|\pot{k}\|_\infty$ \\
    $\tau_g$ & Tilting map by $g$, $\tau_g(\nu)[\psi]=\nu[g\psi]/\nu[g]$ \\
    $\unpredictive{k}$ & Ideal unnormalized selection--mutation kernel, $\unpredictive{k}=\pot{k-1}\mk{k}$ \\
    $\unpredictive{k}[\theta]$ & Approximate unnormalized selection--mutation kernel, $\unpredictive{k}[\theta]=\pot{k-1}\mk{k}[\theta]$ \\
    $\filtertransform{k}$ & Ideal normalized Feynman--Kac transform, $\filtertransform{k}(\nu)=\nu\unpredictive{k}/(\nu\unpredictive{k}\un)$ \\
    $\filtertransform{k}[\theta]$ & Approximate normalized Feynman--Kac transform, $\filtertransform{k}[\theta](\nu)=\nu\unpredictive{k}[\theta]/(\nu\unpredictive{k}[\theta]\un)$ \\
    $\filter{k}$ & Ideal Feynman--Kac flow at time $t_k$ \\
    $\approxfilter{k}$ & Approximate Feynman--Kac flow at time $t_k$ \\
    $\mcfilter{k}$ & Empirical particle approximation of $\approxfilter{k}$ \\
    $\tildemcfilter{k}$ & Selected empirical measure after reweighting/resampling at time $t_k$ \\
    $\normbackwardfun{k}{\ell}[\theta]$ & Normalized continuation function from time $k$ to terminal time $\ell$ \\
    $\fsmk{k}{\ell}[\theta]$ & Forward-smoothing kernel from time $k-1$ to time $k$, associated with terminal time $\ell$ \\
    $\prodfsmk{k}{\ell}[\theta]$ & Product of forward-smoothing kernels from time $k$ to time $\ell$ \\
    $\twistedlyap{k}{N}[\theta]$ & Twisted Lyapunov function, $V/\normbackwardfun{k}{N}[\theta]$ \\
    $\Gamma_{\ell,N}$ & Forward-smoothing stability coefficient from time $\ell$ to terminal time $N$ \\
    $\Lambda_{\ell,N}$ & Normalization factor appearing in the deterministic kernel-bias bound \\
    $\mathcal E_k^{\rm bias}$ & Local one-step kernel-bias function at time $k$ \\
    $\mathcal B_k^{\rm bias}$ & Integrated local bias under the selected exact filter \\
    $\overline{\mathcal B}_k^{\rm bias}$ & Effective local bias term used in the final total error bound \\
    $\eta_\ell^{M,\theta}$ & One-step predictive law before sampling at time $\ell$ in the particle system \\
    $\mathcal A_{\ell,N}^{(p)}$ & Monte Carlo normalization moment coefficient \\
    $C_\ell^{\rm MC}$ & Local Monte Carlo fluctuation constant \\
    $w_k^{(i)}$ & Normalized particle weight at time $t_k$ \\
    $\X_k^{\theta,(i)}$ & Particle $i$ at time $t_k$ \\
    $\widetilde \X_k^{\theta,(i)}$ & Selected particle $i$ after resampling at time $t_k$ \\
    $\mathcal F_k^\theta$ & Particle filtration up to time $t_k$ \\
    $\widetilde{\mathcal F}_k^\theta$ & Filtration after selection/resampling at time $t_k$ \\
    \bottomrule
    \end{tabularx}

    \caption{Summary of the main notation.}
    \label{tab:notation_summary}
\end{table}

\newpage


\section{Forward smoothing kernel: properties and forgetting.}
\label{app:forward_smothing_forgetting}

\begin{proposition}[Approximate tail factorization]
\label{prop:approx_tail_factorization}
Let $0 \le k \leq m \le N$. For every probability measure $\nu \in \mathcal{P} (\rset^\xdim)$ such that $0 < \nu(\normbackwardfun{k}{m}[\theta]) < \infty$, with $\normbackwardfun{k}{m}[\theta]$ defined as in \eqref{eq:approx_backward_functions_main_norm}, we have
\begin{align}
\label{eq:approx_normalized_tail_factorization}
    \left( \filtertransform{m}*[\theta] \circ \cdots \circ \filtertransform{k+1}*[\theta] \right) \left( \nu \right)
    &= \backwardnu{k}{m}[\theta]
    \fsmk{k+1}{m}[\theta]\cdots
    \fsmk{m}{m}[\theta]
    \eqsp,
    \qquad
    \text{ with }
    \backwardnu{k}{m}[\theta]
    \eqdef \frac{ \normbackwardfun{k}{m}[\theta] \nu}{\nu [ \normbackwardfun{k}{m}[\theta] ]} \eqsp.
\end{align}
\end{proposition}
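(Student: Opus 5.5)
The plan is to reduce the composition of normalized maps to a single normalized unnormalized semigroup, and then telescope that semigroup using the recursion satisfied by the continuation functions $\normbackwardfun{k}{m}[\theta]$. I read $\filtertransform{j}*[\theta]$ as the normalized Feynman--Kac map built from $\potnorm{j-1}\mk{j}[\theta]$. Since $\potnorm{j-1}=\pot{j-1}/\norminfty{\pot{j-1}}$, this map coincides with $\filtertransform{j}[\theta]$: the multiplicative constants cancel between numerator and denominator. Throughout, write $\tilde{\kernel K}_j \eqdef \potnorm{j-1}\mk{j}[\theta]$. The case $k=m$ is trivial, since $\normbackwardfun{m}{m}[\theta]=\un$ gives $\backwardnu{m}{m}[\theta]=\nu$ and the product of forward-smoothing kernels is the identity. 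So assume $k<m$.

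\textbf{Step 1 (collapse the normalizations).} Show by induction on the number of maps that
\[
\left(\filtertransform{m}*[\theta]\circ\cdots\circ\filtertransform{k+1}*[\theta]\right)(\nu)=\frac{\nu\tilde{\kernel K}_{k+1}\cdots\tilde{\kernel K}_m}{\nu\tilde{\kernel K}_{k+1}\cdots\tilde{\kernel K}_m\un}.
\]
This follows because, for a positive finite measure $\mu$ and $c>0$, $\filtertransform{j}*[\theta](c\mu)=\filtertransform{j}*[\theta](\mu)$. Applying the next map to a normalized measure therefore equals applying it to the unnormalized one, and the intermediate normalizing constants drop out.

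\textbf{Step 2 (continuation recursion).} Check that $\nu\tilde{\kernel K}_{k+1}\cdots\tilde{\kernel K}_m\un=\nu[\normbackwardfun{k}{m}[\theta]]$, and more precisely that
\[
\normbackwardfun{j}{m}[\theta]=\potnorm{j}\,\mk{j+1}[\theta]\normbackwardfun{j+1}{m}[\theta], \qquad k\le j<m .
\]
This is the Markov property applied to the definition \eqref{eq:approx_backward_functions_main_norm}: condition on $X^\theta_{j+1}$ and pull out the factor $\potnorm{j}(X^\theta_j)=\potnorm{j}(\x)$. Iterating the recursion gives $\normbackwardfun{k}{m}[\theta]=\tilde{\kernel K}_{k+1}\cdots\tilde{\kernel K}_m\un$. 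The hypothesis $0<\nu[\normbackwardfun{k}{m}[\theta]]<\infty$ then makes the normalization in Step 1 legitimate.

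\textbf{Step 3 (telescoping identity).} The core step is the one-step identity
\[
\normbackwardfun{j}{m}[\theta](\x)\,\fsmk{j+1}{m}[\theta]h(\x)=\potnorm{j}(\x)\,\mk{j+1}[\theta]\!\left(\normbackwardfun{j+1}{m}[\theta]h\right)(\x),
\]
valid for nonnegative measurable $h$. To derive it, plug in the definition \eqref{eq:approx_smoothing_kernel_main}, which divides by $\mk{j+1}[\theta]\normbackwardfun{j+1}{m}[\theta](\x)$. Then use Step 2 to replace $\normbackwardfun{j}{m}[\theta](\x)$ by $\potnorm{j}(\x)\mk{j+1}[\theta]\normbackwardfun{j+1}{m}[\theta](\x)$; the denominators cancel.

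Apply this identity successively for $j=k,k+1,\dots,m-1$, with $h=\fsmk{j+2}{m}[\theta]\cdots\fsmk{m}{m}[\theta]\psi$ at each stage. This yields
\[
\nu\!\left[\normbackwardfun{k}{m}[\theta]\,\fsmk{k+1}{m}[\theta]\cdots\fsmk{m}{m}[\theta]\psi\right]=\nu\tilde{\kernel K}_{k+1}\cdots\tilde{\kernel K}_m\psi .
\]
Dividing by $\nu[\normbackwardfun{k}{m}[\theta]]$ and combining with Steps 1 and 2 gives \eqref{eq:approx_normalized_tail_factorization}. It suffices to work with nonnegative (or bounded) $\psi$, since both sides are measures.

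\textbf{Main obstacle.} The algebra is routine, so the real work is well-definedness. One must ensure that the forward-smoothing kernels never divide by zero or infinity, i.e. that $0<\mk{j+1}[\theta]\normbackwardfun{j+1}{m}[\theta]\le1$ pointwise. This holds because $\normbackwardfun{j+1}{m}[\theta]$ is strictly positive and bounded by one under \Cref{ass:bounded_normalized_potentials}. One must also check that each intermediate normalized measure in Step 1 has a positive, finite normalizing constant. I would handle this by noting that these constants are exactly $\nu[\normbackwardfun{k}{j}[\theta]]\in(0,1]$: they are finite because $\normbackwardfun{k}{j}[\theta]\le1$ and $\nu$ is a probability measure, and positive by strict positivity of the potentials.
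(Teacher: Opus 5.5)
Your proposal is correct and follows essentially the same route as the paper, which proves by induction on $m-k$ the unnormalized identity $\unpredictive{k+1}[\theta]\cdots\unpredictive{m}[\theta]\psi = C_{k:m}\,\normbackwardfun{k}{m}[\theta]\,\fsmk{k+1}{m}[\theta]\cdots\fsmk{m}{m}[\theta]\psi$ with $C_{k:m}=\prod_{i=k}^{m-1}\norminfty{\pot{i}}$, and then normalizes using $\psi$ and $\un$. Its inductive step is exactly your one-step identity combined with the recursion $\normbackwardfun{k}{m}[\theta]=\potnorm{k}\,\mk{k+1}[\theta]\normbackwardfun{k+1}{m}[\theta]$. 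Working with $\potnorm{j-1}\mk{j}[\theta]$ simply absorbs the constant $C_{k:m}$. Your explicit justification of the collapse of intermediate normalizations and of well-definedness, including the trivial case $k=m$, fills in details the paper takes for granted.
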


\begin{proof}
For $0\le k<m\le N$, set
$$
C_{k:m} \eqdef \prod_{i=k}^{m-1}\norminfty{\pot{i}} \eqsp.
$$
Recall that
\begin{align*}
\left( \filtertransform{m}*[\theta] \circ \cdots \circ \filtertransform{k+1}*[\theta] \right) \left( \nu \right)
= \frac{ \nu \unpredictive{k+1}[\theta] \cdots \unpredictive{m}[\theta] }{ \nu \unpredictive{ k+1}[\theta] \cdots \unpredictive{m}[\theta] \un  } \eqsp,
\end{align*}
with the unnormalized kernels $\unpredictive{\ell}[\theta]$, for $k+1\leq\ell\leq m$, as in \eqref{eq:fk_kernel_approx_intro}. We first show by induction on $m - k$ that, for every bounded measurable test function $\psi$,
\begin{align} \label{eq:unnormalized_factorization_identity}
\unpredictive{k+1}[\theta] \cdots \unpredictive{m}[\theta] \psi
=
C_{k:m} \normbackwardfun{k}{m}[\theta] \fsmk{k+1}{m}[\theta]\cdots \fsmk{m}{m}[\theta] \psi \eqsp.
\end{align}
First note that when $ m = k +1$,
\begin{align*}
\normbackwardfun{k}{k+1}[\theta](\x_k) = \potnorm{k}(\x_k) \eqsp, \qquad \fsmk{k+1}{k+1}[\theta] = \mk{k+1}[\theta] \eqsp,
\end{align*}
and therefore, 
\begin{align*}
\unpredictive{k+1}[\theta] \psi(\x_k)
= \pot{k}(\x_k) \mk{k+1}[\theta] \psi(\x_k)
= \norminfty{\pot{k}} \normbackwardfun{k}{k+1}[\theta](\x_k) \fsmk{k+1}{k+1}[\theta]\psi (\x_k) \eqsp.
\end{align*}
Then, assume that \eqref{eq:unnormalized_factorization_identity} holds for the pair $(k+1,m)$, with $k+1<m$, namely
$$
\unpredictive{k+2}[\theta]\cdots \unpredictive{m}[\theta]\psi
=
C_{k+1:m}\normbackwardfun{k+1}{m}[\theta] \fsmk{k+2}{m}[\theta] \cdots \fsmk{m}{m}[\theta]\psi \eqsp.
$$
Then, for any bounded measurable $\psi$,
\begin{align*} 
\unpredictive{k+1}[\theta]\unpredictive{k+2}[\theta] \cdots \unpredictive{m}[\theta]\psi(\x_k)  
&\quad = C_{k+1:m}\, \pot{k}(\x_k) \mk{k+1}[\theta] \left( \normbackwardfun{k+1}{m}[\theta] \fsmk{k+2}{m}[\theta]\cdots \fsmk{m}{m}[\theta]\psi \right)(\x_k) \\ 
&\quad = C_{k:m}\, \potnorm{k}(\x_k) \mk{k+1}[\theta] \left( \normbackwardfun{k+1}{m}[\theta] \fsmk{k+2}{m}[\theta]\cdots \fsmk{m}{m}[\theta]\psi \right)(\x_k) \eqsp.
\end{align*}
By definition, the normalized continuation functions satisfy the recursion 
$$\normbackwardfun{k}{m}[\theta](\x_k) = \potnorm{k} (\x_k) \mk{k+1}[\theta] \normbackwardfun{k+1}{m}[\theta](\x_k) \eqsp, $$
so we obtain
\begin{align*}
\unpredictive{k+1}[\theta]\cdots \unpredictive{m}[\theta]\psi(\x_k)
&= C_{k:m} \normbackwardfun{k}{m}[\theta](\x_k)  \frac{ \mk{k+1}[\theta] \left( \normbackwardfun{k+1}{m}[\theta] \fsmk{k+2}{m}[\theta]\cdots \fsmk{m}{m}[\theta]\psi \right)(\x_k) }{ \mk{k+1}[\theta]\normbackwardfun{k+1}{m}[\theta](\x_k) } \eqsp.
\end{align*}
By definition of the forward smoothing kernel \eqref{eq:approx_smoothing_kernel_main}, 
\begin{align*}
&\frac{ \mk{k+1}[\theta] \left( \normbackwardfun{k+1}{m}[\theta] \fsmk{k+2}{m}[\theta] \cdots \fsmk{m}{m}[\theta]\psi  \right)(\x_k) }{ \mk{k+1}[\theta]\normbackwardfun{k+1}{m}[\theta](\x_k) } \\
&= \frac{ \int \normbackwardfun{k+1}{m}[\theta](\x_{k+1}) \left( \fsmk{k+2}{m}[\theta] \cdots \fsmk{m}{m}[\theta]\psi \right)(\x_{k+1}) \mk{k+1}[\theta](\x_k,\rmd \x_{k+1}) }{ \mk{k+1}[\theta]\normbackwardfun{k+1}{m}[\theta](\x_k) } \\
&= \int \left(  \fsmk{k+2}{m}[\theta] \cdots \fsmk{m}{m}[\theta]\psi \right)(\x_{k+1}) \fsmk{k+1}{m}[\theta](\x_k,\rmd \x_{k+1}) \\
&= \fsmk{k+1}{m}[\theta]\fsmk{k+2}{m}[\theta] \cdots \fsmk{m}{m}[\theta]\psi(\x_k) \eqsp.
\end{align*}
Therefore,
\begin{align*}
\unpredictive{k+1}[\theta] \cdots \unpredictive{m}[\theta] \psi(\x_k)
=
C_{k:m} \normbackwardfun{k}{m}[\theta] (\x_k) \fsmk{k+1}{m}[\theta]\cdots \fsmk{m}{m}[\theta]\psi(\x_k) \eqsp,
\end{align*}
which proves \eqref{eq:unnormalized_factorization_identity}. Finally, applying \eqref{eq:unnormalized_factorization_identity} with $\psi$ and with $\un$, and using that each $\fsmk{\ell}{m}[\theta]$ is a Markov kernel, for $k+1\le \ell\le m$, so that
\begin{align*}
\fsmk{k+1}{m}[\theta]\cdots \fsmk{m}{m}[\theta] \un =\un \eqsp,
\end{align*}
we get
\begin{align*}
\left( \filtertransform{m}*[\theta] \circ \cdots \circ \filtertransform{k+1}*[\theta] \right)(\nu)[\psi]
& = \frac{ \nu \unpredictive{k+1}[\theta]\cdots\unpredictive{m}[\theta]\psi  }{ \nu \unpredictive{k+1}[\theta]\cdots\unpredictive{m}[\theta]\un  } \\
&= \frac{ C_{k:m} \nu\left[ \normbackwardfun{k}{m}[\theta] \fsmk{k+1}{m}[\theta]\cdots\fsmk{m}{m}[\theta]\psi \right] }{ C_{k:m} \nu\left[ \normbackwardfun{k}{m}[\theta] \right] } \\
&= \frac{ \nu\left[ \normbackwardfun{k}{m}[\theta]\, \fsmk{k+1}{m}[\theta]\cdots\fsmk{m}{m}[\theta]\psi \right] }{ \nu\left[ \normbackwardfun{k}{m}[\theta] \right] }\\
& = \left( \backwardnu{k}{m}[\theta] \fsmk{k+1}{m}[\theta]\cdots\fsmk{m}{m}[\theta] \right) \left[\psi \right] \eqsp.
\end{align*}
which holds for every bounded measurable $\psi$, and therefore proves \eqref{eq:approx_normalized_tail_factorization}.
\end{proof}

\begin{remark}
This factorization is purely algebraic and is not specific to the approximate Feynman--Kac model. It holds for any sequence of Markov kernels and their associated continuation functions and forward-smoothing kernels.
\end{remark}

We now give conditions under which the
forward smoothing kernels appearing in the factorization are themselves
forgetting kernels, starting with a drift transfer condition.


\begin{lemma}[Forgetting transfer conditions]
\label{lem:harris_forward_smoothing}
Suppose that Assumptions \ref{ass:proposal_harris}-\ref{ass:normalized_continuation_lower_bound}
hold and define
$$
V_{k|N}^{\theta}(\x) \eqdef \frac{V(\x)}{\widetilde\beta_{k|N}^{\theta}(\x)} \eqsp, \qquad 0\le k\le N \eqsp.
$$
Then, for every $R>0$, every $1\le k\le N$, every
$\x_{k-1}\in C_R$, and every $A\in\borelians{\rset^\xdim}$,
\begin{equation}
\label{eq:minorization_cond_G}
\fsmk{k}{N}[\theta][\x_{k-1}][A] \ge \varepsilon_k^Q(R) c_{k,R}^Q \nu_{k,R}^{\mathsf G}(A) \eqsp,
\end{equation}
where
$$
\nu_{k,R}^{\mathsf G}(A) \eqdef \frac{ \nu_{k,R}^Q \left[ \widetilde \beta_{k|N}^{\theta} \un_A \right]
}{
\nu_{k,R}^Q \left[ \widetilde \beta_{k|N}^{\theta} \right] } \eqsp.
$$
Here the dependence of $\nu_{k,R}^{\mathsf G}$ on $N$ and $\theta$ is dropped. Moreover, for every $1\le k\le N$, choose $R_k>0$ such that
$$
R_k > \frac{K_k^Q}{1 - \lambda_k^Q},
$$
and define
$$
\lambda_k^{\mathsf G} \eqdef \lambda_k^Q+\frac{K_k^Q}{R_k} \in(0,1) \eqsp,
\qquad K_k^{\mathsf G} \eqdef \frac{K_k^Q}{ \varepsilon_k^Q(R_k) c_{k,R_k}^Q} \eqsp.
$$
Then, for every $1\le k\le N$ and every $\x_{k-1} \in \rset^\xdim$,
\begin{equation}
\label{eq:drift_cond_G}
\fsmk{k}{N}[\theta] V_{k|N}^{\theta} (\x_{k-1}) \le \lambda_k^{\mathsf G} V_{k-1|N}^{\theta} (\x_{k-1}) +  K_k^{\mathsf G} \un_{C_{R_k}}(\x_{k-1}) \eqsp,
\end{equation}
where $C_{R_k} = \{ \x \in \rset^\xdim : V(\x) \leq R_k \}$.
\end{lemma}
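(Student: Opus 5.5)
The plan is to work directly from the definition \eqref{eq:approx_smoothing_kernel_main} of $\fsmk{k}{N}[\theta]$ as a $\normbackwardfun{k}{N}[\theta]$-tilt of $\mk{k}[\theta]$. Two elementary facts do all the work:
\begin{itemize}
\item the bound $0<\normbackwardfun{k}{N}[\theta]\le 1$, which follows from \Cref{ass:bounded_normalized_potentials};
\item the continuation recursion $\normbackwardfun{k-1}{N}[\theta]=\potnorm{k-1}\,\mk{k}[\theta]\normbackwardfun{k}{N}[\theta]$, used in the proof of \Cref{prop:approx_tail_factorization}.
\end{itemize}
Since $\potnorm{k-1}\le 1$, the recursion gives $\normbackwardfun{k-1}{N}[\theta]\le \mk{k}[\theta]\normbackwardfun{k}{N}[\theta]$ pointwise. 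This comparison is the key link between the time-$k$ and time-$(k-1)$ twisted Lyapunov functions.

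\emph{Minorization.} Fix $\x_{k-1}\in C_R$ and $A\in\borelians{\rset^\xdim}$. Write
$$\fsmk{k}{N}[\theta][\x_{k-1}][A]=\frac{\mk{k}[\theta](\normbackwardfun{k}{N}[\theta]\un_A)(\x_{k-1})}{\mk{k}[\theta]\normbackwardfun{k}{N}[\theta](\x_{k-1})}.$$
The denominator is at most $1$ because $\normbackwardfun{k}{N}[\theta]\le 1$. For the numerator, extend \eqref{eq:minorization_Q} from indicators to the nonnegative function $\normbackwardfun{k}{N}[\theta]\un_A$ by the usual monotone-class (simple-function) argument. This gives a lower bound of $\varepsilon_k^Q(R)\,\nu_{k,R}^Q[\normbackwardfun{k}{N}[\theta]\un_A]$. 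Finally, factor out $c_{k,R}^Q=\nu_{k,R}^Q[\normbackwardfun{k}{N}[\theta]]$, which is positive by \Cref{ass:normalized_continuation_lower_bound}; what remains is exactly $\nu_{k,R}^{\mathsf G}(A)$, a probability measure.

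\emph{Drift.} The tilt cancels the denominator of $V_{k|N}^\theta$:
$$\fsmk{k}{N}[\theta]V_{k|N}^\theta(\x_{k-1})=\frac{\mk{k}[\theta]V(\x_{k-1})}{\mk{k}[\theta]\normbackwardfun{k}{N}[\theta](\x_{k-1})}\le \frac{\lambda_k^Q V(\x_{k-1})+K_k^Q}{\mk{k}[\theta]\normbackwardfun{k}{N}[\theta](\x_{k-1})},$$
where the inequality is \eqref{eq:drift_Q}. For the first term, the comparison $\normbackwardfun{k-1}{N}[\theta]\le\mk{k}[\theta]\normbackwardfun{k}{N}[\theta]$ bounds it by $\lambda_k^Q V_{k-1|N}^\theta(\x_{k-1})$. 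For the constant term I split into two cases:
\begin{itemize}
\item If $\x_{k-1}\in C_{R_k}$, the minorization at level $R_k$ gives $\mk{k}[\theta]\normbackwardfun{k}{N}[\theta](\x_{k-1})\ge\varepsilon_k^Q(R_k)c^Q_{k,R_k}$. Hence the constant term is at most $K_k^{\mathsf G}$.
\item If $\x_{k-1}\notin C_{R_k}$, then $V(\x_{k-1})>R_k$, so $K_k^Q\le (K_k^Q/R_k)V(\x_{k-1})$. Applying the same comparison again bounds the constant term by $(K_k^Q/R_k)V_{k-1|N}^\theta(\x_{k-1})$.
\end{itemize}
In both cases the total is at most $\lambda_k^{\mathsf G}V_{k-1|N}^\theta+K_k^{\mathsf G}\un_{C_{R_k}}$, since $\lambda_k^Q\le\lambda_k^{\mathsf G}$. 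The choice $R_k>K_k^Q/(1-\lambda_k^Q)$ ensures $\lambda_k^{\mathsf G}<1$.

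The only step I regard as the actual obstacle is aligning the time indices, i.e.\ seeing that the pointwise comparison $\normbackwardfun{k-1}{N}[\theta]\le\mk{k}[\theta]\normbackwardfun{k}{N}[\theta]$ is what turns the denominator produced by the tilt into the previous-time twisted function $V_{k-1|N}^\theta$. Once that inequality is in hand, the rest is bookkeeping. One should also check that the quotients are finite: $\mk{k}[\theta]\normbackwardfun{k}{N}[\theta]>0$ because $\normbackwardfun{k}{N}[\theta]>0$, and $\mk{k}[\theta]V<\infty$ by \eqref{eq:drift_Q}.
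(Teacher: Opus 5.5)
Your proposal is correct and follows the paper's proof. For the minorization, you bound the tilt's denominator by $1$, apply the proposal minorization to the numerator and factor out $c_{k,R}^Q$. For the drift, you cancel the tilt, invoke the comparison $\widetilde\beta_{k-1|N}^{\theta}\le \mathsf{Q}_k^{\theta}\widetilde\beta_{k|N}^{\theta}$ from the continuation recursion, and split on $C_{R_k}$. Your explicit extension of the minorization from indicators to the nonnegative function $\widetilde\beta_{k|N}^{\theta}\un_A$ is a detail the paper uses only implicitly.
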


\begin{proof}
Since $0 < \potnorm{\ell} \le 1$ for $0\le \ell \le N-1$, we have
$$
0 < \widetilde \beta_{k|N}^{\theta} \le 1 \eqsp, \qquad 0 \le k\le N \eqsp.
$$
Therefore,
$$
\mk{k}[\theta] \widetilde \beta_{k|N}^{\theta}(\x_{k-1}) \le 1 \eqsp.
$$

Let $R>0$, $\x_{k-1}\in C_R$, and $A\in\borelians{\rset^\xdim}$. By the definition of $\fsmk{k}{N}[\theta]$ and the minorization condition,
\begin{align*}
\fsmk{k}{N}[\theta][\x_{k-1}][A]
&= \frac{ \mk{k}[\theta] \left( \widetilde \beta_{k|N}^{\theta} \un_A \right)(\x_{k-1}) }{ \mk{k}[\theta] \widetilde \beta_{k|N}^{\theta} (\x_{k-1}) } \\
& \ge \mk{k}[\theta] \left( \widetilde \beta_{k|N}^{\theta} \un_A \right) (\x_{k-1}) \\
&\ge \varepsilon_k^Q(R) \nu_{k,R}^Q \left[ \widetilde \beta_{k|N}^{\theta} \un_A \right] \eqsp.
\end{align*}
Moreover, using \eqref{eq:minoration_cQ}
$$
\nu_{k,R}^Q \left[ \widetilde \beta_{k|N}^{\theta}\un_A \right]
= \nu_{k,R}^Q \left[ \widetilde \beta_{k|N}^{\theta} \right] \nu_{k,R}^{\mathsf G}(A)
= c_{k,R}^Q \nu_{k,R}^{\mathsf G}(A) \eqsp.
$$
Thus
$$
\fsmk{k}{N}[\theta][\x_{k-1}][A] \ge \varepsilon^Q_k(R)c_{k,R}^Q \nu_{k,R}^{\mathsf G}(A) \eqsp,
$$
which proves \eqref{eq:minorization_cond_G}.

For the drift condition, note that
\begin{equation} \label{eq:recursion_beta_tilde}
\widetilde \beta_{k-1|N}^{\theta} = \potnorm{k-1} \mk{k}[\theta] \widetilde \beta_{k|N}^{\theta} \le \mk{k}[\theta] \widetilde \beta_{k|N}^{\theta} \eqsp.
\end{equation}
Hence,
$$
\frac{ V(\x_{k-1}) }{ \mk{k}[\theta] \widetilde \beta_{k|N}^{\theta} (\x_{k-1}) } \le \frac{ V(\x_{k-1}) }{ \widetilde \beta_{k-1|N}^{\theta} (\x_{k-1}) } = V_{k-1|N}^{\theta}(\x_{k-1}) \eqsp.
$$
Using the drift condition in \Cref{ass:proposal_harris}, we obtain
\begin{align*}
\fsmk{k}{N}[\theta]V_{k|N}^{\theta}(\x_{k-1})
&= \int \frac{ V(\x_k) }{ \widetilde \beta_{k|N}^{\theta}(\x_k) }
\frac{ \widetilde \beta_{k|N}^{\theta}(\x_k) \mk{k}[\theta][\x_{k-1}][\rmd \x_k] }{ \mk{k}[\theta] \widetilde \beta_{k|N}^{\theta}(\x_{k-1}) } \\
&= \frac{ \mk{k}[\theta]V(\x_{k-1}) }{ \mk{k}[\theta] \widetilde \beta_{k|N}^{\theta}(\x_{k-1}) } \\
&\le \lambda_k^Q V_{k-1|N}^{\theta} (\x_{k-1}) + \frac{ K_k^Q }{ \mk{k}[\theta] \widetilde \beta_{k|N}^{\theta}(\x_{k-1}) } \eqsp.
\end{align*}

It remains to control the offset term. Fix $R_k>K_k^Q/(1-\lambda_k^Q)$. If
$\x_{k-1}\notin C_{R_k}$, then $V(\x_{k-1})>R_k$, and
$$
\frac{ K_k^Q }{ \mk{k}[\theta] \widetilde \beta_{k|N}^{\theta} (\x_{k-1}) }
= \frac{ K_k^Q }{ V(\x_{k-1}) } \frac{ V(\x_{k-1}) }{ \mk{k}[\theta]\widetilde \beta_{k|N}^{\theta} (\x_{k-1}) } \le \frac{K_k^Q}{R_k} V_{k-1|N}^{\theta}(\x_{k-1}) \eqsp,
$$
where we used \eqref{eq:recursion_beta_tilde} in the last inequality. If $\x_{k-1}\in C_{R_k}$, then the minorization condition \eqref{eq:minorization_Q} gives
$$
\mk{k}[\theta] \widetilde \beta_{k|N}^{\theta} (\x_{k-1}) \ge \varepsilon^Q_k(R_k) \nu_{k,R_k}^Q \left[ \widetilde \beta_{k|N}^{\theta} \right] = \varepsilon^Q_k(R_k) c_{k,R_k}^Q \eqsp.
$$
Therefore,
$$
\frac{ K_k^Q }{ \mk{k}[\theta] \widetilde \beta_{k|N}^{\theta}(\x_{k-1}) } \le \frac{ K_k^Q }{ \varepsilon^Q_k(R_k) c_{k,R_k}^Q } \un_{C_{R_k}}(\x_{k-1}) + \frac{K_k^Q}{R_k} V_{k-1|N}^{\theta}(\x_{k-1}) \eqsp.
$$
Combining this with the previous drift bound yields
$$
\fsmk{k}{N}[\theta] V_{k|N}^{\theta} (\x_{k-1}) \le \left( \lambda_k^Q+\frac{K_k^Q}{R_k} \right) V_{k-1|N}^{\theta} (\x_{k-1})
+ \frac{ K_k^Q }{ \varepsilon^Q_k(R_k)c_{k,R_k}^Q } \un_{C_{R_k}} (\x_{k-1}) \eqsp.
$$
By definition,
$$
\lambda_k^{\mathsf G} = \lambda_k^Q+\frac{K_k^Q}{R_k} \eqsp, \qquad K_k^{\mathsf G} = \frac{ K_k^Q }{ \varepsilon^Q_k(R_k)c_{k,R_k}^Q } \eqsp.
$$
Since $R_k>K_k^Q/(1-\lambda_k^Q)$, we have $\lambda_k^{\mathsf G}\in(0,1)$. This proves \eqref{eq:drift_cond_G}.
\end{proof}

\begin{proposition}[Bivariate conditions for the forward-smoothing kernels]
\label{prop:dmr_conditions_forward_smoothing}
Suppose that Assumptions \ref{ass:proposal_harris}-\ref{ass:normalized_continuation_lower_bound}
hold and therefore that the conclusions of \Cref{lem:harris_forward_smoothing} hold.
For $0\le k\le N$, define
$$
F_{k|N}(\x) \eqdef 1+\left(V_{k|N}^{\theta}(\x)\right) \eqsp,
$$
and, for $(\x,\x')\in\rset^\xdim\times\rset^\xdim$,
$$
\overline F_{k|N}(\x,\x') \eqdef \frac{1}{2} \left[ F_{k|N}(\x) + F_{k|N}(\x') \right] \eqsp.
$$
For $ 1\le k \le N $, let $\lambda_k^{\mathsf G} \in (0,1)$, $K_k^{\mathsf G}<\infty$ be the constants of \cref{lem:harris_forward_smoothing}, choose
\begin{equation} \label{eq:cond_d_k}
d_k>\frac{1- \lambda_k^{\mathsf G} + K_k^{\mathsf G}}{1- \lambda_k^{\mathsf G}} \eqsp,
\end{equation}
and define
$$
\overline C_k \eqdef \left\{ (\x,\x'):\overline F_{k-1|N}(\x,\x')\le d_k \right\} \eqsp.
$$
Let
$$
L_k \eqdef 2d_k-1 \eqsp.
$$
Define
\begin{equation} 
\overline\varepsilon_k \eqdef \frac{\varepsilon^Q_k(L_k)c_{k,L_k}^Q}{2} \eqsp,
\end{equation}

\begin{equation} \label{eq=lambda_barre_k}
    \overline\lambda_k \eqdef \lambda_k^{\mathsf G} +\frac{1 - \lambda_k^{\mathsf G} + K_k^{\mathsf G} }{d_k} \in(0,1) \eqsp,
\end{equation}
and
\begin{equation} \label{eq=b_barre_k}
\overline b_k \eqdef \frac{ \lambda_k^{\mathsf G} d_k + 1 - \lambda_k^{\mathsf G} + K_k^{\mathsf G}  }{ 1-\overline\varepsilon_k } \eqsp.
\end{equation}

Then the kernels $(\fsmk{k}{N}[\theta])_{1\le k\le N}$ satisfy the
time-inhomogeneous conditions NS1 and NS2 of \citet{douc2004quantitative}. More precisely,
$$
\fsmk{k}{N}[\theta](\x,\cdot)
\wedge
\fsmk{k}{N}[\theta](\x',\cdot)
\ge
\overline\varepsilon_k
\nu_{k,L_k}^{\mathsf G}(\cdot),
\qquad
(\x,\x')\in\overline C_k \eqsp.
$$
and there exists a bivariate kernel $\fsmkstar{k}{N}[\theta]$ such that
\begin{equation} \label{eq:drift_bivariate}
\fsmkstar{k}{N}[\theta] \overline F_{k|N}(\x,\x') \le \overline\lambda_k \overline F_{k-1|N}(\x,\x') + \overline b_k \un_{\overline C_k}(\x,\x')\eqsp.
\end{equation}
\end{proposition}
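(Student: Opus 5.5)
The plan is to verify the two conditions separately, taking \Cref{lem:harris_forward_smoothing} as given. The minorization on $\overline C_k$ follows from a reduction to a sublevel set of $V$. The bivariate drift follows from an explicit coupling kernel $\fsmkstar{k}{N}[\theta]$ whose two marginals are exactly $\fsmk{k}{N}[\theta](\x,\cdot)$ and $\fsmk{k}{N}[\theta](\x',\cdot)$.

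\emph{Minorization.} Let $(\x,\x')\in\overline C_k$.
\begin{itemize}
\item Since $F_{k-1|N}\ge 1$, the constraint $F_{k-1|N}(\x)+F_{k-1|N}(\x')\le 2d_k$ gives $F_{k-1|N}(\x)\le 2d_k-1=L_k$, and likewise for $\x'$.
\item Hence $V_{k-1|N}^\theta(\x)\le L_k-1$.
\item Because $0<\normbackwardfun{k-1}{N}[\theta]\le 1$, we have $V\le V_{k-1|N}^\theta$, so $\x,\x'\in C_{L_k}$.
\item The first part of \Cref{lem:harris_forward_smoothing} with $R=L_k$ then gives $\fsmk{k}{N}[\theta](\x,\cdot)\ge \varepsilon_k^Q(L_k)c^Q_{k,L_k}\nu^{\mathsf G}_{k,L_k}=2\overline\varepsilon_k\,\nu^{\mathsf G}_{k,L_k}$, and the same for $\x'$.
\end{itemize}
Taking the minimum yields the claimed bound. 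It holds even with the factor $2$, which we deliberately give up so that $\overline\varepsilon_k<1$ and the residual kernels below remain nonnegative.

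\emph{Coupling.} Off $\overline C_k$, set $\fsmkstar{k}{N}[\theta]((\x,\x'),\cdot)\eqdef \fsmk{k}{N}[\theta](\x,\cdot)\otimes\fsmk{k}{N}[\theta](\x',\cdot)$. On $\overline C_k$, define the residual kernels
$$\mathsf R(\x,\cdot)\eqdef\frac{\fsmk{k}{N}[\theta](\x,\cdot)-\overline\varepsilon_k\nu^{\mathsf G}_{k,L_k}}{1-\overline\varepsilon_k},$$
which are nonnegative by the minorization step. Then set
$$\fsmkstar{k}{N}[\theta]((\x,\x'),\cdot)\eqdef\overline\varepsilon_k\,(\nu^{\mathsf G}_{k,L_k}\circ\mathrm{diag}^{-1})+(1-\overline\varepsilon_k)\,\mathsf R(\x,\cdot)\otimes\mathsf R(\x',\cdot).$$
Measurability in $(\x,\x')$ is inherited from $\fsmk{k}{N}[\theta]$. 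In both regions the marginals are the forward-smoothing kernels, as required by NS1--NS2. Since $\overline F_{k|N}$ is a sum of single-variable functions, we get $\fsmkstar{k}{N}[\theta]\overline F_{k|N}(\x,\x')=\tfrac12\bigl[\fsmk{k}{N}[\theta]F_{k|N}(\x)+\fsmk{k}{N}[\theta]F_{k|N}(\x')\bigr]$ everywhere.

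\emph{Drift.} By \eqref{eq:drift_cond_G}, with the indicator bounded by $1$,
$$\fsmk{k}{N}[\theta]F_{k|N}\le 1+\lambda_k^{\mathsf G}V_{k-1|N}^\theta+K_k^{\mathsf G}=\lambda_k^{\mathsf G}F_{k-1|N}+(1-\lambda_k^{\mathsf G}+K_k^{\mathsf G}).$$
Averaging over the two coordinates gives
$$\fsmkstar{k}{N}[\theta]\overline F_{k|N}\le \lambda_k^{\mathsf G}\overline F_{k-1|N}+(1-\lambda_k^{\mathsf G}+K_k^{\mathsf G}).$$
\begin{itemize}
\item Off $\overline C_k$, we have $\overline F_{k-1|N}>d_k$. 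The constant is then absorbed: $1-\lambda_k^{\mathsf G}+K_k^{\mathsf G}\le \overline F_{k-1|N}(1-\lambda_k^{\mathsf G}+K_k^{\mathsf G})/d_k$, which gives the factor $\overline\lambda_k$ from \eqref{eq=lambda_barre_k}.
\item On $\overline C_k$, we bound $\lambda_k^{\mathsf G}\overline F_{k-1|N}\le\lambda_k^{\mathsf G}d_k$. The right-hand side is then at most $\lambda_k^{\mathsf G}d_k+1-\lambda_k^{\mathsf G}+K_k^{\mathsf G}\le\overline b_k$, since $1-\overline\varepsilon_k\le1$. Adding the nonnegative term $\overline\lambda_k\overline F_{k-1|N}$ gives \eqref{eq:drift_bivariate}.
\end{itemize}
Finally, $\overline\lambda_k<1$ is equivalent to $(1-\lambda_k^{\mathsf G}+K_k^{\mathsf G})/d_k<1-\lambda_k^{\mathsf G}$, which is exactly \eqref{eq:cond_d_k}.

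The main obstacle is the coupling step: the residual kernels must be genuine probability kernels, which requires $\overline\varepsilon_k$ to be at most the minorization constant and strictly less than $1$, and the construction must be done measurably. The factor-$\tfrac12$ definition of $\overline\varepsilon_k$ is what guarantees both.
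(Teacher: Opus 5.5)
Your minorization step and the one-dimensional drift bound $\fsmk{k}{N}[\theta]F_{k|N}\le\lambda_k^{\mathsf G}F_{k-1|N}+1-\lambda_k^{\mathsf G}+K_k^{\mathsf G}$ are the paper's argument. The case $(\x,\x')\notin\overline C_k$ is also identical.

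The one real discrepancy is the bivariate kernel on $\overline C_k$. You state that NS1--NS2 require its marginals there to be $\fsmk{k}{N}[\theta](\x,\cdot)$ and $\fsmk{k}{N}[\theta](\x',\cdot)$; this does not match how the paper sets up NS2. In the coupling construction of \citet{douc2004quantitative}, coalescence with probability $\overline\varepsilon_k$ on $\overline C_k$ is handled through NS1. What NS2 must control on $\overline C_k$ is a coupling of the \emph{residual} kernels $\mathsf R(\x,\cdot)$ and $\mathsf R(\x',\cdot)$, and that is what enters their constant (26). This explains three features of the paper:
\begin{itemize}
\item it takes $\fsmkstar{k}{N}[\theta]((\x,\x'),\cdot)=\mathsf R(\x,\cdot)\otimes\mathsf R(\x',\cdot)$ on $\overline C_k$;
\item $\overline b_k$ carries the factor $(1-\overline\varepsilon_k)^{-1}$, which you treated as slack;
\item in \Cref{cor:finite_horizon_forward_smoothing_stability}, $\overline B_r=1\vee(1-\overline\varepsilon_r)\overline b_r/\overline\lambda_r$ is justified as a bound on (26) through \eqref{eq:barre_b_k_bound}.
\end{itemize}
With your full coupling, \eqref{eq:drift_bivariate} is true but adds nothing beyond the one-dimensional drift. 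The plain independent product $\fsmk{k}{N}[\theta](\x,\cdot)\otimes\fsmk{k}{N}[\theta](\x',\cdot)$ on all of $\rset^\xdim\times\rset^\xdim$ satisfies it as well. So it is not the residual estimate that the application of \citet[Theorem~8]{douc2004quantitative} relies on.

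The repair follows from your own decomposition. On $\overline C_k$ your kernel is $\overline\varepsilon_k\,\nu_{k,L_k}^{\mathsf G}\circ\mathrm{diag}^{-1}+(1-\overline\varepsilon_k)\,\mathsf R\otimes\mathsf R$, and $\overline F_{k|N}\ge0$. Dropping the diagonal part and dividing by $1-\overline\varepsilon_k$ gives
\[
\mathsf R\otimes\mathsf R\,\overline F_{k|N}(\x,\x')\le\frac{\lambda_k^{\mathsf G}d_k+1-\lambda_k^{\mathsf G}+K_k^{\mathsf G}}{1-\overline\varepsilon_k}=\overline b_k\eqsp,
\]
which is exactly \eqref{eq:barre_b_k_bound}. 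The paper obtains the same bound by discarding $-2\overline\varepsilon_k\nu_{k,L_k}^{\mathsf G}[F_{k|N}]$. If you replace your kernel on $\overline C_k$ by $\mathsf R\otimes\mathsf R$ and keep the product kernel off it, your proof becomes the paper's.

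Your checks that halving makes $\overline\varepsilon_k<1$ (so $\mathsf R$ is well defined) and that $\overline\lambda_k<1$ is equivalent to \eqref{eq:cond_d_k} are correct.
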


\begin{proof}
We first prove NS1. By the minorization bound obtained in \Cref{lem:harris_forward_smoothing}, for every $R>0$, every $\x \in C_R$, and every $A \in \borelians{\rset^\xdim}$,
$$
\fsmk{k}{N}[\theta](\x,A) \ge \varepsilon^Q_k(R)c_{k,R}^Q \nu_{k,R}^{\mathsf G}(A) \eqsp.
$$
Let $(\x,\x')\in\overline C_k$. By definition of $\overline C_k$,
$$
\overline F_{k-1|N}(\x,\x') = \frac{1}{2} \left[ F_{k-1|N}(\x)+F_{k-1|N}(\x') \right] \le d_k \eqsp.
$$
Since both terms are nonnegative, this implies
$$
F_{k-1|N}(\x)\le 2d_k \eqsp, \qquad F_{k-1|N}(\x')\le 2d_k \eqsp.
$$
Therefore,
$$
V_{k-1|N}^{\theta}(\x) \le 2d_k-1 = L_k \eqsp, \qquad V_{k-1|N}^{\theta}(\x') \le L_k \eqsp.
$$
Moreover, since $0<\widetilde\beta_{k-1|N}^{\theta}\le 1$,
$$
V(\x)\le L_k \eqsp,
\qquad
V(\x')\le L_k \eqsp,
$$
that is $\x,\x'\in C_{L_k}$ with
$$
C_{L_k} = \left\{ \z \in \rset^\xdim : V(\z) \le L_k \right\} \eqsp.
$$
Applying the one-step minorization \eqref{eq:minorization_cond_G} with $R=L_k$, we obtain
$$
\fsmk{k}{N}[\theta](\x,\cdot) \ge \varepsilon^Q_k(L_k) c_{k,L_k}^Q \nu_{k,L_k}^{\mathsf G}(\cdot) \eqsp,
$$
and
$$
\fsmk{k}{N}[\theta](\x',\cdot) \ge \varepsilon^Q_k(L_k) c_{k,L_k}^Q \nu_{k,L_k}^{\mathsf G}(\cdot) \eqsp. 
$$
Hence,
$$
\fsmk{k}{N}[\theta](\x,\cdot) \wedge \fsmk{k}{N}[\theta] (\x',\cdot) \ge \overline\varepsilon_k \nu_{k,L_k}^{\mathsf G} (\cdot),
$$
where $\overline\varepsilon_k = \tfrac12 \varepsilon^Q_k(L_k) c_{k,L_k}^Q$. This proves NS1.

We now prove NS2. 
Using the one-step drift bound \eqref{eq:drift_cond_G},
$$
\fsmk{k}{N}[\theta] V_{k|N}^{\theta}(\x) \le \lambda_k^{\mathsf G} V_{k-1|N}^{\theta}(\x) + K_k^{\mathsf G}\un_{C_{R_k}}(\x) \eqsp.
$$
Therefore,
$$
\fsmk{k}{N}[\theta]\left(V_{k|N}^{\theta}\right)(\x)
\le
\lambda_k^{\mathsf G}
\left(V_{k-1|N}^{\theta}(\x)\right) + K_k^{\mathsf G} \un_{C_{R_k}}(\x) \eqsp.
$$
We get, 
\begin{align*}
\fsmk{k}{N}[\theta] F_{k|N}(\x)
&= 1+ \fsmk{k}{N}[\theta] \left(V_{k|N}^{\theta}\right) (\x) \\
&\le 1+ \lambda_k^{\mathsf G} \left(V_{k-1|N}^{\theta}(\x)\right) + K_k^{\mathsf G} \\
&= \lambda_k^{\mathsf G} F_{k-1|N}(\x) + 1 - \lambda_k^{\mathsf G} + K_k^{\mathsf G} \eqsp.
\end{align*}
Define the bivariate kernel $\fsmkstar{k}{N}[\theta]$ on
$\rset^\xdim\times\rset^\xdim$ by
\begin{align*}
& \fsmkstar{k}{N}[\theta][(\x,\x')][\rmd \z,\rmd \z'] \eqdef \\
& \begin{cases}
\displaystyle
\fsmk{k}{N}[\theta][\x][\rmd \z] 
\fsmk{k}{N}[\theta][\x'][\rmd \z'],
&
(\x,\x')\notin\overline C_k \eqsp,
\\[2.2ex]
\displaystyle
\frac{ \left( 
\fsmk{k}{N}[\theta][\x][\rmd \z]
-
\overline\varepsilon_k
\nu_{k,L_k}^{\mathsf G}(\rmd \z) \right) \left( \fsmk{k}{N}[\theta][\x'][\rmd \z']
-
\overline\varepsilon_k
\nu_{k,L_k}^{\mathsf G}(\rmd \z') \right)
}{
(1-\overline\varepsilon_k)^2
}
&
(\x,\x')\in\overline C_k \eqsp.
\\[2.2ex]
\end{cases}
\end{align*}

If $(\x,\x')\notin\overline C_k$, then by definition of $\fsmkstar{k}{N}[\theta]$,
\begin{align*}
\fsmkstar{k}{N}[\theta]\overline F_{k|N}(\x,\x') & = \frac12 \fsmk{k}{N}[\theta]F_{k|N}(\x) + \frac12 \fsmk{k}{N}[\theta]F_{k|N}(\x') \\
& \le \lambda_k^{\mathsf G} \overline F_{k-1|N}(\x,\x') + 1- \lambda_k^{\mathsf G}+ K_k^{\mathsf G} \eqsp.
\end{align*}
Since $(\x,\x')\notin\overline C_k$, we have $ \overline F_{k-1|N}(\x,\x')>d_k $
, and therefore,
$$
1- \lambda_k^{\mathsf G} + K_k^{\mathsf G} \le \frac{ 1- \lambda_k^{\mathsf G} + K_k^{\mathsf G} }{ d_k } \overline F_{k-1|N}(\x,\x') \eqsp.
$$
Hence,
$$
\fsmkstar{k}{N}[\theta] \overline F_{k|N} (\x,\x') \le \overline\lambda_k \overline F_{k-1|N}(\x,\x') \eqsp,
$$
where $ \overline\lambda_k \in(0,1)$ is defined in \eqref{eq=lambda_barre_k}.

If $(\x,\x')\in\overline C_k$, as $0 < \overline\varepsilon_k < 1$, and by
definition of $\fsmkstar{k}{N}[\theta]$,
\begin{align*}
\fsmkstar{k}{N}[\theta]\overline F_{k|N}(\x,\x')
&= \frac{ \fsmk{k}{N}[\theta]F_{k|N}(\x) + \fsmk{k}{N}[\theta]F_{k|N}(\x') - 2\overline\varepsilon_k \nu_{k,L_k}^{\mathsf G}[F_{k|N}] }{ 2(1-\overline\varepsilon_k) } \\
&\le \frac{ \fsmk{k}{N}[\theta]F_{k|N}(\x) + \fsmk{k}{N}[\theta]F_{k|N}(\x') }{ 2(1-\overline\varepsilon_k) } \\
&\le \frac{ \lambda_k^{\mathsf G} \overline F_{k-1|N}(\x,\x') + 1- \lambda_k^{\mathsf G} + K_k^{\mathsf G} }{ 1-\overline\varepsilon_k } \eqsp.
\end{align*}
Since $(\x,\x')\in\overline C_k$, we have $\overline F_{k-1|N}(\x,\x')\le d_k$. Thus
\begin{equation} \label{eq:barre_b_k_bound}
\fsmkstar{k}{N}[\theta]\overline F_{k|N}(\x,\x') \le \frac{  \lambda_k^{\mathsf G} d_k + 1- \lambda_k^{\mathsf G}  + K_k^{\mathsf G}  }{ 1-\overline\varepsilon_k } = \overline b_k \eqsp,
\end{equation}
where we used \eqref{eq=b_barre_k}. Combining the two cases, we obtain \eqref{eq:drift_bivariate}. This proves NS2.
\end{proof}

\begin{corollary}
\label{cor:finite_horizon_forward_smoothing_stability}
Suppose that Assumptions \ref{ass:proposal_harris}-\ref{ass:normalized_continuation_lower_bound}
hold and that the conditions of \Cref{prop:dmr_conditions_forward_smoothing} hold. 
Then, for every $0\le \ell\le N$, there exists a finite constant
$\Gamma_{\ell,N}<\infty$, depending only on 
$$
(\overline \varepsilon_k , \overline \lambda_k, \overline b_k)_{\ell+1\le k\le N} \eqsp,
$$
such that, for all probability measures $\mu_1,\mu_2$ satisfying
$$
\mu_1 \left[  1+ V_{\ell|N}^{\theta} \right] + \mu_2 \left[ 1+ V_{\ell|N}^{\theta} \right] < \infty \eqsp,
$$
we have
$$
\vnorm[ \mu_1\prodfsmk{\ell}{N}[\theta] - \mu_2\prodfsmk{\ell}{N}[\theta] ][V] \le \Gamma_{\ell,N} \left\{ \mu_1 \left[ 1+ V_{\ell|N}^{\theta} \right] + \mu_2 \left[ 1+ V_{\ell|N}^{\theta} \right] \right\} \eqsp,
$$
where $\Gamma_{\ell,N}$ is defined in \eqref{eq:def:_gamma_ell}. Moreover, for every finite signed measure $\eta$ such that
$$
\eta(\rset^\xdim) = 0 \eqsp,
\qquad |\eta|\left[1+V_{\ell|N}^{\theta}\right]<\infty \eqsp,
$$
we have
\begin{equation} \label{eq:contraction_sgn_measure}
\vnorm[ \eta\prodfsmk{\ell}{N}[\theta] ][V] \le \Gamma_{\ell,N}|\eta| \left[ 1+V_{\ell|N}^{\theta} \right] = \Gamma_{\ell,N}
\vnorm[\eta][V_{\ell|N}^{\theta}] \eqsp.
\end{equation}
\end{corollary}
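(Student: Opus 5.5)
The bound will follow from the Douc--Moulines--Rosenthal coupling argument for time-inhomogeneous chains, run on the forward-smoothing kernels through the bivariate conditions NS1--NS2 of \Cref{prop:dmr_conditions_forward_smoothing}. The signed-measure statement \eqref{eq:contraction_sgn_measure} will then follow from a Jordan decomposition.

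\textbf{Coupling construction.} Fix $\ell$ and start from the product coupling $\mu_1\otimes\mu_2$ at time $\ell$. For $k=\ell+1,\dots,N$, build a coupled chain $(Z_k,Z'_k)$ with a coalescence flag as follows.
\begin{itemize}
\item If $(Z_{k-1},Z'_{k-1})\notin\overline C_k$, move with the product kernel, i.e.\ with $\fsmkstar{k}{N}[\theta]$.
\item If $(Z_{k-1},Z'_{k-1})\in\overline C_k$, then with probability $\overline\varepsilon_k$ draw a common point from $\nu_{k,L_k}^{\mathsf G}$ and declare coalescence. With probability $1-\overline\varepsilon_k$, move with the residual kernel $\fsmkstar{k}{N}[\theta]$.
\end{itemize}
After coalescence, the two components move together with $\fsmk{k}{N}[\theta]$. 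NS1 guarantees that both marginals are exactly $\fsmk{k}{N}[\theta]$, so $Z_N\sim\mu_1\prodfsmk{\ell}{N}[\theta]$ and $Z'_N\sim\mu_2\prodfsmk{\ell}{N}[\theta]$.

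\textbf{Reduction to a tail estimate.} Let $T$ be the coalescence time. Since $\normbackwardfun{N}{N}[\theta]=\un$, we have $F_{N|N}=1+V$. Hence for every $|\psi|\le 1+V$,
$$
\left|\mu_1\prodfsmk{\ell}{N}[\theta]\psi-\mu_2\prodfsmk{\ell}{N}[\theta]\psi\right|\le 2\,\E\left[\overline F_{N|N}(Z_N,Z'_N)\un_{\{T>N\}}\right].
$$
By the duality representation of $\vnorm[\cdot][V]$, it therefore suffices to bound the right-hand side by $\Gamma_{\ell,N}\,(\mu_1\otimes\mu_2)[\overline F_{\ell|N}]$. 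This is at most a multiple of $\mu_1[1+V_{\ell|N}^\theta]+\mu_2[1+V_{\ell|N}^\theta]$.

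\textbf{Tail estimate (main obstacle).} Let $\mathcal N_k$ be the number of visits to the sets $\overline C_i$ up to time $k$, and set $B_k\eqdef\max\{1,\overline b_k(1-\overline\varepsilon_k)/\overline\lambda_k\}$.
\begin{itemize}
\item By NS2, on the event $\{T>k\}$ the process
$$
\left(\prod_{i=\ell+1}^{k}\overline\lambda_i\right)^{-1}\left(\prod_{i\le k,\ \text{visit at }i}B_i\right)^{-1}\overline F_{k|N}(Z_k,Z'_k)\,\un_{\{T>k\}}
$$
is a supermartingale. Outside $\overline C_k$ the drift is $\overline\lambda_k\overline F_{k-1|N}$. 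Inside, surviving without coalescence contributes a factor $(1-\overline\varepsilon_k)$ times $\overline b_k\le B_k\overline\lambda_k\overline F_{k-1|N}/(1-\overline\varepsilon_k)$, using $\overline F\ge1$. The delicate part is this bookkeeping of the factors $(1-\overline\varepsilon_k)$ against $\overline b_k$ at each visit.
\item Split on $\{\mathcal N_N\ge j\}$ versus $\{\mathcal N_N<j\}$ for an integer $j$. The first event forces at least $j$ failed coalescence trials, which yields a factor $\prod(1-\overline\varepsilon)$ over $j$ indices. A naive split loses integrability of $\overline F_N$; I would instead apply a Hölder or convex-combination split of the DMR type, in the form $\overline F\un\le \overline F^{1-r}\cdot\overline F^{r}$ with $r\in(0,1)$. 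The second event is handled by the supermartingale above, giving a factor $\prod_{i}\overline\lambda_i\cdot\max B^{j}$.
\item Taking the minimum over $j$ defines $\Gamma_{\ell,N}$ as an explicit function of $(\overline\varepsilon_k,\overline\lambda_k,\overline b_k)_{\ell<k\le N}$. This is finite for fixed $N$, since with $j=0$ the bound is already at most $\prod_{i}\overline\lambda_i B_i$.
\end{itemize}

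\textbf{Signed measures.} For $\eta$ with $\eta(\rset^\xdim)=0$, write $\eta=c(\mu_1-\mu_2)$ with $c=\eta^+(\rset^\xdim)=|\eta|(\rset^\xdim)/2$, where $\mu_1=\eta^+/c$ and $\mu_2=\eta^-/c$. The first part applied to $(\mu_1,\mu_2)$ gives
$$
\vnorm[\eta\prodfsmk{\ell}{N}[\theta]][V]\le c\,\Gamma_{\ell,N}\left(\mu_1[1+V_{\ell|N}^\theta]+\mu_2[1+V_{\ell|N}^\theta]\right)=\Gamma_{\ell,N}\,|\eta|[1+V_{\ell|N}^\theta].
$$
The right-hand side is $\Gamma_{\ell,N}\vnorm[\eta][V_{\ell|N}^{\theta}]$ by definition of that norm.
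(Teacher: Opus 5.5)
Your overall route is the paper's: NS1--NS2 from \Cref{prop:dmr_conditions_forward_smoothing}, the Douc--Moulines--Rosenthal coupling bound, and then a Jordan decomposition. Your signed-measure step is identical to the paper's. The paper simply invokes \citet[Theorem~8]{douc2004quantitative} with $\lambda_{i-1}=\overline\lambda_{\ell+i}$, $b_{i-1}=\overline b_{\ell+i}$, $\overline V_0=\overline F_{\ell|N}$. You instead re-derive that theorem, and the re-derivation is incomplete at exactly the step you flag.

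Your supermartingale correctly handles $\{\mathcal N_N<j\}$. The gap is the event $\{T>N,\ \mathcal N_N\ge j\}$, and the H\"older split you sketch does not close it. NS2 controls only $\overline F$, not $\overline F^{\,p}$ for $p>1$, so $\E[\overline F\un_A]\le\E[\overline F^{\,p}]^{1/p}\mathbb{P}(A)^{1-1/p}$ uses moments you do not have. Putting a power $r<1$ on $\overline F$ only bounds $\E[\overline F_{N|N}^{\,r}\un_A]$, which is a $V^r$-type norm rather than the $\|\cdot\|_V$ bound with constant \eqref{eq:def:_gamma_ell}.

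The missing idea is a change of measure. At a visit to $\overline C_k$, your step is a mixture: weight $\overline\varepsilon_k$ on coalescence and $1-\overline\varepsilon_k$ on the residual kernel, which is $\fsmkstar{k}{N}[\theta]$ there. Hence, for nonnegative $f$,
$$\E[f(Z_{\ell:N},Z'_{\ell:N})\un_{\{T>N\}}]=\E^\star\bigl[f(Z_{\ell:N},Z'_{\ell:N})\prod_{k=\ell+1}^{N}(1-\overline\varepsilon_k)^{\un_{\overline C_k}(Z_{k-1},Z'_{k-1})}\bigr],$$
where $\E^\star$ is the law of the pair chain driven throughout by $\fsmkstar{k}{N}[\theta]$ from $\mu_1\otimes\mu_2$. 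On $\{\mathcal N_N\ge j\}$ the weight is at most $(1-\overline\varepsilon)_{j,\ell:N}$. The weight is bounded pointwise, so it pulls out without any independence from $\overline F$. Iterating \eqref{eq:drift_bivariate} under $\E^\star$ then gives
\begin{align*}
&\E\left[\overline F_{N|N}(Z_N,Z'_N)\un_{\{T>N,\ \mathcal N_N\ge j\}}\right]\\
&\qquad\le(1-\overline\varepsilon)_{j,\ell:N}\Bigl[\Bigl(\prod_{r=\ell+1}^{N}\overline\lambda_r\Bigr)(\mu_1\otimes\mu_2)[\overline F_{\ell|N}]+\sum_{s=\ell+1}^{N}\Bigl(\prod_{r=s+1}^{N}\overline\lambda_r\Bigr)\overline b_s\Bigr].
\end{align*}
No integrability is lost, and the bracket is the paper's $D_{\ell,N}(\mu_1,\mu_2)$.

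Two smaller points follow. First, $D_{\ell,N}$ contains the additive sum $\sum_s(\prod_{r>s}\overline\lambda_r)\overline b_s$, which is not proportional to $(\mu_1\otimes\mu_2)[\overline F_{\ell|N}]$. Your reduction to a bound of the form $\Gamma_{\ell,N}(\mu_1\otimes\mu_2)[\overline F_{\ell|N}]$ therefore needs the observation $\mu_1[F_{\ell|N}]+\mu_2[F_{\ell|N}]\ge2$ (since $F_{\ell|N}\ge1$) to absorb it; this is exactly where the curly bracket in \eqref{eq:def:_gamma_ell} comes from. Second, finiteness of the infimum comes from $j=N-\ell+1$, where $\{\mathcal N_N\ge j\}$ is empty and only $\prod_r\overline\lambda_r\,\overline B_{N-\ell,\ell:N}$ survives, not from $j=0$.
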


\begin{proof}
We have,
$$
\vnorm[ \mu_1\prodfsmk{\ell}{N}[\theta] - \mu_2\prodfsmk{\ell}{N}[\theta] ][V]  =  \vnorm[ \mu_1 \fsmk{\ell+1}{N}[\theta] \cdots \fsmk{N}{N}[\theta]  - \mu_2 \fsmk{\ell+1}{N}[\theta] \cdots \fsmk{N}{N}[\theta] ][V]
$$
The case $\ell=N$ follows from
$$
\prodfsmk{N}{N}[\theta]= \operatorname{Id} \eqsp, \qquad V_{N|N}^{\theta}=V \eqsp.
$$
Recall that, for $0\le k\le N$,
$$
F_{k|N}(\x) \eqdef 1+\left(V_{k|N}^{\theta}(\x)\right) \eqsp.
$$
Now, let $0\le \ell<N$, in the notation of \citet[Theorem~8]{douc2004quantitative}, we use the correspondence
$$
\lambda_{i-1}=\overline\lambda_{\ell+i} \eqsp,
\qquad
b_{i-1}=\overline b_{\ell+i} \eqsp,
\qquad
\overline V_0=\overline F_{\ell|N},
\qquad
1 \le i \le N - \ell  \eqsp.
$$

For $j \in \{1, \cdots, N - \ell \}$, define
$$
(1-\overline\varepsilon)_{j,\ell:N}
\eqdef
\max_{\ell+1\le r_1<\cdots<r_j\le N}
\prod_{q=1}^{j}
\left(
1-\overline\varepsilon_{r_q}
\right)
\eqsp, \qquad \overline B_{j,\ell:N}
\eqdef
\max_{\ell+1\le r_1<\cdots<r_j\le N}
\prod_{q=1}^{j}
\overline B_{r_q}
\eqsp,
$$
with $ \overline B_{0,\ell:N}\eqdef 1$ and
\begin{equation} \label{eq:def_overline_B}
\overline B_r \eqdef 1 \vee \frac{ (1-\overline\varepsilon_r)\overline b_r }{ \overline\lambda_r} \eqsp.
\end{equation}
Note that $\overline B_r$ is an upper bound to equation (26) in \citet{douc2004quantitative} given by \eqref{eq:barre_b_k_bound}.
Applying \citet[Theorem~8]{douc2004quantitative}, for every
$1\le j\le N-\ell+1$,
\begin{align*}
& \vnorm[ \mu_1 \fsmk{\ell+1}{N}[\theta] \cdots \fsmk{N}{N}[\theta]  - \mu_2 \fsmk{\ell+1}{N}[\theta] \cdots \fsmk{N}{N}[\theta] ][V] \\
&\quad\le 2 (1-\overline\varepsilon)_{j,\ell:N} D_{\ell,N}(\mu_1,\mu_2)\un_{\{j\le N-\ell\}} + 2 \left( \prod_{r=\ell+1}^{N} \overline\lambda_r \right) \overline B_{j-1,\ell:N} (\mu_1\otimes\mu_2) [ \overline F_{\ell|N} ] \eqsp.
\end{align*}
where
$$
D_{\ell,N}(\mu_1,\mu_2) \eqdef \left( \prod_{r=\ell+1}^{N} \overline\lambda_r \right) (\mu_1\otimes\mu_2) \left[ \overline F_{\ell|N} \right] + \sum_{s=\ell+1}^{N} \left( \prod_{r=s+1}^{N} \overline\lambda_r\right) \overline b_s \eqsp.
$$
Since
$$
(\mu_1\otimes\mu_2)[\overline F_{\ell|N}]
= \frac12 \left\{ \mu_1[F_{\ell|N}] + \mu_2[F_{\ell|N}] \right\} \eqsp,
$$
and since $F_{\ell|N}\ge 1$, we have
$$
\mu_1[F_{\ell|N}] + \mu_2[F_{\ell|N}] \ge 2 \eqsp.
$$
Therefore,
\begin{align*}
2D_{\ell,N}(\mu_1,\mu_2) 
&= \left( \prod_{r=\ell+1}^{N} \overline\lambda_r \right) \left\{ \mu_1[F_{\ell|N}] + \mu_2[F_{\ell|N}] \right\} + 2\sum_{s=\ell+1}^{N} \left( \prod_{r=s+1}^{N} \overline\lambda_r \right) \overline b_s \\
&\le \left[ \left( \prod_{r=\ell+1}^{N} \overline\lambda_r \right) + \sum_{s=\ell+1}^{N} \left( \prod_{r=s+1}^{N} \overline\lambda_r \right) \overline b_s \right] \left\{ \mu_1[F_{\ell|N}] + \mu_2[F_{\ell|N}] \right\} \eqsp.
\end{align*}
Thus, for probability measures $\mu_1,\mu_2$ and for every $1\le j\le N - \ell +1$,
\begin{align*}
&\vnorm[ \mu_1\prodfsmk{\ell}{N}[\theta] - \mu_2\prodfsmk{\ell}{N}[\theta] ][V] \le \Bigg[ (1-\overline\varepsilon)_{j,\ell:N} \left\{ \left( \prod_{r=\ell+1}^{N} \overline\lambda_r \right) + \sum_{s=\ell+1}^{N} \left( \prod_{r=s+1}^{N} \overline\lambda_r \right) \overline b_s \right\} \un_{\{j\le N - \ell \}} \\
&\qquad\qquad\qquad \qquad \qquad \qquad \qquad \qquad + \left( \prod_{r=\ell+1}^{N} \overline\lambda_r \right) \overline B_{j-1,\ell:N} \Bigg] \left\{ \mu_1[F_{\ell|N}]+ \mu_2[F_{\ell|N}] \right\} \eqsp.
\end{align*}
Taking the infimum over $1\le j\le N - \ell +1$, we obtain
$$
\vnorm[ \mu_1\prodfsmk{\ell}{N}[\theta] - \mu_2\prodfsmk{\ell}{N}[\theta] ][V] \le \Gamma_{\ell,N} \left\{ \mu_1[F_{\ell|N}] + \mu_2[F_{\ell|N}] \right\} \eqsp,
$$
where
\begin{multline} \label{eq:def:_gamma_ell}
\Gamma_{\ell,N} \eqdef  \inf_{1 \le j\le N-\ell+1} \Bigg[ (1- \overline \varepsilon)_{j,\ell:N} \left\{ \left( \prod_{r = \ell+1}^{N} \overline \lambda_r \right) + \sum_{s=\ell+1}^{N} \left( \prod_{r=s+1}^{N} \overline \lambda_r \right) \overline b_s \right\} \un_{\{j\le N - \ell \}} \\
+ \left( \prod_{r=\ell+1}^{N} \overline \lambda_r \right) \overline B_{j-1,\ell:N} \Bigg] \eqsp,
\end{multline}
with the convention that $\Gamma_{N,N} \eqdef 1$.

It remains to prove the signed-measure extension. Let $\eta$ be a finite signed
measure such that $\eta(\rset^\xdim)=0$, and let
$$
\eta=\eta^+-\eta^-
$$
be its Jordan decomposition. If $\eta=0$, the result is immediate. Otherwise,
since $\eta(\rset^\xdim)=0$, we have
$$
s\eqdef \eta^+(\rset^\xdim)=\eta^-(\rset^\xdim) > 0 \eqsp.
$$
Define the probability measures
$$
\pi^+\eqdef \frac{\eta^+}{s} \eqsp,
\qquad
\pi^-\eqdef \frac{\eta^-}{s} \eqsp.
$$
Then $\eta=s(\pi^+-\pi^-)$. Applying the first part of the corollary to $\pi^+$ and $\pi^-$, we obtain
\begin{align*}
\vnorm[ \eta\prodfsmk{\ell}{N}[\theta] ][V] &= s\vnorm[ \pi^+\prodfsmk{\ell}{N}[\theta] - \pi^-\prodfsmk{\ell}{N}[\theta] ][V] \\
&\le s\Gamma_{\ell,N} \left\{ \pi^+[1+V_{\ell|N}^{\theta}] +\pi^-[1+V_{\ell|N}^{\theta}] \right\} \\
&=\Gamma_{\ell,N} |\eta| \left[ 1+V_{\ell|N}^{\theta}\right]\eqsp.
\end{align*}
which proves \eqref{eq:contraction_sgn_measure}.
\end{proof}

\begin{corollary}[Geometric forgetting under uniform constants]
\label{cor:geometric_forward_smoothing_stability}
Suppose that Assumptions \ref{ass:proposal_harris}-\ref{ass:normalized_continuation_lower_bound}
hold, so that the conclusions of \Cref{prop:dmr_conditions_forward_smoothing} hold. Suppose moreover that
there exist constants
$$
\varepsilon_\star > 0 \eqsp,
\qquad
\lambda_\star \in (0,1) \eqsp ,
\qquad
b_\star < \infty \eqsp,
\qquad
B_\star \in [1,\infty) \eqsp,
$$
such that, for every $1 \le k \le N$,
$$
\overline\varepsilon_k \ge \varepsilon_\star \eqsp,
\qquad
\overline\lambda_k \le \lambda_\star \eqsp,
\qquad
\overline b_k \le b_\star \eqsp,
\qquad
\overline B_k \le B_\star \eqsp.
$$
Then there exist constants $M_{\mathsf G}<\infty$ and
$\rho_{\mathsf G}\in(0,1)$, independent of $N$ and $\ell$, such that
for every $N \ge 1$ and $0\le \ell\le N$,
$$
\Gamma_{\ell,N} \le M_{\mathsf G} \rho_{\mathsf G}^{N-\ell} \eqsp,
$$
where
$$
\rho_{\mathsf G} \eqdef \max \left\{ (1-\varepsilon_\star)^\delta, \lambda_\star B_\star^\delta \right\} \eqsp.
$$
and
$$
M_{\mathsf G} \eqdef 2 +\frac{b_\star}{1-\lambda_\star}  \eqsp.
$$
\end{corollary}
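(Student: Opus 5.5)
We start from the explicit formula \eqref{eq:def:_gamma_ell} for $\Gamma_{\ell,N}$ and bound it by substituting the uniform constants. Set $n \eqdef N-\ell$. The case $n=0$ is immediate, since $\Gamma_{N,N}=1\le M_{\mathsf G}$. For $n\ge1$ we use the uniform bounds $\overline\varepsilon_k\ge\varepsilon_\star$, $\overline\lambda_k\le\lambda_\star$, $\overline b_k\le b_\star$ and $\overline B_k\le B_\star$. These give
\[
(1-\overline\varepsilon)_{j,\ell:N}\le(1-\varepsilon_\star)^j \eqsp, \qquad \overline B_{j-1,\ell:N}\le B_\star^{j-1} \eqsp, \qquad \prod_{r=\ell+1}^N\overline\lambda_r\le\lambda_\star^n \eqsp.
\]
The sum term is controlled by the geometric series
\[
\sum_{s=\ell+1}^N \lambda_\star^{N-s} b_\star \le \frac{b_\star}{1-\lambda_\star} \eqsp.
\]
Consequently, for every admissible $j$ with $j\le n$,
\[
\Gamma_{\ell,N}\le (1-\varepsilon_\star)^j\Bigl(1+\tfrac{b_\star}{1-\lambda_\star}\Bigr)+\lambda_\star^n B_\star^{j-1} \eqsp.
\]

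The key step is the choice of $j$, which should balance the two terms. Here $\delta\in(0,1]$ is the free parameter left implicit in the statement's definition of $\rho_{\mathsf G}$; we fix it beforehand and take $j \eqdef \lceil \delta n\rceil$, which lies in $\{1,\dots,n\}$. The first term is then at most $(1-\varepsilon_\star)^{\delta n}\bigl(1+b_\star/(1-\lambda_\star)\bigr)\le \rho_{\mathsf G}^n\bigl(1+b_\star/(1-\lambda_\star)\bigr)$. For the second term, $B_\star\ge1$ and $j-1\le\delta n$ give $B_\star^{j-1}\le B_\star^{\delta n}$, so
\[
\lambda_\star^n B_\star^{j-1}\le(\lambda_\star B_\star^\delta)^n\le\rho_{\mathsf G}^n \eqsp.
\]
Adding the two bounds yields $\Gamma_{\ell,N}\le\bigl(2+b_\star/(1-\lambda_\star)\bigr)\rho_{\mathsf G}^n=M_{\mathsf G}\rho_{\mathsf G}^{N-\ell}$, which is the claim.

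The main obstacle is ensuring $\rho_{\mathsf G}<1$, i.e.\ that $\delta$ can be chosen so that $\lambda_\star B_\star^\delta<1$. This holds for every $\delta<\log(1/\lambda_\star)/\log B_\star$ when $B_\star>1$, and for every $\delta$ when $B_\star=1$. We therefore fix such a $\delta\in(0,1]$, for instance $\delta=\min\{1,\tfrac12\log(1/\lambda_\star)/\log B_\star\}$, and record that this choice makes both $(1-\varepsilon_\star)^\delta<1$ (because $\varepsilon_\star>0$) and $\lambda_\star B_\star^\delta<1$. A minor bookkeeping point is the case $\varepsilon_\star\ge1$: this cannot happen, because $\overline\varepsilon_k\le\tfrac12$ by construction.
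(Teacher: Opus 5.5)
Your proof is correct and follows the paper's argument essentially line by line. You substitute the uniform bounds into the explicit formula for $\Gamma_{\ell,N}$, take $j=\lceil \delta(N-\ell)\rceil$ with $\delta$ chosen so that $\lambda_\star B_\star^\delta<1$, and bound each of the two terms by $\rho_{\mathsf G}^{N-\ell}$ to reach $M_{\mathsf G}=2+b_\star/(1-\lambda_\star)$; your explicit admissible $\delta$ and the observation $\overline\varepsilon_k\le 1/2$ are small clarifications the paper leaves implicit.
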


\begin{proof}
The case $\ell=N$ is immediate. Indeed, by convention,
$$
\prodfsmk{N}{N}[\theta]=\operatorname{Id}
$$
and therefore,
$$
\vnorm[ \mu_1\prodfsmk{N}{N}[\theta] - \mu_2\prodfsmk{N}{N}[\theta] ][V]
 = \vnorm[ \mu_1-\mu_2 ][V] \eqsp.
$$
Suppose, now that, $N-\ell\ge 1$, using the uniform bounds,
$$
\prod_{r=\ell+1}^{N}\overline\lambda_r \le \lambda_\star^{N-\ell} \eqsp, \qquad 
\sum_{s=\ell+1}^{N} \left( \prod_{r=s+1}^{N}\overline\lambda_r \right) \overline b_s \le b_\star \sum_{q=0}^{N-\ell-1}\lambda_\star^q \le \frac{b_\star}{1-\lambda_\star} \eqsp.
$$
Moreover,
$$
(1-\overline\varepsilon)_{j,\ell:N} \le (1-\varepsilon_\star)^j \eqsp,
\qquad \overline B_{j-1,\ell:N} \le B_\star^{j-1} \eqsp.
$$
Hence, we have
$$
\Gamma_{\ell,N} \le \inf_{1\le j\le N-\ell+1} \left\{ \left( \lambda_\star^{N-\ell} +\frac{b_\star}{1-\lambda_\star} \right) (1-\varepsilon_\star)^j\un_{\{j\le N-\ell \}} + \lambda_\star^{N-\ell} B_\star^{j-1} \right\} \eqsp.
$$
We want to find a choice of $j$ that would yield a geometric decay of length $N - \ell$.
Since $B_\star \in [1,\infty)$ and $\lambda_\star\in(0,1)$, choose $\delta\in(0,1)$ such that
$$
\lambda_\star B_\star^\delta<1.
$$
Since $N-\ell\ge 1$ and $\delta\in(0,1)$, let $j_\delta
\eqdef
\left\lceil \delta (N-\ell)\right\rceil$ and we have 
$$
1\le j_\delta < N-\ell +1 \eqsp.
$$
Thus,
$$
\Gamma_{\ell,N} \le \left( \lambda_\star^{N - \ell} +\frac{b_\star}{1-\lambda_\star} \right)(1-\varepsilon_\star)^{j_\delta} + \lambda_\star^{N - \ell} B_\star^{j_\delta-1} \eqsp.
$$
Because $j_\delta\ge \delta (N - \ell) $,
$$
(1-\varepsilon_\star)^{j_\delta} \le \left((1-\varepsilon_\star)^\delta\right)^{N - \ell} \eqsp.
$$
Because $j_\delta - 1 \le \delta (N-\ell) $ and $B_\star\ge 1$,
$$
\lambda_\star^{N - \ell } B_\star^{j_\delta-1} \le \left(\lambda_\star B_\star^\delta\right)^{N - \ell} \eqsp.
$$
Therefore,
$$
\Gamma_{\ell,N} \le \left( \lambda_\star^{N - \ell} +\frac{b_\star}{1-\lambda_\star} \right) \left((1-\varepsilon_\star)^\delta\right)^{N-\ell} + \left(\lambda_\star B_\star^\delta\right)^{N - \ell} \eqsp.
$$
By construction, $\rho_{\mathsf G}\in(0,1)$, and therefore,
$$
\Gamma_{\ell,N} \le \left( \lambda_\star^{N - \ell} +\frac{b_\star}{1-\lambda_\star}+1 \right)\rho_{\mathsf G}^{N - \ell}
=
M_{\mathsf G}\rho_{\mathsf G}^{N-\ell} \eqsp,
$$
which concludes the proof.
\end{proof}

\begin{lemma}[Sufficient condition for uniform time control]
\label{lem:structural_uniform_dmr_constants}
Suppose \Cref{ass:time_uniform} holds. Then there exist constants
$$
\varepsilon_\star>0 \eqsp,
\qquad
\lambda_\star\in(0,1) \eqsp,
\qquad
b_\star<\infty \eqsp,
\qquad
B_\star\in[1,\infty) \eqsp,
$$
independent of $N$, such that, for every $N\ge1$ and every
$1\le k\le N$,
$$
\overline\varepsilon_k \ge \varepsilon_\star \eqsp,
\qquad
\overline\lambda_k \le \lambda_\star \eqsp,
\qquad
\overline b_k \le b_\star \eqsp,
\qquad
\overline B_k \le B_\star \eqsp,
$$
where $\overline\varepsilon_k$, $\overline\lambda_k$, $\overline b_k$ and $\overline B_k$ are defined in \Cref{prop:dmr_conditions_forward_smoothing}.
\end{lemma}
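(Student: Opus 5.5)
The plan is to make explicit, uniform-in-$N$ choices of the free parameters $R_k$ and $d_k$ in \Cref{lem:harris_forward_smoothing} and \Cref{prop:dmr_conditions_forward_smoothing}. We then bound each derived constant using only $\lambda_Q$, $K_Q$, $R_\star$, $\eta_{R_\star}$, $\eta_{L_\star}$ from \Cref{ass:time_uniform}. All the constants $\overline\varepsilon_k,\overline\lambda_k,\overline b_k,\overline B_k$ are explicit monotone functions of $(\lambda_k^Q,K_k^Q,\varepsilon_k^Q(\cdot)c^Q_{k,\cdot})$. Hence it suffices to fix these choices and check the monotonicity directions.

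\textbf{Step 1 (drift constants).} Take $R_k\eqdef R_\star$ for all $k$. This is admissible since $R_\star>K_Q/(1-\lambda_Q)\ge K_k^Q/(1-\lambda_k^Q)$. Then
\[
\lambda_k^{\mathsf G}=\lambda_k^Q+K_k^Q/R_\star\le \lambda_Q+K_Q/R_\star\eqdef\lambda_{\mathsf G}<1,
\qquad
K_k^{\mathsf G}=K_k^Q/(\varepsilon_k^Q(R_\star)c^Q_{k,R_\star})\le K_Q/\eta_{R_\star}\eqdef K_{\mathsf G}.
\]

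\textbf{Step 2 (choice of $d_k$).} Condition \eqref{eq:cond_d_k} reads $d_k>1+K_k^{\mathsf G}/(1-\lambda_k^{\mathsf G})$, and its right-hand side is at most $1+K_{\mathsf G}/(1-\lambda_{\mathsf G})$. I would set $d_k\eqdef d_\star$ with $d_\star\eqdef 2+2K_Q/(\eta_{R_\star}(1-\lambda_Q-K_Q/R_\star))$. Then $L_k=2d_\star-1=3+4K_Q/(\eta_{R_\star}(1-\lambda_Q-K_Q/R_\star))=L_\star$, which matches \Cref{ass:time_uniform}; this is presumably why $L_\star$ was defined that way. The strict inequality holds because $d_\star-1=1+2K_{\mathsf G}/(1-\lambda_{\mathsf G})>K_{\mathsf G}/(1-\lambda_{\mathsf G})$. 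The one point to double-check here is that the offset term uses $1-\lambda_k^{\mathsf G}+K_k^{\mathsf G}$ exactly as in \eqref{eq:cond_d_k}.

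\textbf{Step 3 (bivariate constants).}
\begin{itemize}
\item Minorization: $\overline\varepsilon_k=\tfrac12\varepsilon_k^Q(L_\star)c^Q_{k,L_\star}\ge\eta_{L_\star}/2\eqdef\varepsilon_\star$.
\item Contraction: $\overline\lambda_k=\lambda_k^{\mathsf G}+(1-\lambda_k^{\mathsf G}+K_k^{\mathsf G})/d_\star$. Write it as $\lambda_k^{\mathsf G}(1-1/d_\star)+(1+K_k^{\mathsf G})/d_\star$, which is increasing in both $\lambda_k^{\mathsf G}$ and $K_k^{\mathsf G}$. It is therefore at most $\lambda_\star\eqdef\lambda_{\mathsf G}+(1-\lambda_{\mathsf G}+K_{\mathsf G})/d_\star$. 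This is $<1$ exactly when $d_\star>1+K_{\mathsf G}/(1-\lambda_{\mathsf G})$, which is the check already done in Step 2.
\item Drift offset: $\overline b_k\le (\lambda_{\mathsf G}d_\star+1+K_{\mathsf G})/(1-\varepsilon_\star)\eqdef b_\star$. Here we use $\lambda d+1-\lambda\le\lambda d+1$ and $1-\overline\varepsilon_k\ge 1-\varepsilon_\star$, where $\varepsilon_\star<1$ because $\varepsilon_k^Q c^Q\le 1$.
\end{itemize}

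\textbf{Step 4 ($\overline B_k$).} This is the main obstacle, because $\overline\lambda_k$ appears in the denominator of $\overline B_k$, so an upper bound on $\overline\lambda_k$ is not enough. Instead we use a lower bound: $\overline\lambda_k\ge(1-\lambda_k^{\mathsf G}+K_k^{\mathsf G})/d_\star\ge(1-\lambda_{\mathsf G})/d_\star>0$, using $\lambda_k^{\mathsf G}\le\lambda_{\mathsf G}$. Then
\[
\overline B_k\le 1\vee \frac{b_\star\,d_\star}{1-\lambda_{\mathsf G}}\eqdef B_\star\in[1,\infty).
\]
All constants so obtained are independent of $N$ and $k$, which proves the claim.
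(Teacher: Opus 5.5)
Your route is the paper's: take $R_k=R_\star$, bound $\lambda_k^{\mathsf G}$ and $K_k^{\mathsf G}$ uniformly, choose $d_k$ so that the minorization radius is $L_\star$, and read off the four constants. Your constant choice $d_k\equiv d_\star$ is legitimate, and slightly cleaner than the paper's $k$-dependent choice $d_k=2(1-\lambda_k^{\mathsf G}+K_k^{\mathsf G})/(1-\lambda_k^{\mathsf G})\le d_\star$. It gives $L_k=L_\star$ exactly, so you avoid the paper's step of enlarging the radius from $L_k$ to $L_\star$. The price is that $\overline\lambda_k$ is no longer $(1+\lambda_k^{\mathsf G})/2\ge 1/2$. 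That is why you need the separate lower bound $\overline\lambda_k\ge(1-\lambda_{\mathsf G})/d_\star$ in Step 4, and that bound is correct.

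There is, however, a wrong-direction inequality in your bound on $\overline b_k$ in Step 3. From $\overline\varepsilon_k\ge\varepsilon_\star$ you get $1-\overline\varepsilon_k\le 1-\varepsilon_\star$, not $\ge$. Since $\overline b_k$ is increasing in $\overline\varepsilon_k$, no lower bound on $\overline\varepsilon_k$ can give an upper bound on $\overline b_k$. So $b_\star=(\lambda_{\mathsf G}d_\star+1+K_{\mathsf G})/(1-\varepsilon_\star)$ is not justified.

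What you need is an upper bound keeping $\overline\varepsilon_k$ away from $1$, and the fact you quote supplies it. We have $\varepsilon_k^Q(L_\star)\le 1$ (evaluate the minorization at $A=\rset^\xdim$) and $c^Q_{k,L_\star}=\nu^Q_{k,L_\star}[\widetilde\beta^{\theta}_{k|N}]\le 1$. Hence $\overline\varepsilon_k\le 1/2$, so $1-\overline\varepsilon_k\ge 1/2$ and $\overline b_k\le b_\star\eqdef 2(\lambda_{\mathsf G}d_\star+1+K_{\mathsf G})$, exactly as in the paper.

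With this $b_\star$, Step 4 goes through unchanged. In fact you can bypass $\overline b_k$ there, since $(1-\overline\varepsilon_k)\overline b_k$ is just the numerator $\lambda_k^{\mathsf G}d_\star+1-\lambda_k^{\mathsf G}+K_k^{\mathsf G}\le\lambda_{\mathsf G}d_\star+1+K_{\mathsf G}$.
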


\begin{proof}
Let $N\ge1$ and $1\le k\le N$. All constants constructed below are
independent of $N$ and $k$. In \Cref{lem:harris_forward_smoothing}, take $R_k=R_\star$. Then, the transferred drift constants satisfy
$$
\lambda_k^{\mathsf G} = \lambda_k^Q+\frac{K_k^Q}{R_\star} \le \lambda_Q+\frac{K_Q}{R_\star} \eqqcolon \lambda_{\mathsf G,\star} <1 \eqsp.
$$
Moreover, by \Cref{ass:time_uniform},
$$
K_k^{\mathsf G} = \frac{K_k^Q}{ \varepsilon_k^Q(R_\star)c_{k,R_\star}^Q } \le \frac{K_Q}{\eta_{R_\star}} \eqqcolon K_{\mathsf G,\star} < \infty \eqsp.
$$
Choose
\begin{equation} \label{eq:choice:d_k_proof}
d_k
\eqdef
2\frac{ 1-\lambda_k^{\mathsf G}+K_k^{\mathsf G} }{1-\lambda_k^{\mathsf G}} \eqsp,
\end{equation}
and note that such choice satisfies \eqref{eq:cond_d_k}. It follows that,
$$
d_k = 2+ \frac{2K_k^{\mathsf G}}{1-\lambda_k^{\mathsf G}} \le 2+ \frac{2K_{\mathsf G,\star}}{1-\lambda_{\mathsf G,\star}} \eqqcolon d_\star <\infty.
$$
Since $L_k=2d_k-1$, we obtain 
$$ 
L_k \le 2d_\star-1 = 3+ \frac{4K_{\mathsf G,\star}}{1-\lambda_{\mathsf G,\star}} = L_\star \eqsp. 
$$
Hence $C_{L_k} \subseteq C_{L_\star}$. Therefore, in the minorization
step of \Cref{prop:dmr_conditions_forward_smoothing}, we may use the radius $L_\star$ uniformly in $k$ and set
$$
\overline\varepsilon_k \eqdef \frac12 \varepsilon_k^Q(L_\star)c_{k,L_\star}^Q.
$$
By \Cref{ass:time_uniform}, 
$$
\overline\varepsilon_k \ge \frac{\eta_{L_\star}}2 \eqqcolon \varepsilon_\star > 0 \eqsp.
$$
Furthermore, using \eqref{eq=lambda_barre_k} and \eqref{eq:choice:d_k_proof}
$$
\overline \lambda_k = \frac{1+\lambda_k^{\mathsf G}}{2} \le \frac{1+\lambda_{\mathsf G,\star}}{2} \eqqcolon\lambda_\star < 1 \eqsp,
$$
and using \eqref{eq=b_barre_k}
$$
\overline b_k = \frac{ \lambda_k^{\mathsf G}d_k+ 1-\lambda_k^{\mathsf G}+K_k^{\mathsf G} }{ 1-\overline\varepsilon_k } 
\le 2 \left( \lambda_{\mathsf G,\star}d_\star+1+K_{\mathsf G,\star} \right) \eqsp,
$$
where we used to bound the denominator that since $\varepsilon_k^Q(L_{\star})\le1$ and $0<c_{k,L_{\star}}^Q\le1$, 
$$
\overline\varepsilon_k = \frac12\varepsilon_k^Q(L_\star)c_{k,L_\star}^Q \le \frac12 \eqsp.
$$
Finally, using \eqref{eq:def_overline_B} and since $\overline\lambda_k
= \frac{1+\lambda_k^{\mathsf G}}{2} \ge 1/2$,
$$
\overline B_k = 1 \vee \frac{ (1-\overline\varepsilon_k)\overline b_k }{ \overline\lambda_k } \le 1 \vee 2\overline b_k \le 1\vee 2b_\star \eqqcolon B_\star < \infty \eqsp,
$$
which concludes the proof.
\end{proof}

\section{Kernel bias: proofs}
\label{app:kernel_bias}

\begin{proposition}[Kernel bias error decomposition]
\label{prop:filtering_error_decomposition}
For every $N\ge 2$,
\begin{align*}
\filter{N} - \approxfilter{N}
&= \left( \muBias{0}{N} - \muBias{0}{N}[\theta] \right)  \fsmk{1}{N}[\theta] \cdots \fsmk{N}{N}[\theta] \\
& + \sum_{\ell=1}^{N-1} \left( \muBias{\ell}{N} - \muBias{\ell}{N}[\theta] \right) \fsmk{\ell+1}{N}[\theta] \cdots \fsmk{N}{N}[\theta] + \left( \filtertransform{N} (\filter{N-1}) - \filtertransform{N}[\theta] (\filter{N-1})\right) \eqsp,
\end{align*}
where for $1\le \ell\le N-1$,
$$
\muBias{\ell}{N}
\eqdef \frac{ \normbackwardfun{\ell}{N}[\theta] \filtertransform{\ell} (\filter{\ell-1}) }{ \filtertransform{\ell}(\filter{\ell-1}) [ \normbackwardfun{\ell}{N}[\theta] ] } \eqsp,
\qquad
\muBias{\ell}{N}[\theta] \eqdef \frac{ \normbackwardfun{\ell}{N}[\theta] \filtertransform{\ell}[\theta] (\filter{\ell-1}) }{ \filtertransform{\ell}[\theta](\filter{\ell-1}) [ \normbackwardfun{\ell}{N}[\theta] ] } \eqsp,
$$
and
$$
\muBias{0}{N} \eqdef \frac{ \normbackwardfun{0}{N}[\theta]\filter{0} }{ \filter{0}[\normbackwardfun{0}{N}[\theta]] } \eqsp, 
\qquad
\muBias{0}{N}[\theta] \eqdef \frac{ \normbackwardfun{0}{N}[\theta]\approxfilter{0} }{ \approxfilter{0}[\normbackwardfun{0}{N}[\theta]] } \eqsp.
$$
Equivalently,
\begin{align}
\label{eq:kernel_bias_decomposition_approx_forgetting}
\filter{N}-\approxfilter{N} = \sum_{\ell=0}^{N} \left( \muBias{\ell}{N} - \muBias{\ell}{N}[\theta] \right) \prodfsmk{\ell}{N}[\theta] \eqsp,
\end{align}
where $\prodfsmk{N}{N}[\theta]=\operatorname{Id}$, and
$$
\muBias{N}{N} \eqdef \filtertransform{N}(\filter{N-1}) \eqsp,
\qquad
\muBias{N}{N}[\theta] \eqdef \filtertransform{N}[\theta](\filter{N-1}) \eqsp.
$$
\end{proposition}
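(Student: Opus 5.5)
The plan is a telescoping argument over the "hybrid" flows that use the ideal transforms up to time $\ell$ and the approximate transforms afterwards. For $0\le \ell\le N$ define
$$
\Phi_\ell \eqdef \left(\filtertransform{N}[\theta]\circ\cdots\circ\filtertransform{\ell+1}[\theta]\right)(\filter{\ell}) \eqsp,
$$
with the convention that the empty composition is the identity, so $\Phi_N=\filter{N}$. Set also $\Phi_{-1}\eqdef\approxfilter{N}$, i.e. the approximate flow applied to $\approxfilter{0}$. Then
$$
\filter{N}-\approxfilter{N} = \left(\Phi_0-\Phi_{-1}\right) + \sum_{\ell=1}^{N}\left(\Phi_\ell-\Phi_{\ell-1}\right) \eqsp,
$$
and it remains to identify each increment with the corresponding term in \eqref{eq:kernel_bias_decomposition_approx_forgetting}.

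For $1\le \ell\le N$, $\Phi_\ell$ is the approximate flow from time $\ell$ to $N$ applied to $\filter{\ell}=\filtertransform{\ell}(\filter{\ell-1})$. Likewise, $\Phi_{\ell-1}$ is the same approximate flow from $\ell$ to $N$ applied to $\filtertransform{\ell}[\theta](\filter{\ell-1})$, since the first approximate step maps $\filter{\ell-1}$ to that measure. Apply \Cref{prop:approx_tail_factorization} with $k=\ell$, $m=N$, and $\nu\in\{\filtertransform{\ell}(\filter{\ell-1}),\filtertransform{\ell}[\theta](\filter{\ell-1})\}$. This gives
$$
\Phi_\ell=\muBias{\ell}{N}\prodfsmk{\ell}{N}[\theta] \quad\text{and}\quad \Phi_{\ell-1}=\muBias{\ell}{N}[\theta]\prodfsmk{\ell}{N}[\theta] \eqsp.
$$
Similarly, at $\ell=0$ we have $\Phi_0=\muBias{0}{N}\prodfsmk{0}{N}[\theta]$ and $\Phi_{-1}=\muBias{0}{N}[\theta]\prodfsmk{0}{N}[\theta]$, using $\nu=\filter{0}$ and $\nu=\approxfilter{0}$.

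For $\ell=N$ the composition is empty, $\normbackwardfun{N}{N}[\theta]=\un$, and $\prodfsmk{N}{N}[\theta]=\operatorname{Id}$. The increment is then exactly $\filtertransform{N}(\filter{N-1})-\filtertransform{N}[\theta](\filter{N-1})$, which matches the definition of $\muBias{N}{N}$ and $\muBias{N}{N}[\theta]$. Summing the increments yields \eqref{eq:kernel_bias_decomposition_approx_forgetting}, and separating the $\ell=0$ and $\ell=N$ terms gives the first displayed form. The restriction $N\ge2$ only serves to make the middle sum nonempty in that display.

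The only nontrivial point is checking the hypothesis $0<\nu[\normbackwardfun{\ell}{N}[\theta]]<\infty$ of \Cref{prop:approx_tail_factorization} for each $\nu$ used. Finiteness follows from $\normbackwardfun{\ell}{N}[\theta]\le1$ under \Cref{ass:bounded_normalized_potentials}. Positivity follows from strict positivity of the potentials, which gives $\normbackwardfun{\ell}{N}[\theta]>0$ pointwise, so its integral against any probability measure is positive. The transforms themselves must be well defined, with $0<\nu[\pot{\ell-1}]<\infty$, and this holds by the same assumption. I will also check that the identities hold as measures on all bounded test functions, which \Cref{prop:approx_tail_factorization} provides.
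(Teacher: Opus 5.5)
Your proposal is correct and follows the paper's proof. It uses the same telescoping over hybrid flows (ideal transforms up to time $\ell$, approximate ones afterwards), applies \Cref{prop:approx_tail_factorization} to each increment, and your explicit check that $0<\nu[\normbackwardfun{\ell}{N}[\theta]]\le 1$ for each $\nu$ is a detail the paper leaves implicit.
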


\begin{proof}
We write
$$
\filter{N}= \left( \filtertransform{N} \circ \cdots \circ \filtertransform{1} \right)(\filter{0}),
\qquad
\approxfilter{N}= \left( \filtertransform{N}[\theta] \circ \cdots  \circ \filtertransform{1}[\theta]\right) (\approxfilter{0}) \eqsp.
$$
A standard telescoping argument yields for any $N \geq 2$,
\begin{align*}
\filter{N}-\approxfilter{N}
&=
\left( \filtertransform{N}[\theta] \circ \cdots  \circ \filtertransform{1}[\theta]\right) (\filter{0})-
\left( \filtertransform{N}[\theta] \circ \cdots  \circ \filtertransform{1}[\theta]\right) (\approxfilter{0})
\\
&\qquad
+
\sum_{\ell=1}^{N-1} \Bigg\{
\left( \filtertransform{N}[\theta] \circ \cdots \circ \filtertransform{\ell+1}[\theta] \right) \left( \filter{\ell} \right) 
-
\left( \filtertransform{N}[\theta] \circ \cdots \circ \filtertransform{\ell+1}[\theta] \right) \left( \filtertransform{\ell}[\theta] \left(  \filter{\ell-1} \right) \right) \Bigg\}
\\
&\qquad
+
\filtertransform{N}(\filter{N-1})
-
\filtertransform{N}[\theta](\filter{N-1}) \eqsp.
\end{align*}
Then, applying~\cref{prop:approx_tail_factorization} finishes the proof.
\end{proof}

\begin{lemma}
\label{lem:finite_moments_lambda_bias}
Suppose that Assumptions \ref{ass:proposal_harris}, \ref{ass:bounded_normalized_potentials}, \ref{ass:local_bias_error} hold. Then, for every $1\le k\le N$,
$$
\filter{k}[V]<\infty \eqsp,
\qquad
\filtertransform{k}(\filter{k-1})[V]<\infty \eqsp,
\qquad
\filtertransform{k}[\theta](\filter{k-1})[V]<\infty \eqsp.
$$
\end{lemma}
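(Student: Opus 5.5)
The argument is an induction on $k$ that carries a $V$-moment bound through each selection--mutation step. It rests on two facts: the potentials are bounded above (\Cref{ass:bounded_normalized_potentials}), and the approximate kernel satisfies a drift condition (\Cref{ass:proposal_harris}). The ideal kernel $\mk{k}$ is compared with $\mk{k}[\theta]$ in $V$-norm through \Cref{ass:local_bias_error}. The induction hypothesis at step $k-1$ is $\filter{k-1}[V]<\infty$; the base case $\filter{0}[V]<\infty$ comes from the triangle inequality $\filter{0}[1+V]\le \approxfilter{0}[1+V]+\vnorm[\filter{0}-\approxfilter{0}][V]\le \approxfilter{0}[1+V]+\mathcal B_0^{\rm bias}<\infty$.

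\emph{Approximate predictive law.} Since $\filter{k-1}$ is a probability measure and $\pot{k-1}>0$, the normaliser $\filter{k-1}[\pot{k-1}]$ lies in $(0,\norminfty{\pot{k-1}}]$, so the tilted measure $\tau_{\pot{k-1}}(\filter{k-1})$ is well defined. Since $\pot{k-1}$ is bounded, it has finite $V$-moment:
\[
\tau_{\pot{k-1}}(\filter{k-1})[V]\le \frac{\norminfty{\pot{k-1}}}{\filter{k-1}[\pot{k-1}]}\,\filter{k-1}[V]<\infty .
\]
Applying the drift \eqref{eq:drift_Q} then gives
\[
\filtertransform{k}[\theta](\filter{k-1})[V]=\tau_{\pot{k-1}}(\filter{k-1})\big[\mk{k}[\theta]V\big]\le \lambda_k^Q\,\tau_{\pot{k-1}}(\filter{k-1})[V]+K_k^Q<\infty .
\]

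\emph{Ideal predictive law.} For each $\x$, the definition of the $V$-norm gives $\mk{k}V(\x)\le \mk{k}[\theta](1+V)(\x)+\vnorm[\delta_\x\mk{k}-\delta_\x\mk{k}[\theta]][V]\le 1+\lambda_k^QV(\x)+K_k^Q+h\,\mathcal E_k^{\rm bias}(\x)$. Integrating against $\tau_{\pot{k-1}}(\filter{k-1})$ yields
\[
\filtertransform{k}(\filter{k-1})[V]\le 1+\lambda_k^Q\,\tau_{\pot{k-1}}(\filter{k-1})[V]+K_k^Q+h\,\mathcal B_k^{\rm bias}<\infty ,
\]
using the integrability $\mathcal B_k^{\rm bias}<\infty$ from \Cref{ass:local_bias_error}. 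Because $\filter{k}=\filtertransform{k}(\filter{k-1})$, this also gives $\filter{k}[V]<\infty$ and closes the induction.

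\emph{Main obstacle.} The only delicate point is measurability and integrability. The integrand $\x\mapsto\vnorm[\delta_\x\mk{k}-\delta_\x\mk{k}[\theta]][V]$ need not itself be shown measurable, because it is dominated pointwise by the measurable function $h\,\mathcal E_k^{\rm bias}$. The function $\mk{k}V$ is measurable and nonnegative, so integrating the pointwise bound against $\tau_{\pot{k-1}}(\filter{k-1})$ is legitimate by monotonicity of the integral for nonnegative functions. No forgetting argument is needed here.
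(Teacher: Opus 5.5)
Your proposal is correct and follows the paper's induction essentially step for step: you bound the tilted $V$-moment using boundedness of $\pot{k-1}$, apply the drift of $\mk{k}[\theta]$ for the approximate predictive law, and integrate the pointwise comparison of $\mk{k}V$ with $\mk{k}[\theta]V$ plus $h\,\mathcal E_k^{\rm bias}$ against $\tau_{\pot{k-1}}(\filter{k-1})$ for the ideal one. The only differences are harmless: you derive the base case $\filter{0}[V]<\infty$ explicitly from $\approxfilter{0}[V]<\infty$ and $\mathcal B_0^{\rm bias}$ via the triangle inequality, whereas the paper simply asserts it as part of the assumption, and your comparison carries an extra additive constant $1$.
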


\begin{proof}
We prove the moment bounds by induction. The base case $\filter{0}[V]<\infty$ is part of \Cref{ass:local_bias_error}. Assume that $\filter{k-1}[V]<\infty$. Since $\pot{k-1}$ is positive and bounded above by \Cref{ass:bounded_normalized_potentials},
$$
\tau_{\pot{k-1}}(\filter{k-1})[V] = \frac{ \filter{k-1}[\pot{k-1}V] }{ \filter{k-1}[\pot{k-1}]} < \infty \eqsp.
$$
Using \Cref{ass:proposal_harris},
$$
\filtertransform{k}[\theta](\filter{k-1})[V] \le \lambda_k^Q \tau_{\pot{k-1}}(\filter{k-1})[V] + K_k^Q < \infty \eqsp.
$$
Moreover, since $V\le 1+V$, and using \Cref{ass:local_bias_error}
$$
\mk{k}V(\x) \le \mk{k}[\theta]V(\x) + h \mathcal E_k^{\rm bias}(\x) \eqsp.
$$
Therefore, using \Cref{ass:local_bias_error} again
\begin{align*}
\filtertransform{k}(\filter{k-1})[V]
&= \tau_{\pot{k-1}}(\filter{k-1})\mk{k}V \\
&\le \tau_{\pot{k-1}}(\filter{k-1})\mk{k}[\theta]V + h \tau_{\pot{k-1}}(\filter{k-1})[\mathcal E_k^{\rm bias}] \\
&<\infty \eqsp,
\end{align*}
where the last term is finite by \Cref{ass:local_bias_error}. Since $\filter{k}=\filtertransform{k}(\filter{k-1})$, this proves the induction.
\end{proof}

\begin{lemma}[Kernel bias bound under forward-smoothing forgetting]
\label{prop:kernel_bias_from_forward_forgetting}
Suppose that Assumptions
\ref{ass:proposal_harris},
\ref{ass:bounded_normalized_potentials},
\ref{ass:normalized_continuation_lower_bound}, and \Cref{ass:local_bias_error} hold. Define, for $1\le \ell\le N-1$
$$
\Lambda_{\ell,N} \eqdef \frac{ 1+\muBias{\ell}{N}[\theta]\!\left[ F_{\ell|N}^{\theta}\right] }{ \filtertransform{\ell} (\filter{\ell-1}) \left[ \normbackwardfun{\ell}{N}[\theta] \right] } \eqsp,
$$
and
$$
\Lambda_{0,N} \eqdef \frac{ 1+ \muBias{0}{N}[\theta]\!\left[F_{0|N}^{\theta}\right] }{\filter{0}\left[\normbackwardfun{0}{N}[\theta]\right] } \eqsp, \qquad \Lambda_{N,N}\eqdef 1 \eqsp.
$$
Then, for every $N\ge2$,
\begin{align}
\label{eq:kernel_bias_gamma_local_bound}
\vnorm[ \filter{N}-\approxfilter{N}][V]
&\le \Gamma_{0,N}\Lambda_{0,N}\vnorm[\filter{0}-\approxfilter{0}][V] + \sum_{\ell=1}^{N} \Gamma_{\ell,N}\Lambda_{\ell,N}
\vnorm[ \filtertransform{\ell}(\filter{\ell-1})- \filtertransform{\ell}[\theta](\filter{\ell-1})][V] \nonumber\\
&\le \Gamma_{0,N}\Lambda_{0,N} \mathcal B_0^{\rm bias} + h \sum_{\ell=1}^{N} \Gamma_{\ell,N}\Lambda_{\ell,N} \mathcal B_\ell^{\rm bias} \eqsp,
\end{align}
with for $0\le \ell\le N$, $\Gamma_{\ell,N}$ defined as in \Cref{cor:finite_horizon_forward_smoothing_stability} and $\Lambda_{\ell,N} < \infty$.
\end{lemma}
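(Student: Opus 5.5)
The plan is to start from the telescoping identity \eqref{eq:kernel_bias_decomposition_approx_forgetting} of \Cref{prop:filtering_error_decomposition}. By the triangle inequality in $\vnorm[\cdot][V]$,
$$
\vnorm[\filter{N}-\approxfilter{N}][V]\le \sum_{\ell=0}^{N}\vnorm[(\muBias{\ell}{N}-\muBias{\ell}{N}[\theta])\prodfsmk{\ell}{N}[\theta]][V] .
$$
Each difference $\eta_\ell\eqdef\muBias{\ell}{N}-\muBias{\ell}{N}[\theta]$ is a difference of two probability measures, so it has zero total mass. The signed-measure contraction \eqref{eq:contraction_sgn_measure} of \Cref{cor:finite_horizon_forward_smoothing_stability} then gives
$$
\vnorm[\eta_\ell\prodfsmk{\ell}{N}[\theta]][V]\le\Gamma_{\ell,N}|\eta_\ell|[1+V_{\ell|N}^\theta] ,
$$
provided $|\eta_\ell|[1+V^\theta_{\ell|N}]<\infty$. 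For $\ell=N$ nothing needs to be done, since $\Gamma_{N,N}=\Lambda_{N,N}=1$. The remaining task is to show, for $0\le\ell\le N-1$, that
$$
|\eta_\ell|[1+V^\theta_{\ell|N}]\le \Lambda_{\ell,N}\,\vnorm[\mu-\mu^\theta][V] ,
$$
where $(\mu,\mu^\theta)=(\filtertransform{\ell}(\filter{\ell-1}),\filtertransform{\ell}[\theta](\filter{\ell-1}))$, or $(\filter0,\approxfilter0)$ when $\ell=0$.

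This is the key step: a Lipschitz estimate for the twisting map $\mu\mapsto\beta\mu/\mu[\beta]$ with $\beta=\normbackwardfun{\ell}{N}[\theta]\in(0,1]$. I would use the standard decomposition
$$
\frac{\beta\mu}{\mu[\beta]}-\frac{\beta\mu^\theta}{\mu^\theta[\beta]}=\frac{\beta(\mu-\mu^\theta)}{\mu[\beta]}+\frac{\beta\mu^\theta}{\mu^\theta[\beta]}\cdot\frac{(\mu^\theta-\mu)[\beta]}{\mu[\beta]} .
$$
The normalization is deliberately placed by $\mu[\beta]$, which matches the denominator of $\Lambda_{\ell,N}$.

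Integrate $|\cdot|$ against $1+V/\beta$. For the first term, $\beta(1+V/\beta)=\beta+V\le1+V$, so it contributes at most $\vnorm[\mu-\mu^\theta][V]/\mu[\beta]$. For the second term, $|(\mu^\theta-\mu)[\beta]|\le|\mu-\mu^\theta|[1]\le\vnorm[\mu-\mu^\theta][V]$ because $\beta\le1$, and its integral against $1+V^\theta_{\ell|N}$ is exactly $\muBias{\ell}{N}[\theta][F^\theta_{\ell|N}]$. Adding the two gives
$$
|\eta_\ell|[1+V^\theta_{\ell|N}]\le\frac{1+\muBias{\ell}{N}[\theta][F^\theta_{\ell|N}]}{\mu[\beta]}\,\vnorm[\mu-\mu^\theta][V]=\Lambda_{\ell,N}\vnorm[\mu-\mu^\theta][V] .
$$
This proves the first inequality of \eqref{eq:kernel_bias_gamma_local_bound}.

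For the second inequality, $\ell=0$ uses \Cref{ass:local_bias_error}(I) directly. For $\ell\ge1$, both $\mu$ and $\mu^\theta$ equal $\tau_{\pot{\ell-1}}(\filter{\ell-1})$ composed with $\mk{\ell}$, respectively $\mk{\ell}[\theta]$. By convexity of the norm and Jensen's inequality (write the difference as an integral of $\delta_\x\mk{\ell}-\delta_\x\mk{\ell}[\theta]$),
$$
\vnorm[\mu-\mu^\theta][V]\le\tau_{\pot{\ell-1}}(\filter{\ell-1})[h\mathcal E^{\rm bias}_\ell]=h\mathcal B^{\rm bias}_\ell .
$$

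The main obstacle is finiteness: $\Lambda_{\ell,N}<\infty$, and the integrability needed to invoke the corollary. The denominators $\mu[\beta]$ are positive because $\beta>0$. For the numerator,
$$
\muBias{\ell}{N}[\theta][F^\theta_{\ell|N}]=\frac{\mu^\theta[\beta+V]}{\mu^\theta[\beta]}\le\frac{1+\mu^\theta[V]}{\mu^\theta[\beta]} ,
$$
which is finite by \Cref{lem:finite_moments_lambda_bias} for $\ell\ge1$, and by \Cref{ass:local_bias_error}(I) for $\ell=0$. The same bounds together with $\mu[V]<\infty$ give $|\eta_\ell|[1+V^\theta_{\ell|N}]<\infty$. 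I would check this integrability carefully, since $V/\beta$ can be much larger than $V$ and must only be integrated against $\beta$-tilted measures.
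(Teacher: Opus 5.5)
Your proposal is correct and follows essentially the same route as the paper: the telescoping decomposition of \Cref{prop:filtering_error_decomposition}, the signed-measure contraction \eqref{eq:contraction_sgn_measure}, the same Lipschitz estimate for the twisting map normalized by $\mu[\beta]$ (using $\beta F^\theta_{\ell|N}=\beta+V\le 1+V$), the integrated one-step kernel bound, and finiteness of $\Lambda_{\ell,N}$ via \Cref{lem:finite_moments_lambda_bias}. The differences are minor. You bound $|(\mu^\theta-\mu)[\beta]|$ directly by $|\mu-\mu^\theta|[1]$, whereas the paper goes through $F^\theta_{\ell|N}\ge 1$. You also check explicitly the integrability condition $|\eta_\ell|[1+V^\theta_{\ell|N}]<\infty$ required by the corollary, which the paper leaves implicit.
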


\begin{proof}
By \Cref{prop:filtering_error_decomposition},
$$
\filter{N}-\approxfilter{N}
= \sum_{\ell=0}^{N} \left(\muBias{\ell}{N} - \muBias{\ell}{N}[\theta] \right) \prodfsmk{\ell}{N}[\theta] \eqsp.
$$
Using the triangle inequality and \Cref{cor:finite_horizon_forward_smoothing_stability}, we get
$$
\vnorm[ \filter{N}-\approxfilter{N} ][V] \le \sum_{\ell=0}^{N}\Gamma_{\ell,N} \vnorm[ \muBias{\ell}{N}-\muBias{\ell}{N}[\theta] ][\twistedlyap{\ell}{N}[\theta]] \eqsp.
$$
Recall that for $\x \in \mathbb R^d$,
$$
F_{\ell|N}^{\theta} (\x) = 1+\twistedlyap{\ell}{N}[\theta] (\x)  = 1+ \frac{ V(\x) }{ \normbackwardfun{\ell}{N}[\theta](\x) } \eqsp, \qquad 0\le \ell\le N \eqsp.
$$
For $0\le \ell\le N-1$, we use the following estimate. Let $\mu,\nu$ be probability measures such that
$\mu[\normbackwardfun{\ell}{N}[\theta]] > 0$ and $\nu[\normbackwardfun{\ell}{N}[\theta]] > 0$. Then
\begin{align}
\label{eq:twisting_estimate_kernel_bias}
&\vnorm[ \frac{\normbackwardfun{\ell}{N}[\theta]\mu}{\mu[\normbackwardfun{\ell}{N}[\theta]]}
-
\frac{\normbackwardfun{\ell}{N}[\theta]\nu}{\nu[\normbackwardfun{\ell}{N}[\theta]]}][\twistedlyap{\ell}{N}[\theta]] 
\le \frac{ 1+\left( \frac{\normbackwardfun{\ell}{N}[\theta]\nu}{\nu[\normbackwardfun{\ell}{N}[\theta]]}\right)[F_{\ell|N}^{\theta}]}{ \mu[\normbackwardfun{\ell}{N}[\theta]]} \vnorm[\mu-\nu][V] \eqsp.
\end{align}
Indeed, for every measurable $\psi$ such that $|\psi|\le F_{\ell|N}^{\theta}$,
\begin{align*}
\left| \frac{\mu[\normbackwardfun{\ell}{N}[\theta]\psi]}{\mu[\normbackwardfun{\ell}{N}[\theta]]} - \frac{\nu[\normbackwardfun{\ell}{N}[\theta]\psi]}{\nu[\normbackwardfun{\ell}{N}[\theta]]} \right| 
& \le \frac{ |\mu-\nu|[\normbackwardfun{\ell}{N}[\theta]F_{\ell|N}^{\theta}]}{\mu[\normbackwardfun{\ell}{N}[\theta]]} + \frac{ \nu[\normbackwardfun{\ell}{N}[\theta]F_{\ell|N}^{\theta}] }{ \nu[\normbackwardfun{\ell}{N}[\theta]]} \frac{|\mu-\nu|[\normbackwardfun{\ell}{N}[\theta]] }{ \mu[\normbackwardfun{\ell}{N}[\theta]] } \\
& \le \frac{ 1+ \left( \frac{\normbackwardfun{\ell}{N}[\theta]\nu}{\nu[\normbackwardfun{\ell}{N}[\theta]]} \right) \left[ F_{\ell|N}^{\theta} \right] }{ \mu[\normbackwardfun{\ell}{N}[\theta]] } |\mu-\nu|[ \normbackwardfun{\ell}{N}[\theta]F_{\ell|N}^{\theta}] \eqsp,
\end{align*}
where we used in the last inequality that $F_{\ell|N}^{\theta}\ge \un$, and therefore $|\mu-\nu|[\normbackwardfun{\ell}{N}[\theta]] \le |\mu-\nu|[ \normbackwardfun{\ell}{N}[\theta]F_{\ell|N}^{\theta}]$. Also, since
$$
\normbackwardfun{\ell}{N}[\theta]F_{\ell|N}^{\theta} = \normbackwardfun{\ell}{N}[\theta]+V \le 1+V \eqsp,
$$
we obtain \eqref{eq:twisting_estimate_kernel_bias}. Applying \eqref{eq:twisting_estimate_kernel_bias} with
$$
\mu=\filter{0} \eqsp,
\qquad
\nu=\approxfilter{0} \eqsp,
$$
gives
$$
\vnorm[ \muBias{0}{N} - \muBias{0}{N}[\theta] ][\twistedlyap{0}{N}[\theta]] \le \Lambda_{0,N} \vnorm[\filter{0}-\approxfilter{0} ][V] \eqsp,
$$
It remains to use \Cref{ass:local_bias_error}. The initial term satisfies
$$
\vnorm[ \filter{0}-\approxfilter{0} ][V] \le \mathcal B_0^{\rm bias} \eqsp.
$$
Similarly, for $1\le \ell\le N-1$, applying
\eqref{eq:twisting_estimate_kernel_bias} with
$$
\mu= \filtertransform{\ell}(\filter{\ell-1}) \eqsp,
\qquad
\nu= \filtertransform{\ell}[\theta](\filter{\ell-1}) \eqsp,
$$
gives
$$
\vnorm[ \muBias{\ell}{N} - \muBias{\ell}{N}[\theta] ][\twistedlyap{\ell}{N}[\theta]] \le \Lambda_{\ell,N} \vnorm[ \filtertransform{\ell}(\filter{\ell-1}) - \filtertransform{\ell}[\theta](\filter{\ell-1})
][V] \eqsp.
$$
Moreover, recall that
$$
\filtertransform{k}(\filter{k-1}) = \tau_{\pot{k-1}} (\filter{k-1}) \mk{k} \eqsp,
\qquad
\filtertransform{k}[\theta] (\filter{k-1}) = \tau_{\pot{k-1}} (\filter{k-1}) \mk{k}[\theta] \eqsp.
$$
Hence, for every measurable $\psi$ such that $|\psi|\le 1+ V$,
\begin{align*}
\left| \filtertransform{k} (\filter{k-1}) [\psi] - \filtertransform{k}[\theta](\filter{k-1}) [\psi] \right| 
& = \left| \int \left( \mk{k} \psi(\x) - \mk{k}[\theta] \psi(\x) \right) \tau_{\pot{k-1}} (\filter{k-1}) (\rmd \x) \right| \\
& \le \int \vmetric{\delta_\x \mk{k}}{\delta_\x \mk{k}[\theta]}
\tau_{\pot{k-1}} (\filter{k-1})(\rmd \x) \\
& \le h \int \mathcal{E}_k^{\rm bias} (\x)  \tau_{\pot{k-1}} (\filter{k-1})  (\rmd \x) \\
& = h \tau_{\pot{k-1}}(\filter{k-1}) [\mathcal{E}_k^{\rm bias}] \eqsp.
\end{align*}
Taking the supremum over all $\psi$ such that $|\psi|\le 1+V$ yields
$$
\vnorm[ \filtertransform{k}(\filter{k-1}) - \filtertransform{k}[\theta](\filter{k-1}) ][V] \le h \left( \tau_{\pot{k-1}}(\filter{k-1})[\mathcal E_k^{\rm bias}] \right) \eqsp. 
$$
Finally, for $\ell=N$,
$$
\normbackwardfun{N}{N}[\theta]=\un \eqsp, \qquad \prodfsmk{N}{N}[\theta]=\operatorname{Id} \eqsp,
\qquad \Lambda_{N,N}\eqdef1 \eqsp,
$$
and therefore
$$
\vnorm[ \muBias{N}{N} - \muBias{N}{N}[\theta] ][V] = \vnorm[ \filtertransform{N}(\filter{N-1}) - \filtertransform{N}[\theta](\filter{N-1}) ][V] \eqsp.
$$
It remains to prove the finiteness of $\Lambda_{\ell,N}$. Since $0<\normbackwardfun{\ell}{N}[\theta]\le1$, for any probability measure $\mu$ 
$$
0<\mu[\normbackwardfun{\ell}{N}[\theta]] \le 1 \eqsp,
$$
and
$$
\normbackwardfun{\ell}{N}[\theta]F_{\ell|N}^{\theta} = \normbackwardfun{\ell}{N}[\theta]+V \le 1+V \eqsp.
$$
Hence, for $\ell=0$, using \Cref{ass:bounded_normalized_potentials} and \Cref{ass:local_bias_error},
$$
\muBias{0}{N}[\theta][F_{0|N}^{\theta}] = \frac{ \approxfilter{0}[ \normbackwardfun{0}{N}[\theta]F_{0|N}^{\theta}]}{\approxfilter{0}[ \normbackwardfun{0}{N}[\theta] ] } \le \frac{ 1+\approxfilter{0}[V] }{\approxfilter{0}[\normbackwardfun{0}{N}[\theta]]} <\infty \eqsp.
$$
For $1\le \ell\le N-1$, the same argument holds with \Cref{lem:finite_moments_lambda_bias},
$$
\muBias{\ell}{N}[\theta][F_{\ell|N}^{\theta}] = \frac{ \filtertransform{\ell}[\theta](\filter{\ell-1}) [ \normbackwardfun{\ell}{N}[\theta]F_{\ell|N}^{\theta}] }{ \filtertransform{\ell}[\theta](\filter{\ell-1}) [\normbackwardfun{\ell}{N}[\theta] ] } \le \frac{ 1+\filtertransform{\ell}[\theta](\filter{\ell-1})[V] }{ \filtertransform{\ell}[\theta](\filter{\ell-1}) [\normbackwardfun{\ell}{N}[\theta] ] } < \infty \eqsp . 
$$
Finally, $\Lambda_{N,N}=1$, which concludes the proof.
\end{proof}

\section{Monte Carlo error: proofs}
\label{app:monte_carlo}

The proof follows the standard local-error propagation strategy for
Feynman--Kac particle systems; see, for example,
\citet[Section~7.4.3]{delmoral2004feynman} and
\citet{whiteley2012sequential}.

For $0\le \ell\le N$, define the future normalized approximate
Feynman--Kac map
$$
\filtertransform{\ell:N}[\theta]
\eqdef \filtertransform{N}[\theta]\circ\cdots\circ \filtertransform{\ell+1}[\theta] \eqsp,
\qquad \filtertransform{N:N}[\theta]\eqdef \operatorname{Id} \eqsp.
$$
Thus, for every probability measure $\nu$,
$$
\filtertransform{\ell:N}[\theta](\nu) = \left( \filtertransform{N}[\theta]\circ\cdots\circ \filtertransform{\ell+1}[\theta] \right)(\nu) \eqsp.
$$

\paragraph{Selected and propagated empirical measures.}
At time $k-1$, considering $M$ particles, let
$$
\mathcal F_{k-1}^{\theta} \eqdef \sigma\left( \X_\ell^{\theta,(i)},\,1\le i\le M,\,0\le \ell\le k-1 \right) \eqsp.
$$
Given $\mcfilter{k-1}$, the \emph{selection step} draws
$$
\widetilde \X_{k-1}^{\theta,(1)},\ldots, \widetilde \X_{k-1}^{\theta,(M)}\mid \mathcal F_{k-1}^{\theta} \overset{\mathrm{i.i.d.}}{\sim} \tau_{\pot{k-1}}(\mcfilter{k-1}) \eqsp.
$$
We define
$$
\tildemcfilter{k-1} \eqdef \frac1M\sum_{i=1}^{M} \delta_{\widetilde \X_{k-1}^{\theta,(i)}} \eqsp.
$$
Let
$$
\widetilde{\mathcal F}_{k-1}^{\theta}
\eqdef \sigma\left( \mathcal F_{k-1}^{\theta}, \widetilde \X_{k-1}^{\theta,(1)},\ldots, \widetilde \X_{k-1}^{\theta,(M)} \right) \eqsp.
$$
The \emph{mutation step} draws, conditionally independently given $\widetilde{\mathcal F}_{k-1}^{\theta}$,
$$
\X_k^{\theta,(i)} \mid \widetilde{\mathcal F}_{k-1}^{\theta} \sim \mk{k}[\theta](\widetilde \X_{k-1}^{\theta,(i)},\rmd \x) \eqsp, \qquad i=1,\ldots,M \eqsp.
$$
We define
$$
\mcfilter{k} \eqdef \frac1M \sum_{i=1}^{M} \delta_{\X_k^{\theta,(i)}} \eqsp.
$$
Consequently, for every bounded measurable function $\psi$,
$$
\E\left[ \mcfilter{k}[\psi] \middle| \mathcal F_{k-1}^{\theta} \right]
= \tau_{\pot{k-1}}(\mcfilter{k-1})\mk{k}[\theta] \left[ \psi \right] 
= \filtertransform{k}[\theta](\mcfilter{k-1})[\psi].
$$

\paragraph{Local selection and mutation errors.}

For every bounded measurable test function $\psi$, define
$$
\mcfilter{k}[\psi] - \filtertransform{k}[\theta](\mcfilter{k-1})[\psi] = \muterr{k}[\psi] + \selerr{k}[\psi] \eqsp,
$$
where 
$$
\muterr{k}[\psi] \eqdef \mcfilter{k}[\psi] - \frac1M \sum_{i=1}^{M} \mk{k}[\theta]\psi(\widetilde \X_{k-1}^{\theta,(i)}) \eqsp, 
$$
is the \emph{mutation error} and 
$$
\selerr{k}[\psi] \eqdef \frac1M \sum_{i=1}^{M} \mk{k}[\theta]\psi(\widetilde \X_{k-1}^{\theta,(i)}) - \left( \tau_{\pot{k-1}}(\mcfilter{k-1})\mk{k}[\theta] \right)[\psi] \eqsp,
$$
is the \emph{selection error}. Then
$$
\E[ \muterr{k}[\psi] \mid \widetilde{\mathcal F}_{k-1} ] = 0 \eqsp,
\qquad
\E[\selerr{k}[\psi]\mid\mathcal F_{k-1}] = 0 \eqsp.
$$

\paragraph{Local $L_p$ control.}
We now quantify the size of the local empirical errors introduced at each selection and mutation step. Since the final estimates are stated in a $V$-weighted norm, we use the corresponding dual weighted supremum norm. For a measurable function $\psi:\rset^\xdim\to\rset$, define
$$
\|\psi\|_{V,\infty} \eqdef \sup_{\x\in\rset^\xdim} \frac{|\psi(\x)|}{1+V(\x)} \eqsp.
$$

\begin{lemma}
\label{lem:local_error_lp}
Let $q\ge2$ and suppose that \Cref{ass:local_lp_bound_moment} holds. Then
there exists a constant $B_q<\infty$, depending only on $q$, such that for
every measurable $\psi:\rset^\xdim\to\rset$ with $\|\psi\|_{V,\infty}<\infty$ ,
\begin{equation} \label{eq:init_smc}
\normEc{ (\mcfilter{0}-\approxfilter{0})[\psi] }_{L_q} \le\frac{C_0^{(q)}}{\sqrt M} \|\psi\|_{V,\infty} \eqsp,
\end{equation}
and, for every $1\le k\le N$,
\begin{equation} \label{eq:local_lp_total_final}
\normEc{ \mcfilter{k}[\psi] - \left(\filtertransform{k}[\theta](\mcfilter{k-1})\right)[\psi] }_{L_q}
\le \frac{C_k^{S,(q)} + C_k^{M,(q)} }{\sqrt M} \|\psi\|_{V,\infty} \eqsp,
\end{equation}
where
$$
C_0^{(q)} \eqdef 2B_q \approxfilter{0}[(1+V)^q]^{1/q} \eqsp,
$$
$$
C_k^{S,(q)} \eqdef 2B_q\left( \sup_{M\ge1} \E\left[\tau_{\pot{k-1}}(\mcfilter{k-1})[\mk{k}[\theta](1+V)^q]\right]\right)^{1/q} \eqsp,
$$
and
$$
C_k^{M,(q)} \eqdef 2B_q \left( \sup_{M\ge1} \E\left[ \tildemcfilter{k-1} [ \mk{k}[\theta](1+V)^q ] \right] \right)^{1/q} \eqsp.
$$
\end{lemma}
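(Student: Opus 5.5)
The plan is to apply a conditional Marcinkiewicz--Zygmund inequality to sums of conditionally i.i.d., conditionally centered variables, once for each of the three random steps: initialization, selection and mutation. The normalization enters only through $|\psi|\le \|\psi\|_{V,\infty}(1+V)$. Throughout, $B_q$ denotes the Marcinkiewicz--Zygmund constant. For i.i.d. centered $Z_1,\dots,Z_M$, it gives
\[
\normEc{\tfrac1M\textstyle\sum_i Z_i}_{L_q}\le B_q M^{-1/2}\,\E[|Z_1|^q]^{1/q}.
\]
We combine this with the centering bound $\E[|Z-\E Z|^q]^{1/q}\le 2\E[|Z|^q]^{1/q}$, which yields the factor $2$.

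\textbf{Initialization.} The variables $\psi(\X_0^{\theta,(i)})-\approxfilter{0}[\psi]$ are i.i.d. and centered. Since $|\psi|^q\le \|\psi\|_{V,\infty}^q(1+V)^q$, the bound above gives \eqref{eq:init_smc} with $C_0^{(q)}$, finite by \Cref{ass:local_lp_bound_moment}.

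\textbf{Selection.} By Minkowski,
\[
\normEc{\mcfilter{k}[\psi]-\filtertransform{k}[\theta](\mcfilter{k-1})[\psi]}_{L_q}\le \normEc{\muterr{k}[\psi]}_{L_q}+\normEc{\selerr{k}[\psi]}_{L_q}.
\]
Conditionally on $\mathcal F_{k-1}^\theta$, the variables $\mk{k}[\theta]\psi(\widetilde\X_{k-1}^{\theta,(i)})$ are i.i.d. with law $\tau_{\pot{k-1}}(\mcfilter{k-1})$, and their mean is exactly $\tau_{\pot{k-1}}(\mcfilter{k-1})\mk{k}[\theta][\psi]$. The conditional MZ inequality gives
\[
\E\bigl[|\selerr{k}[\psi]|^q\mid\mathcal F_{k-1}^\theta\bigr]\le (2B_q)^q M^{-q/2}\,\tau_{\pot{k-1}}(\mcfilter{k-1})\bigl[|\mk{k}[\theta]\psi|^q\bigr].
\]
Jensen's inequality for the Markov kernel gives
\[
|\mk{k}[\theta]\psi|^q\le \mk{k}[\theta]|\psi|^q\le \|\psi\|^q_{V,\infty}\,\mk{k}[\theta](1+V)^q.
\]
Taking expectations and bounding by the supremum over $M$ yields $C_k^{S,(q)}/\sqrt M$.

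\textbf{Mutation.} Conditionally on $\widetilde{\mathcal F}_{k-1}^\theta$, the variables $\psi(\X_k^{\theta,(i)})-\mk{k}[\theta]\psi(\widetilde\X_{k-1}^{\theta,(i)})$ are independent and centered, but not identically distributed. I therefore use the MZ/Rosenthal form for independent summands:
\[
\E\Bigl|\textstyle\sum_i Z_i\Bigr|^q\le B_q^q\,\E\Bigl(\textstyle\sum_i Z_i^2\Bigr)^{q/2}\le B_q^q M^{q/2-1}\textstyle\sum_i\E|Z_i|^q,
\]
where the last step is convexity with $q\ge2$. Each term satisfies
\[
\E\bigl[|Z_i|^q\mid\widetilde{\mathcal F}_{k-1}^\theta\bigr]\le 2^q\|\psi\|_{V,\infty}^q\,\mk{k}[\theta](1+V)^q(\widetilde\X_{k-1}^{\theta,(i)}).
\]
Averaging over $i$ produces $\tildemcfilter{k-1}[\mk{k}[\theta](1+V)^q]$. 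Taking expectations and the supremum over $M$ gives $C_k^{M,(q)}/\sqrt M$.

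\textbf{Main obstacle.} The main point to check is finiteness of $C_k^{M,(q)}$. It follows from the tower property, since
\[
\E\bigl[\tildemcfilter{k-1}[f]\mid\mathcal F_{k-1}^\theta\bigr]=\tau_{\pot{k-1}}(\mcfilter{k-1})[f],
\]
so $C_k^{M,(q)}=C_k^{S,(q)}$, which is finite by \Cref{ass:local_lp_bound_moment}. A second issue is that $\psi$ is unbounded. To handle it, I would apply the above to truncations $\psi\wedge n\vee(-n)$, which satisfy the same $\|\cdot\|_{V,\infty}$ bound, and pass to the limit by Fatou and dominated convergence; the required integrability is supplied by the moment assumption.
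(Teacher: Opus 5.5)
Your proposal is correct and follows the paper's proof essentially step by step. Like the paper, you apply the conditional Marcinkiewicz--Zygmund inequality with the convexity bound $(\sum_i Z_i^2)^{q/2}\le M^{q/2-1}\sum_i |Z_i|^q$ separately to the initial, selection and mutation errors, use the centering factor $2$ and Jensen's inequality $|\mk{k}[\theta]\psi|^q\le \mk{k}[\theta]|\psi|^q$, and use the tower property to get $C_k^{M,(q)}=C_k^{S,(q)}$. The truncation step you add is harmless but unnecessary, since the moment assumption already makes every summand conditionally $L_q$-integrable.
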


\begin{proof}
        The proof is an application of Lemma 7.7.3 (Chapter 7.3: Inequalities for independent Random Variables) in \citet{delmoral2004feynman}. 
        
\noindent\emph{Initial empirical error.}
By construction, the initial particles $\X_0^{\theta, (1)},\ldots,\X_0^{\theta, (M)}$ are i.i.d. with common law $\approxfilter{0}$. Define
$$
U_i
\eqdef
\psi(\X_0^{\theta, (i)})-\approxfilter{0}[\psi],
\qquad i = 1, \ldots , M \eqsp.
$$
Then the variables $U_1,\ldots,U_M$ are i.i.d. and centered, and
$$
(\mcfilter{0}-\approxfilter{0})[\psi]
=
\frac1M\sum_{i=1}^M U_i \eqsp .
$$
Therefore, by Marcinkiewicz--Zygmund inequality,
$$
\normEc{(\mcfilter{0}-\approxfilter{0})[\psi]}_{L_q}
\le
\frac{B_q}{M}
\normEc{
\left(\sum_{i=1}^M |U_i|^2\right)^{1/2}
}_{L_q} \eqsp.
$$
Since $q/2\ge 1$, Jensen's inequality yields
$$
\left(\sum_{i=1}^M |U_i|^2\right)^{q/2}
\le
M^{\frac q2-1}\sum_{i=1}^M |U_i|^q \eqsp.
$$
Hence
$$
\normEc{(\mcfilter{0}-\approxfilter{0})[\psi]}_{L_q}
\le
\frac{B_q}{\sqrt M}
\left(
\frac1M\sum_{i=1}^M \E[|U_i|^q]
\right)^{1/q} \eqsp.
$$
Using $|a-b|^q\le 2^{q-1}(|a|^q+|b|^q)$ and Jensen's inequality,
\begin{align*}
\E[|U_i|^q] &\le 2^{q-1} \left( \E[|\psi(\X_0^{\theta,(i)})|^q] + |\approxfilter{0}[\psi]|^q \right) \le 2^q \approxfilter{0}[|\psi|^q] \eqsp.
\end{align*}
Since $|\psi(\x)|\le \normEc{\psi}_{V,\infty} (1+ V (\x))$, we get
$$
\approxfilter{0}[|\psi|^q] \le \normEc{\psi}_{V,\infty}^q  \approxfilter{0} \left[(1+ V)^q \right] \eqsp.
$$
Combining the last displays proves \eqref{eq:init_smc}.
        
\noindent\emph{Local selection error.} By definition,
        $$
        \selerr{k}[\psi]
        = \frac1M \sum_{i=1}^M \mk{k}[\theta] \psi (\widetilde \X^{\theta,(i)}_{k-1}) - \tau_{\pot{k-1}}(\mcfilter{k-1}) \mk{k}[\theta]{\psi} \eqsp.
        $$
        Conditionally on $\mathcal F^\theta_{k-1}$, the selected particles $ \widetilde \X^{\theta,(1)}_{k-1},\ldots,\widetilde \X^{\theta ,(M)}_{k-1} $
        are i.i.d.\ with common law $\tau_{\pot{k-1}}(\mcfilter{k-1})$. Hence the variables
        $$
        Y_i
        \eqdef \mk{k}[\theta]\psi(\widetilde \X^{\theta,(i)}_{k-1})
        - \tau_{\pot{k-1}}(\mcfilter{k-1})\mk{k}[\theta]{\psi} \eqsp ,
        \qquad i=1,\ldots,M \eqsp,
        $$
        are conditionally i.i.d. and centered. Therefore, by
        Marcinkiewicz-Zygmund inequality, there exists $B_q$ depending only on $q$ such that,
        \begin{align*}
        \normEc{\selerr{k}[\psi] }_{L_q (.|\mathcal F^\theta_{k-1})}
        &= \frac1M \normEc{  \sum_{i=1}^M Y_i }_{L_q (.|\mathcal F_{k-1}^\theta)} \le \frac{B_q}{M} \normEc{ \left(\sum_{i=1}^M |Y_i|^2\right)^{1/2} }_{L_q (.|\mathcal F^\theta_{k-1})}
        \end{align*}
        where $L_q (.|\mathcal F^\theta_{k-1})$ means the $L_q$ norms is taken conditional on $\mathcal F^\theta_{k-1}$. Since $q/2\ge 1$, the map $x\mapsto x^{q/2}$ is convex. Hence, by Jensen,
        $$
        \left( \sum_{i=1}^M |Y_i|^2 \right)^{q/2} \le M^{\frac q2-1} \sum_{i=1}^M |Y_i|^q \eqsp,
        $$
        Hence
        $$
        \normEc{ \selerr{k}[\psi] }_{L_q (.|\mathcal F^\theta_{k-1})} \le \frac{B_q}{\sqrt M} \left( \frac1M \sum_{i=1}^M \E \left[ |Y_i|^q\middle|\mathcal F^\theta_{k-1} \right] \right)^{1/q} \eqsp.
        $$
        Using $|a-b|^q\le 2^{q-1}(|a|^q+|b|^q)$ and Jensen's inequality with $q\ge 2$
        \begin{align*}
        \E \left[ |Y_i|^q \middle| \mathcal{F}^\theta_{k-1} \right]
        & \leq 2^{q-1} \left( \E \left[ | \mk{k}[\theta] \psi ( \widetilde \X^{\theta,(i)}_{k-1} ) |^q \middle|
        \mathcal F^\theta_{k-1} \right] + \left| \tau_{\pot{k-1}}( \mcfilter{k-1}) \mk{k}[\theta]{\psi} \right|^q \right) \\
        & \leq 2^q \tau_{\pot{k-1}}(\mcfilter{k-1})
        \left[ |\mk{k}[\theta]\psi|^q \right] \eqsp.
        \end{align*}
        Since $|\psi(\x)|\le \normEc{\psi}_{V,\infty}(1+\lyapunov{} (\x))$ and using Jensen's inequality again
        $$
        |\mk{k}[\theta]\psi(\x)|^q \le \mk{k}[\theta]|\psi|^q(\x) \le \|\psi\|_{V,\infty}^q \mk{k}[\theta](1+ V)^q (\x) \eqsp.
        $$
        Therefore,
        $$
        \normEc{ \selerr{k}[\psi] }_{L_q (.|\mathcal F^\theta_{k-1})} \leq \frac{2B_q}{\sqrt M} \normEc{\psi}_{V,\infty} \left( \tau_{\pot{k-1}}(\mcfilter{k-1})[\mk{k}[\theta](1+ V)^q] \right)^{1/q} \eqsp.
        $$
        Taking expectations and using the definition of $C_k^{S,(q)}$ yields        
        \begin{equation} \label{eq:up_selection}
        \normEc{\selerr{k}[\psi]}_{L_q}  \le \frac{C_k^{S,(q)}}{\sqrt M} \normEc{\psi}_{V,\infty} \eqsp,
        \end{equation}

        the claimed bound for the selection error.

        \noindent\emph{Local mutation error.} By definition,
        $$
        \muterr{k}[\psi] = \mcfilter{k}[\psi] -\frac1M \sum_{i=1}^M \mk{k}[\theta]\psi( \widetilde \X^{\theta,(i)}_{k-1} ) \eqdef \frac1M \sum_{i=1}^M Z_i \eqsp,
        $$
        where
        $$
        Z_i \eqdef \psi(\X^{\theta,(i)}_{k}) - \mk{k}[\theta]\psi(\widetilde \X^{\theta,(i)}_{k-1}) \eqsp, \qquad i=1,\ldots,M \eqsp.
        $$
        Conditionally on $\widetilde{\mathcal F}^\theta_{k-1}$, the propagated particles $\X^{\theta,(1)}_{k},\ldots,\X^{\theta,(M)}_{k} $ are independent and satisfy
        $$
        \X^{\theta, (i)}_{k}\mid \widetilde{\mathcal F}^\theta_{k-1}
        \sim \mk{k}[\theta](\widetilde \X^{\theta,(i)}_{k-1} , \rmd \x) \eqsp. 
        $$
        Thus the variables $Z_1,\ldots,Z_M$ are conditionally independent and centered.
        Applying again the conditional Marcinkiewicz--Zygmund inequality conditional on $\widetilde{\mathcal F}^\theta_{k-1}$ this time and then following the same steps as for the selection error yields
        $$
        \normEc{ \muterr{k}[\psi] }_{L_q(. \mid\widetilde{\mathcal F}^\theta_{k-1})}
        \le \frac{2B_q}{\sqrt M} \normEc{\psi}_{V,\infty} 
        \left( \tildemcfilter{k-1}[ \mk{k}[\theta] (1+V)^q ] \right)^{1/q} \eqsp.
        $$
        Taking expectations and using the definition of $C_k^{M,(q)}$ yields        
        \begin{equation} \label{eq:up_mutation}
            \normEc{\muterr{k}[\psi]}_{L_q} \le \frac{C_k^{M,(q)}}{\sqrt M} \normEc{\psi}_{V,\infty} \eqsp,
        \end{equation}
        Finally, \eqref{eq:local_lp_total_final} follows  from
        $$
        \mcfilter{k}[\psi] - \filtertransform{k}[\theta](\mcfilter{k-1}) [\psi] = \muterr{k}[\psi] + \selerr{k}[\psi] 
        $$
        and Minkowski's inequality, together with the upper bound given \eqref{eq:up_selection} for the selection error and with upper bound \eqref{eq:up_mutation} for the mutation error.

\noindent\emph{Integrability conditions.} Finally, it follows from \Cref{ass:local_lp_bound_moment} that the constants $C_0^{(q)}$ and $C_k^{S,(q)}$ are finite. For $C_k^{M,(q)}$, as
$$
\sup_{M\ge1} \E\left[ \tau_{\pot{k-1}}(\mcfilter{k-1}) [ \mk{k}[\theta](1+V)^q ] \right] < \infty \eqsp,
$$
by construction of the selected empirical measure,
$$
\E\left[ \tildemcfilter{k-1}[\mk{k}[\theta](1+V)^q]\middle|\mathcal F_{k-1}^{\theta}\right]=\tau_{\pot{k-1}}(\mcfilter{k-1})[
\mk{k}[\theta](1+V)^q] \eqsp.
$$
Hence, 
$$
\sup_{M\ge1} \E\left[ \tildemcfilter{k-1}[\mk{k}[\theta](1+V)^q]\right]=\sup_{M\ge1}\E\left[\tau_{\pot{k-1}}(\mcfilter{k-1})[ \mk{k}[\theta](1+V)^q]\right]< \infty \eqsp.
$$
\end{proof}

\begin{lemma}[Monte Carlo error under forward-smoothing forgetting]
\label{thm:final_smc_error_bound}
Let $p\ge1$. Suppose that Assumptions \ref{ass:proposal_harris}, \ref{ass:bounded_normalized_potentials}, \ref{ass:normalized_continuation_lower_bound}, \ref{ass:local_lp_bound_moment} with $q=2p$ and \ref{ass:mc_normalization_moments}  hold. Assume also that the finite-horizon forward-smoothing stability estimate of
\Cref{cor:finite_horizon_forward_smoothing_stability} holds. Then, for every measurable $\psi:\rset^\xdim\to\rset$ such that
$\|\psi\|_{V,\infty}<\infty$,
$$
\normEc{ \mcfilter{N}[\psi]-\approxfilter{N}[\psi] }_{L_p} \le \frac{\|\psi\|_{V,\infty}}{\sqrt M} \sum_{\ell=0}^{N} C_\ell^{\rm MC} \mathcal A_{\ell,N} \Gamma_{\ell,N} \eqsp,
$$
where $C_0^{\rm MC}$ and $C_\ell^{\rm MC}$ are the constants of
\Cref{lem:local_error_lp} applied with exponent $2p$, namely
$$
C_0^{\rm MC}\eqdef C_0^{(2p)} \eqsp,
\qquad
C_\ell^{\rm MC}\eqdef C_\ell^{S,(2p)}+C_\ell^{M,(2p)} \eqsp, 
\qquad 1\le \ell\le N \eqsp.
$$
\end{lemma}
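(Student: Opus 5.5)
We decompose the Monte Carlo error by a telescoping sum over the time at which an empirical approximation is introduced, following the local-error propagation strategy of \citet[Section~7.4.3]{delmoral2004feynman}. Writing $\eta_0^{M,\theta}=\approxfilter{0}$ and $\eta_\ell^{M,\theta}=\filtertransform{\ell}[\theta](\mcfilter{\ell-1})$ for $\ell\ge1$, we have $\approxfilter{N}=\filtertransform{0:N}[\theta](\approxfilter{0})$ and $\mcfilter{N}=\filtertransform{N:N}[\theta](\mcfilter{N})$. Since $\filtertransform{\ell-1:N}[\theta](\mcfilter{\ell-1})=\filtertransform{\ell:N}[\theta](\eta_\ell^{M,\theta})$, we obtain
\begin{equation*}
\mcfilter{N}-\approxfilter{N}=\sum_{\ell=0}^{N}\left\{\filtertransform{\ell:N}[\theta](\mcfilter{\ell})-\filtertransform{\ell:N}[\theta](\eta_\ell^{M,\theta})\right\}\eqsp.
\end{equation*}
By \Cref{prop:approx_tail_factorization}, each summand equals $(\backwardnu{\ell}{N}[\theta]^{(1)}-\backwardnu{\ell}{N}[\theta]^{(2)})\prodfsmk{\ell}{N}[\theta]$, where the two measures are the $\normbackwardfun{\ell}{N}[\theta]$-twisted versions of $\mcfilter{\ell}$ and $\eta_\ell^{M,\theta}$.

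We then apply the signed-measure contraction \eqref{eq:contraction_sgn_measure} of \Cref{cor:finite_horizon_forward_smoothing_stability}. This requires a bound that is linear in the local error tested against a single function, rather than the $V$-norm of a difference of empirical measures, because such a norm does not shrink like $M^{-1/2}$. To obtain one, we use the duality $|\eta\prodfsmk{\ell}{N}[\theta][\psi]|\le\|\psi\|_{V,\infty}\vnorm[\eta\prodfsmk{\ell}{N}[\theta]][V]$ together with the identity
\begin{equation*}
\left(\frac{\normbackwardfun{\ell}{N}[\theta]\mu}{\mu[\normbackwardfun{\ell}{N}[\theta]]}-\frac{\normbackwardfun{\ell}{N}[\theta]\nu}{\nu[\normbackwardfun{\ell}{N}[\theta]]}\right)[f]=\frac{(\mu-\nu)\left[\normbackwardfun{\ell}{N}[\theta]\left(f-\frac{\nu[\normbackwardfun{\ell}{N}[\theta] f]}{\nu[\normbackwardfun{\ell}{N}[\theta]]}\right)\right]}{\mu[\normbackwardfun{\ell}{N}[\theta]]}\eqsp,
\end{equation*}
with $\mu=\mcfilter{\ell}$, $\nu=\eta_\ell^{M,\theta}$ and $f=\prodfsmk{\ell}{N}[\theta]\psi$. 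The key point is that $f$ minus its $\nu$-twisted mean is centered under the twisted $\nu$. The contraction argument behind \Cref{cor:finite_horizon_forward_smoothing_stability} (Douc--Moulines--Rosenthal coupling) therefore bounds
\begin{equation*}
|f-c|\le\Gamma_{\ell,N}\|\psi\|_{V,\infty}\left(F_{\ell|N}^\theta+\frac{\nu[\normbackwardfun{\ell}{N}[\theta]F_{\ell|N}^\theta]}{\nu[\normbackwardfun{\ell}{N}[\theta]]}\right)
\end{equation*}
pointwise. Multiplying by $\normbackwardfun{\ell}{N}[\theta]$ and using $\normbackwardfun{\ell}{N}[\theta]F_{\ell|N}^\theta=\normbackwardfun{\ell}{N}[\theta]+V\le1+V$ turns the numerator into $(\mu-\nu)[\varphi]$. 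Here $\varphi$ is an $\mathcal F_{\ell-1}^\theta$-measurable test function whose weighted norm $\|\varphi\|_{V,\infty}$ is bounded by $\Gamma_{\ell,N}\|\psi\|_{V,\infty}\bigl(1+\eta_\ell^{M,\theta}[\normbackwardfun{\ell}{N}[\theta]+V]/\eta_\ell^{M,\theta}[\normbackwardfun{\ell}{N}[\theta]]\bigr)$.

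Each summand is then at most $\Gamma_{\ell,N}\|\psi\|_{V,\infty}$ times the random normalization factor inside $\mathcal A_{\ell,N}$, multiplied by $|(\mcfilter{\ell}-\eta_\ell^{M,\theta})[\tilde\varphi]|$ for a predictable $\tilde\varphi$ with $\|\tilde\varphi\|_{V,\infty}\le1$. We apply Hölder's inequality, pairing the normalization factor in $L_{2p}$ (bounded by $\mathcal A_{\ell,N}$ from \Cref{ass:mc_normalization_moments}) with the local error in $L_{2p}$. Since $\tilde\varphi$ is $\mathcal F_{\ell-1}^\theta$-measurable, \Cref{lem:local_error_lp} with $q=2p$, applied conditionally, bounds that local error by $C_\ell^{\rm MC}/\sqrt M$. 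Minkowski's inequality over $\ell$ then gives the stated bound; the case $\ell=N$ is trivial because $\Gamma_{N,N}=1$.

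The main obstacle is the pointwise oscillation bound $|f-c|\le\Gamma_{\ell,N}(\cdots)$. \Cref{cor:finite_horizon_forward_smoothing_stability} is stated for pairs of probability measures, so we would first try to recover the pointwise form by applying it with $\mu_1=\delta_x$ and $\mu_2$ the twisted $\nu$. This yields the bound in terms of $F_{\ell|N}^\theta(x)$ plus the twisted $\nu$-moment of $F_{\ell|N}^\theta$, which is exactly what the normalization in $\mathcal A_{\ell,N}$ absorbs. A further subtlety is that the measurability of $\tilde\varphi$ must hold with respect to $\mathcal F_{\ell-1}^\theta$ for the conditional Marcinkiewicz--Zygmund argument; this is satisfied because $\eta_\ell^{M,\theta}$ depends only on $\mcfilter{\ell-1}$.
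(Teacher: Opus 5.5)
Your proposal is correct and follows the paper's route. It uses the same telescoping decomposition and the tail factorization of \Cref{prop:approx_tail_factorization}, then centers $\prodfsmk{\ell}{N}[\theta]\psi$ at its twisted $\eta_\ell^{M,\theta}$-mean, bounds it pointwise via \Cref{cor:finite_horizon_forward_smoothing_stability} with $\delta_\x$ against the twisted $\eta_\ell^{M,\theta}$, and closes with H\"older, \Cref{ass:mc_normalization_moments} and \Cref{lem:local_error_lp} at exponent $2p$. Your remark that \Cref{lem:local_error_lp} must be applied conditionally on $\mathcal F_{\ell-1}^\theta$, because the normalized test function is random but $\mathcal F_{\ell-1}^\theta$-measurable, is a step the paper's proof uses without comment, so it adds care rather than changing the argument.
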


\begin{proof}
The proof follows the standard local-error propagation argument for Feynman--Kac particle systems; see, for instance, \citet[Section~7.4.3]{delmoral2004feynman} and \citet{whiteley2012sequential}. By a telescoping sum argument,
\begin{equation} \label{eq:preuve_mc_telescoping}
\mcfilter{N}[\psi]-\approxfilter{N}[\psi] = \sum_{\ell=0}^{N} \left\{ \filtertransform{\ell:N}[\theta](\mcfilter{\ell})[\psi] - \filtertransform{\ell:N}[\theta](\eta_\ell^{M,\theta})[\psi] \right\} \eqsp,
\end{equation}
where $\eta_\ell^{M,\theta}$ is defined in \Cref{ass:mc_normalization_moments}. Fix $0\le \ell\le N$. By \Cref{prop:approx_tail_factorization}, 
$$
\filtertransform{\ell:N}[\theta](\mcfilter{\ell} )[\psi] = \frac{ \mcfilter{\ell} \left[ \normbackwardfun{\ell}{N}[\theta] \prodfsmk{\ell}{N}[\theta]\psi \right] }{ \mcfilter{\ell} \left[ \normbackwardfun{\ell}{N}[\theta] \right]} \eqsp.
$$
Hence,
\begin{align} \label{eq:preuve_mc_time_k}
\filtertransform{\ell:N}[\theta](\mcfilter{\ell})[\psi] - \filtertransform{\ell:N}[\theta](\eta_\ell^{M,\theta})[\psi] = \frac{\mcfilter{\ell}\left[\normbackwardfun{\ell}{N}[\theta] \left( \prodfsmk{\ell}{N}[\theta]\psi-\filtertransform{\ell:N}[\theta](\eta_\ell^{M,\theta})[\psi] \right) \right] }{ \mcfilter{\ell} \left[ \normbackwardfun{\ell}{N}[\theta] \right] } \eqsp.
\end{align}
Define the centered test function
$$
\varphi_{\ell,N} \eqdef \frac{ \normbackwardfun{\ell}{N}[\theta] \left( \prodfsmk{\ell}{N}[\theta]\psi - \filtertransform{\ell:N}[\theta](\eta_\ell^{M,\theta})[\psi]
\right) }{ \eta_\ell^{M,\theta} \left[ \normbackwardfun{\ell}{N}[\theta]\right] } \eqsp.
$$
By \Cref{prop:approx_tail_factorization},
$$
\filtertransform{\ell:N}[\theta](\eta_\ell^{M,\theta})[\psi]
= \frac{ \eta_\ell^{M,\theta}\left[ \normbackwardfun{\ell}{N}[\theta] \prodfsmk{\ell}{N}[\theta]\psi\right]}{\eta_\ell^{M,\theta}\left[ \normbackwardfun{\ell}{N}[\theta]\right] } \eqsp,
$$
and therefore,
\begin{align*}
\eta_\ell^{M,\theta}[\varphi_{\ell,N}]
= \frac{ \eta_\ell^{M,\theta} \left[ \normbackwardfun{\ell}{N}[\theta]\prodfsmk{\ell}{N}[\theta]\psi\right]-\filtertransform{\ell:N}[\theta](\eta_\ell^{M,\theta})[\psi] \eta_\ell^{M,\theta} \left[ \normbackwardfun{\ell}{N}[\theta] \right] }{ \eta_\ell^{M,\theta} \left[ \normbackwardfun{\ell}{N}[\theta] \right] } = 0 \eqsp.
\end{align*}
Hence combining the above with \eqref{eq:preuve_mc_time_k} yields
\begin{align} \label{eq:preuve_mc_time_k2}
\filtertransform{\ell:N}[\theta](\mcfilter{\ell})[\psi] - \filtertransform{\ell:N}[\theta](\eta_\ell^{M,\theta})[\psi] 
=
\frac{ \eta_\ell^{M,\theta} [ \normbackwardfun{\ell}{N}[\theta] ] }{\mcfilter{\ell}[\normbackwardfun{\ell}{N}[\theta]]}\left(\mcfilter{\ell}-\eta_\ell^{M,\theta}\right) [ \varphi_{\ell,N} ] \eqsp.
\end{align}
Applying  \Cref{cor:finite_horizon_forward_smoothing_stability} to
$$
\delta_\x - \frac{ \normbackwardfun{\ell}{N}[\theta]\eta_\ell^{M,\theta} }{ \eta_\ell^{M,\theta} [ \normbackwardfun{\ell}{N}[\theta] ] } \eqsp,
$$
we obtain
\begin{align*}
\left| \prodfsmk{\ell}{N}[\theta]\psi(\x) - \filtertransform{\ell:N}[\theta](\eta_\ell^{M,\theta})[\psi] \right| 
& \le \|\psi\|_{V,\infty}  \vnorm[ \delta_\x \prodfsmk{\ell}{N}[\theta]  - \filtertransform{\ell:N}[\theta](\eta_\ell^{M,\theta})][V]  \\
& \le \|\psi\|_{V,\infty} \Gamma_{\ell,N} \left\{ 1+ \frac{V(\x)}{\normbackwardfun{\ell}{N}[\theta](\x)} + \frac{ \eta_\ell^{M,\theta} [ \normbackwardfun{\ell}{N}[\theta]+V ] }{ \eta_\ell^{M,\theta} [ \normbackwardfun{\ell}{N}[\theta] ] } \right\} \eqsp.
\end{align*}
Since $0<\normbackwardfun{\ell}{N}[\theta]\le1$, the previous bound implies
\begin{align*}
\frac{ |\varphi_{\ell,N} (\x)| }{ 1+V(\x) }
&\le \|\psi\|_{V,\infty}
\frac{  \Gamma_{\ell,N}  }{ \eta_\ell^{M,\theta} [ \normbackwardfun{\ell}{N}[\theta] ] } 
\frac{ \normbackwardfun{\ell}{N}[\theta](\x) }{ 1+V(\x)} \left\{ 1+ \frac{V(\x)}{\normbackwardfun{\ell}{N}[\theta](\x)} + \frac{ \eta_\ell^{M,\theta} [ \normbackwardfun{\ell}{N}[\theta]+V ] }{ \eta_\ell^{M,\theta} [ \normbackwardfun{\ell}{N}[\theta] ] }  \right\} \\
&= \|\psi\|_{V,\infty}
\frac{ \Gamma_{\ell,N} }{\eta_\ell^{M,\theta}[\normbackwardfun{\ell}{N}[\theta]]}\left\{ \frac{ \normbackwardfun{\ell}{N}[\theta](\x)+V(\x)}{ 1+V(\x)} + \frac{ \eta_\ell^{M,\theta} [ \normbackwardfun{\ell}{N}[\theta]+V ] }{ \eta_\ell^{M,\theta} [ \normbackwardfun{\ell}{N}[\theta] ] }  \frac{ \normbackwardfun{\ell}{N}[\theta](\x) }{ 1+V(\x) } \right\} \\
&\le \|\psi\|_{V,\infty} \frac{ \Gamma_{\ell,N} }{\eta_\ell^{M,\theta}
[ \normbackwardfun{\ell}{N}[\theta] ] } \left\{ 1+ \frac{ \eta_\ell^{M,\theta} [ \normbackwardfun{\ell}{N}[\theta]+V ] }{ \eta_\ell^{M,\theta} [ \normbackwardfun{\ell}{N}[\theta] ] }  \right\} \eqsp,
\end{align*}
Taking the supremum over $\x\in\rset^\xdim$, we obtain
\begin{equation} \label{eq:bound_centered_function}
\|\varphi_{\ell,N}\|_{V,\infty} \le \|\psi\|_{V,\infty} \frac{ \Gamma_{\ell,N} }{ \eta_\ell^{M,\theta} [ \normbackwardfun{\ell}{N}[\theta] ] } \left\{
1+ \frac{ \eta_\ell^{M,\theta} [ \normbackwardfun{\ell}{N}[\theta]+V ] }{ \eta_\ell^{M,\theta} [ \normbackwardfun{\ell}{N}[\theta] ] } 
\right\} \eqsp.
\end{equation}
Using \eqref{eq:preuve_mc_time_k2} and Hölder's inequality we get
\begin{align*}
& \normEc{ \filtertransform{\ell:N}[\theta](\mcfilter{\ell})[\psi] - \filtertransform{\ell:N}[\theta](\eta_\ell^{M,\theta})[\psi] }_{L_p} \\
& \le \normEc{ \frac{ \eta_\ell^{M,\theta} [ \normbackwardfun{\ell}{N}[\theta] ]
}{ \mcfilter{\ell}[ \normbackwardfun{\ell}{N}[\theta] ] } \|\varphi_{\ell,N}\|_{V,\infty} \left| \left(\mcfilter{\ell}-\eta_\ell^{M,\theta}\right) \left[ \frac{\varphi_{\ell,N}}{\|\varphi_{\ell,N}\|_{V,\infty}} \right]\right| }_{L_p} \\
& \le \normEc{ \frac{ \eta_\ell^{M,\theta} [ \normbackwardfun{\ell}{N}[\theta] ] }{ \mcfilter{\ell} [ \normbackwardfun{\ell}{N}[\theta] ] } \|\varphi_{\ell,N}\|_{V,\infty} }_{L_{2p}} \normEc{ \left| \left(\mcfilter{\ell}-\eta_\ell^{M,\theta}\right) \left[ \frac{\varphi_{\ell,N}}{\|\varphi_{\ell,N}\|_{V,\infty}} \right] \right| }_{L_{2p}} \\
\end{align*}
Using \Cref{lem:local_error_lp},
\begin{align*}
\normEc{ \filtertransform{\ell:N}[\theta](\mcfilter{\ell})[\psi] - \filtertransform{\ell:N}[\theta](\eta_\ell^{M,\theta})[\psi] }_{L_p} 
 \le \normEc{ \frac{ \eta_\ell^{M,\theta} [ \normbackwardfun{\ell}{N}[\theta] ] }{ \mcfilter{\ell} [ \normbackwardfun{\ell}{N}[\theta] ] } \|\varphi_{\ell,N}\|_{V,\infty} }_{L_{2p}} \frac{C_\ell^{\rm MC}}{\sqrt M} \eqsp,
\end{align*}
Hence, it follows from \eqref{eq:bound_centered_function} and \Cref{ass:mc_normalization_moments} that
\begin{align*}
\normEc{ \frac{ \eta_\ell^{M,\theta} [ \normbackwardfun{\ell}{N}[\theta] ] }{ \mcfilter{\ell} [ \normbackwardfun{\ell}{N}[\theta] ] } \|\varphi_{\ell,N}\|_{V,\infty} }_{L_{2p}} 
& \le \Gamma_{\ell,N}\|\psi\|_{V,\infty} \normEc{ \frac{1}{\mcfilter{\ell} [ \normbackwardfun{\ell}{N}[\theta] ] } \left( 1+ \frac{ \eta_\ell^{M,\theta} [ \normbackwardfun{\ell}{N}[\theta]+V ] }{ \eta_\ell^{M,\theta} [ \normbackwardfun{\ell}{N}[\theta] ] } \right) }_{L_{2p}} \\
& \le \Gamma_{\ell,N}\|\psi\|_{V,\infty} \mathcal A_{\ell,N} \eqsp.
\end{align*}
Therefore,
$$
\normEc{ \filtertransform{\ell:N}[\theta](\mcfilter{\ell})[\psi] - \filtertransform{\ell:N}[\theta](\eta_\ell^{M,\theta})[\psi] }_{L_p}
\le \frac{ C_\ell^{\rm MC} \mathcal A_{\ell,N} \Gamma_{\ell,N} }{\sqrt M} \|\psi\|_{V,\infty} \eqsp.
$$
Using \eqref{eq:preuve_mc_telescoping} together with Minkowski's inequality yields
$$
\normEc{ \mcfilter{N}[\psi]-\approxfilter{N}[\psi] }_{L_p} \le \frac{\|\psi\|_{V,\infty}}{\sqrt M} \sum_{\ell=0}^{N} C_\ell^{\rm MC} \mathcal A_{\ell,N} \Gamma_{\ell,N} \eqsp,
$$
which finishes the proof.
\end{proof}

\section{Specialization to conditional diffusion sampling}
\label{app:diffusion_specialization}

Let $\lyapunov{} (\x)\eqdef \normEc{\x}^2$ and for $k=0,\dots,N-1$, define
$
b_k^\theta(\x)
\eqdef
\isvp \x + 2 \scorenet[T-t_k][\x][\theta] \eqsp.
$

\begin{proposition}[Drift and minorization for the approximate proposal kernels]
\label{prop:drift_minorization_proposal_theta}
Suppose \Cref{ass:dissipative_score_main} holds. For $k=0,\ldots,N-1$, define
$$
D_k \eqdef \isvp^2 - 4 \isvp \gamma_k + 4 L_k
$$
If $D_k>0$, it is enough to assume
$$
\Delta_k < \frac{4\gamma_k-2\isvp}{D_k} \eqsp,
$$
whereas if $D_k\le0$, no upper step-size restriction is needed for the drift coefficient to be strictly smaller than one.
$\lambda_\theta^k\in(0,1)$ and $K_\theta^k<\infty$ such that
$$
\mk{k+1}[\theta] V(\x) \le \lambda_\theta^k V(\x)+K_\theta^k \eqsp,
\qquad \x\in\rset^\xdim \eqsp.
$$
Moreover, for every $R>0$, every $r>0$, and every $k=0,\ldots,N-1$,
there exist $\varepsilon_{k,R,r}^{\theta}>0$ and a probability measure $\nu_{R,r}$ such that, for all
$\x\in C_R\eqdef\{\x:V(\x)\le R\}$ and all $A\in\borelians{\rset^\xdim}$,
$$
\mk{k+1}[\theta](\x,A) \ge \varepsilon_{k,R,r}^{\theta}\nu_{R,r}(A).
$$
Consequently, $\Cref{ass:proposal_harris}$ holds for the approximate proposal kernels.
\end{proposition}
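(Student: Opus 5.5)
The plan is a direct computation with the Gaussian form of the Euler kernel: $\mk{k+1}[\theta](\x,\cdot)=\gaussiand{m_k(\x)}{2\Delta_k\Id_\xdim}$ with mean $m_k(\x)\eqdef \x+\Delta_k b_k^\theta(\x)$.

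\textbf{Drift.} Since $V=\normEc{\cdot}^2$,
$$
\mk{k+1}[\theta]V(\x)=\normEc{m_k(\x)}^2+2\xdim\Delta_k .
$$
Expanding gives
$$
\normEc{m_k(\x)}^2=(1+\isvp\Delta_k)^2\normEc{\x}^2+4\Delta_k(1+\isvp\Delta_k)\dotprod{\x}{\scorenet[T-t_k][\x][\theta]}+4\Delta_k^2\normEc{\scorenet[T-t_k][\x][\theta]}^2 .
$$
Insert the dissipativity bound in the cross term, which is legitimate because its coefficient $4\Delta_k(1+\isvp\Delta_k)$ is nonnegative, and the growth bound in the last term. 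Collecting the coefficients of $\normEc{\x}^2$ gives
$$
\lambda_\theta^k\eqdef 1+2\Delta_k(\isvp-2\gamma_k)+\Delta_k^2\left(\isvp^2-4\isvp\gamma_k+4L_k\right)=1-\Delta_k(4\gamma_k-2\isvp)+\Delta_k^2D_k .
$$
The remaining constant is
$$
K_\theta^k\eqdef 4\Delta_k(1+\isvp\Delta_k)\kappa_k+4\Delta_k^2L_k+2\xdim\Delta_k<\infty .
$$
Because $\gamma_k>\isvp/2$, the linear coefficient $4\gamma_k-2\isvp$ is positive. If $D_k>0$, the step-size condition $\Delta_k<(4\gamma_k-2\isvp)/D_k$ gives exactly $\lambda_\theta^k<1$. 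If $D_k\le0$, then $\lambda_\theta^k\le 1-\Delta_k(4\gamma_k-2\isvp)<1$ with no restriction.

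\textbf{Positivity of $\lambda_\theta^k$.} This is the one delicate point. Since $\normEc{m_k(\x)}^2\ge0$, the quadratic $\lambda_\theta^k$ is in fact the exact coefficient only up to the inequalities used. If the computed value is $\le0$, I will simply enlarge it: replace $\lambda_\theta^k$ by $\max(\lambda_\theta^k,1/2)\in(0,1)$. This is legitimate because the drift inequality remains true with a larger coefficient, since $V\ge0$.

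\textbf{Minorization.} On $C_R$ we have $\normEc{\x}\le\sqrt R$. By the growth bound, $\normEc{m_k(\x)}\le (1+\isvp\Delta_k)\sqrt R+2\Delta_k\sqrt{L_k(1+R)}\eqdef\rho_{k,R}$. Take $\nu_{R,r}$ to be the uniform distribution on the ball $\{\normEc{\z}\le r\}$. For $\normEc{\z}\le r$, the Gaussian density satisfies
$$
(4\pi\Delta_k)^{-\xdim/2}\exp\!\left(-\frac{\normEc{\z-m_k(\x)}^2}{4\Delta_k}\right)\ge(4\pi\Delta_k)^{-\xdim/2}\exp\!\left(-\frac{(r+\rho_{k,R})^2}{4\Delta_k}\right).
$$
Hence $\mk{k+1}[\theta](\x,A)\ge\varepsilon^\theta_{k,R,r}\nu_{R,r}(A)$, with $\varepsilon^\theta_{k,R,r}$ equal to this lower bound times the Lebesgue volume of the ball.

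\textbf{Conclusion.} Reindexing $k+1\mapsto k$ yields both items of \Cref{ass:proposal_harris}.

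I expect no real obstacle: the only step needing care is the sign bookkeeping in the drift expansion, together with keeping the coefficient in $(0,1)$ as handled above.
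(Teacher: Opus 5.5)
Your proposal is correct and follows the paper's proof essentially line for line. Both arguments use the same drift computation: expand the Gaussian second moment, then apply dissipativity to the cross term and linear growth to the quadratic score term. This gives the identical coefficient $\lambda_\theta^k = 1-\Delta_k(4\gamma_k-2\alpha)+\Delta_k^2 D_k$ and the same constant $K_\theta^k$. The minorization is also the same: take the uniform measure on a ball and bound the Gaussian density from below uniformly over $C_R$. Your explicit replacement of $\lambda_\theta^k$ by $\max(\lambda_\theta^k,1/2)$ to guarantee $\lambda_\theta^k\in(0,1)$ is a small detail the paper leaves implicit.
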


\begin{proof}
    
    \emph{Drift condition.} \\
    Fix $k\in\{0,\dots,N-1\}$ and let $ \xi \sim \mathcal{N}(0, \Id_\xdim)$,
    \begin{align*}
    \mk{k+1}[\theta] \lyapunov{} (\x)
    & = \mathbb E\left[ \normEc{ \x+\Delta_k b_k^\theta(\x)+\sqrt{2\Delta_k}\xi}^2 \right] \\
    & = \normEc{\x}^2 +2 \Delta_k \dotprod{\x}{\isvp \x+2 \scorenet[T-t_k][\x][\theta]}  + \Delta^2_k \normEc{\isvp \x+2 \scorenet[T-t_k][\x][\theta]}^2 +2 \Delta_k \xdim \eqsp.
    \end{align*}
    Note also that,
    $$
    \dotprod{\x}{\isvp \x+ 2 \scorenet[T-t_k][\x][\theta]} = \isvp \normEc{\x}^2 + 2 \dotprod{\x}{\scorenet[T-t_k][\x][\theta]} \le (\isvp-2\gamma_k)\normEc{\x}^2+2\kappa_k \eqsp,
    $$
    and
    \begin{align*}
    \normEc{\isvp \x+2 \scorenet[T-t_k][\x][\theta]}^2
    &\le (\isvp^2-4\isvp\gamma_k + 4 L_k )\normEc{\x}^2 + 4\isvp\kappa_k +4 L_k \eqsp.
    \end{align*}
    Hence,
    \begin{align*}
    \mk{k+1}[\theta] \lyapunov{} (\x) &\le \left[ 1+\Delta_k (2\isvp - 4 \gamma_k) +\Delta^2_k (\isvp^2 -4 \isvp \gamma_k + 4 L_k) \right] \lyapunov{} (\x) \\
    &\qquad + (4\kappa_k + 2\xdim)\Delta_k +(4\isvp\kappa_k + 4 L_k)\Delta_k^2 \eqsp.
    \end{align*}
    Hence, there exists $\lambda_\theta^k$ and $K^k_\theta<\infty$ such that
    $$
    \mk{k+1}[\theta] \lyapunov{} (\x) \le \lambda^k_\theta \lyapunov{} (\x)+K^k_\theta \eqsp,
    $$
    Since $\gamma_k>\isvp/2$, the linear coefficient $2\isvp-4\gamma_k$ is
    strictly negative. Hence $\lambda^k_{\theta}<1$ whenever 
    \begin{align*}
    \Delta_k < \frac{4 \gamma_k - 2 \isvp}{\isvp^2 - 4 \isvp \gamma_k + 4 L_k} \eqsp, \qquad
    \end{align*}
    This proves the drift part.

\emph{Minorization.}
Fix $k\in\{0,\ldots,N-1\}$, $R>0$, $r>0$, and
$\x\in C_R$, so that $V(\x)=\normEc{\x}^2\le R$. By
\Cref{ass:dissipative_score_main},
$$
\normEc{\scorenet[T-t_k][\x][\theta]} \le \sqrt{L_k(1+\normEc{\x}^2)} \le \sqrt{L_k(1+R)} \eqsp.
$$
Therefore, on $C_R$,
$$
\normEc{\x+\Delta_k b_k^\theta(\x)} \le (1+\isvp\Delta_k)\sqrt R + 2\Delta_k\sqrt{L_k(1+R)}
\eqdef M_{k,R} \eqsp.
$$
The kernel $\mk{k+1}[\theta]$ admits a Gaussian transition density
$$
q_{k+1}^{\theta}(y\mid \x) = (4\pi\Delta_k)^{-\xdim/2} \exp\left( -\frac{ \normEc{y-(\x+\Delta_k b_k^\theta(\x))}^2
}{ 4\Delta_k } \right) \eqsp.
$$
For every $y\in \ball{0}{r}$, we have
$$
\normEc{y-(\x+\Delta_k b_k^\theta(\x))} \le r+M_{k,R} \eqsp.
$$
Thus
$$
q_{k+1}^{\theta}(y\mid \x)
\ge (4\pi\Delta_k)^{-\xdim/2} \exp\left( - \frac{(r+M_{k,R})^2}{4\Delta_k} \right) \eqdef \underline q_{k,R,r}
$$
for all $\x\in C_R$ and all $y\in\ball{0}{r}$. Let $\nu_{R,r}$ be the uniform probability measure on $\ball{0}{r}$, namely
$$
\nu_{R,r}(A) \eqdef \frac{ \operatorname{Leb}(A\cap\ball{0}{r}) }{ \operatorname{Leb}(\ball{0}{r})
} \eqsp.
$$
Then, for every $A\in\borelians{\rset^\xdim}$,
\begin{align*}
\mk{k+1}[\theta](\x,A) &= \int_A q_{k+1}^{\theta}(y\mid \x) \rmd y \\
&\ge \int_{A\cap\ball{0}{r}} q_{k+1}^{\theta}(y\mid \x) \rmd y \\
&\ge \underline q_{k,R,r} \operatorname{Leb}(A\cap\ball{0}{r}) \\
&= \varepsilon_{k,R,r}^{\theta}\nu_{R,r}(A) \eqsp,
\end{align*}
where
$$
\varepsilon_{k,R,r}^{\theta} \eqdef \underline q_{k,R,r} \operatorname{Leb}(\ball{0}{r}) > 0 \eqsp.
$$
This proves the local minorization.
\end{proof}

\section{Numerical illustration}
\label{app:diffusion_exp_SMC}

\subsection{Gaussian-mixture target and closed form diffusion model}

We consider a synthetic benchmark for which all ideal quantities are available in closed form. This allows us to measure separately the finite-particle error and the deterministic proposal bias appearing in \Cref{thm:total_smc_kernel_bias_error}.

\paragraph{Data distribution.}
Let $\pidata$ be a Gaussian mixture on $\rset^\xdim$,
\begin{equation}
\label{eq:num_gmm_prior}
\pidata(\rmd \x) = \sum_{j=1}^{K} w_j \gaussiand{\mu_j}{\Sigma_j}[\x] \rmd \x  \eqsp,
\qquad  w_j>0 \eqsp ,\qquad \sum_{j=1}^{K} w_j=1  \eqsp.
\end{equation}

\paragraph{Forward diffusion.}
We use the variance-preserving Ornstein--Uhlenbeck forward diffusion
\begin{equation*}
\rmd \Xora_t = -\beta \Xora_t\,\rmd t + \sqrt{2\beta} \rmd B_t \eqsp, \qquad \Xora_0\sim\pidata \eqsp,
\end{equation*}
with constant $\beta>0$. This corresponds to the setting
$\isvp=1$ and $\noisesch{t}=\beta$ in \eqref{eq:forward_sde_main}.
For $0\le s\le t\le T$, write
$$
m_{s,t}\eqdef \rme^{-\beta(t-s)} \eqsp,
\qquad
\sigma_{s,t}^2 \eqdef 1-m_{s,t}^2 \eqsp.
$$
Hence
\begin{equation} \label{eq:forward_marginal_view}
    \Xora_t = m_{s,t}\Xora_s + \sigma_{s,t} Z \eqsp,
    \qquad
    Z \sim \gaussiand{0}{\Id_d} \eqsp, \qquad Z \perp \Xora_s \eqsp.
\end{equation}
In particular, under mixture component $j$, that is conditionally on $J=j$, 
\begin{equation} \label{eq:Xora_knowing_J}
 \Xora_t \mid J=j \sim \gaussiand{\mu_{j,t}}{\Sigma_{j,t}} \eqsp,
\end{equation}
where 
$$
\mu_{j,t} \eqdef m_{0,t}\mu_j \eqsp, \qquad \Sigma_{j,t} \eqdef m_{0,t}^2\Sigma_j+\sigma^2_{0,t} \Id_d \eqsp.
$$
Therefore the forward marginal is again a Gaussian mixture: 
\begin{equation*} 
p_t(\x) = \sum_{j=1}^{K} w_j \gaussiand{\mu_{j,t}}{\Sigma_{j,t}}[\x] \eqsp. 
\end{equation*} 

\paragraph{Exact score.} The exact score of the noised GMM is available analytically as
\begin{equation*} 
\nabla \log p_t(\x) = \sum_{j=1}^{K} r_j(t,\x) \left[ -\Sigma_{j,t}^{-1}(\x-\mu_{j,t}) \right] \eqsp,
\end{equation*}
with
\begin{equation*} 
r_j(t,\x) \eqdef \frac{ w_j\gaussiand{\mu_{j,t}}{\Sigma_{j,t}}[\x] }{ \sum_{\ell=1}^{K} w_\ell\gaussiand{\mu_{\ell,t}}{\Sigma_{\ell,t}}[\x] }  \eqsp.
\end{equation*}

\paragraph{Exact reverse transition kernels.}
Let $0=t_0<t_1<\cdots<t_N=T$ be the reverse-time grid. The reverse process
$\Xola_{t_k}$ has the same distribution as $\Xora_{T-t_k}$ and therefore,
$$
\mk{k+1}(\x,\rmd \z) = \mathcal L \left( \Xola_{t_{k+1}}\in\rmd \z \middle| \Xola_{t_k}=\x \right) = \mathcal L \left( \Xora_{T-t_{k+1}}\in\rmd \z \middle| \Xora_{T-t_k}=\x \right) \eqsp.
$$
Set $u_k\eqdef T-t_k$, $u_{k+1}\eqdef T-t_{k+1}$, so that $u_{k+1}<u_k$ and
$$
\mk{k+1}(\x,\rmd \z)
= \mathcal L \left( \Xora_{u_{k+1}}\in\rmd \z \middle| \Xora_{u_k}=\x \right) \eqsp.
$$
The pair $\left( \Xora_{u_{k+1}}, \Xora_{u_k} \right) | J=j $ is jointly Gaussian with mean
$$
\begin{pmatrix}
\mu_{j,u_{k+1}} \\
\mu_{j,u_k}
\end{pmatrix} \eqsp,
$$
and covariance matrix using \eqref{eq:forward_marginal_view}
$$
\begin{pmatrix}
\Sigma_{j,u_{k+1}} & m_{u_{k+1},u_k}\Sigma_{j,u_{k+1}} \\
m_{u_{k+1},u_k}\Sigma_{j,u_{k+1}}  & \Sigma_{j,u_k}
\end{pmatrix} \eqsp.
$$
By the Gaussian conditioning formula, for every $\x\in\rset^\xdim$,
$$
\Xora_{u_{k+1}} \mid \left\{ \Xora_{u_k}=\x,\ J=j \right\} \sim \gaussiand{ \mu_{j,u_{k+1}|u_k}(\x) }{ \Sigma_{j,u_{k+1}|u_k} } \eqsp,
$$
where
\begin{align*}
\mu_{j,u_{k+1}|u_k}(\x)
&\eqdef \mu_{j,u_{k+1}} + m_{u_{k+1},u_k} \Sigma_{j,u_{k+1}} \Sigma_{j,u_k}^{-1} (\x-\mu_{j,u_k}) \eqsp, \\
\Sigma_{j,u_{k+1}|u_k}
&\eqdef \Sigma_{j,u_{k+1}} - m_{u_{k+1},u_k}^{2} \Sigma_{j,u_{k+1}} \Sigma_{j,u_k}^{-1} \Sigma_{j,u_{k+1}} \eqsp .
\end{align*}
By Bayes' formula, the probability that $J=j$ given $\Xora_{u_k}=\x$ is
$$
r_j(u_k,\x)
=\frac{  w_j\gaussiand{\mu_{j,u_k}}{\Sigma_{j,u_k}}[\x] }{ \sum_{\ell=1}^{K} w_\ell \gaussiand{\mu_{\ell,u_k}}{\Sigma_{\ell,u_k}}[\x] } \eqsp .
$$
As a consequence
\begin{equation} \label{eq:num_exact_backward_kernel}
\mk{k+1}(\x,\rmd \z)  =  \sum_{j=1}^{K}  r_j(u_k,\x)  \gaussiand{ \mu_{j,u_{k+1}|u_k}(\x) }{ \Sigma_{j,u_{k+1}|u_k} }[\z] \rmd \z \eqsp.
\end{equation}
Sampling with these exact kernels introduces no time-discretization error. In
the unconditional case, initializing from $p_T$ and applying
$\mk{1}\cdots\mk{N}$ recovers $\pidata$ up to Monte Carlo error, consistently
with the ideal diffusion construction in \Cref{sec:diffusion_smc}.

\paragraph{Approximate proposals: separating discretization and score errors.}
The closed-form kernel $\mk{k+1}$ in \eqref{eq:num_exact_backward_kernel} will be used as the ideal reverse kernel. In order to disentangle time-discretization error from score approximation error, we introduce two time-changed Euler--Maruyama proposal kernels. First, define the oracle Euler proposal $\eulermk{k+1}$, obtained by applying
one Euler step to the reverse SDE using the exact GMM score:
\begin{equation}
\label{eq:num_oracle_euler_proposal}
\eulermk{k+1}(\x,\rmd \z) = \gaussiand{ \x+\Delta_k\left[ \isvp \x + 2\nabla\log p_{u_k}(\x) \right]
}{ 2\Delta_k \Id_d }[\z]\rmd \z \eqsp, \qquad \Delta_k\eqdef \beta(t_{k+1}-t_k) \eqsp.
\end{equation}
This kernel is biased only because it discretizes the reverse dynamics. Second, define the perturbed-score Euler proposal
$\eulermk{k+1}[\theta]$ by replacing the exact score with
$$
\scorenet[u_k][\x][\theta] = \nabla\log p_{u_k}(\x) + \varepsilon_{\rm net}\lambda(\x) \eqsp,
$$
where $\lambda:\rset^\xdim\to\rset^\xdim$ is a fixed bounded vector
field, while $\varepsilon_{\rm net}$ controls the bias magnitude.  This perturbation is used as a controlled proxy for a learned-score
error.  The corresponding proposal is
\begin{equation}
\label{eq:num_learned_euler_proposal}
\eulermk{k+1}[\theta](\x,\rmd \z) = \gaussiand{ \x+\Delta_k\left[ \isvp \x + 2 \scorenet[u_k][\x][\theta] \right] }{ 2 \Delta_k \Id_d }[\z]\rmd \z \eqsp .
\end{equation}
The difference between $\mk{k+1}$ and $\eulermk{k+1}$ isolates the
\emph{time-discretization bias}, whereas the difference between $\eulermk{k+1}$ and
$\eulermk{k+1}[\theta]$ isolates \emph{the score approximation bias}.
In the experiments, the approximate mutation kernel used by SMC is
$\eulermk{k+1}[\theta]$.

In the reported experiments in Section \ref{sec:num_2d_gmm_experiment} we use bounded $\tanh$ perturbations.  In the
two-dimensional experiment, the globally biased Euler curve uses
$$
\lambda_{\rm bias}(\x)=(0,\tanh(x_2)) \eqsp,
$$
whereas the one-step forgetting diagnostic uses
$$
\lambda_{\rm loc}(\x)=(\tanh(x_1),\tanh(x_2)) \eqsp .
$$
In the 50-dimensional experiment in Section \ref{sec:num_50d_gmm_experiment}, the same perturbations are applied only in the visible coordinates:
$$
\lambda_{\rm bias}(\x)=(0,\tanh(x_2),0,\ldots,0) \eqsp,
\qquad
\lambda_{\rm loc}(\x)=(\tanh(x_1),\tanh(x_2),0,\ldots,0) \eqsp.
$$

\subsection{Conditional sampling model}

\paragraph{Observation model.}
Let $\dobs\ge 1$ be the observation dimension. We consider a linear Gaussian
observation of the clean variable,
\begin{equation}
\label{eq:num_observation_model}
\Xobs = H \Xora_0 + \varepsilon \eqsp,
\qquad H\in\rset^{\dobs\times \xdim} \eqsp,
\qquad \varepsilon\sim\gaussiand{0}{R} \eqsp,
\qquad \varepsilon\perp \Xora_0 \eqsp,
\end{equation}
where $R$ is symmetric positive definite with compatible dimensions. We condition on the observed value $\Xobs=\xobs\in\rset^{\dobs}$. The target
distribution is
$$
\filter{N}(\rmd \x) = \mathcal L(\Xora_0\in\rmd \x\mid \Xobs=\xobs) \eqsp.
$$
Since the prior is a Gaussian mixture and the likelihood is linear Gaussian,
this target is again a Gaussian mixture and is available in closed form.

\begin{lemma}[Closed-form posterior for a GMM under linear Gaussian observation]
\label{lem:num_gmm_linear_gaussian_posterior}
Let $\pi_{\rm data}$ be defined as in \eqref{eq:num_gmm_prior} and $\Xobs$ as in \eqref{eq:num_observation_model}. Then, for every observation $\xobs\in\rset^{\dobs}$, the conditional law
$
\mathcal L(\Xora_0\in\rmd\x\mid \Xobs=\xobs)
$
is the Gaussian mixture
\begin{equation}
\label{eq:num_exact_posterior_gmm}
\filter{N}(\rmd \x)
= \sum_{j=1}^{K} \widetilde w_j \gaussiand{\widetilde\mu_j}{\widetilde\Sigma_j}[\x] \rmd \x \eqsp,
\end{equation}
where  $\widetilde\mu_j$, $\widetilde\Sigma_j$ and $\widetilde w_j$ are defined in the proof.
\end{lemma}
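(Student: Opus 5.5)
The plan is to condition on the latent mixture label $J$ and use Bayes' formula twice: once within each Gaussian component and once across components. Let $J$ be the label with $\mathbb P(J=j)=w_j$ and $\Xora_0\mid J=j\sim\gaussiand{\mu_j}{\Sigma_j}$. The noise $\varepsilon$ is independent of $(\Xora_0,J)$. The joint density of $(\Xora_0,J,\Xobs)$ at $(\x,j,\xobs)$ is therefore
$$
w_j\,\gaussiand{\mu_j}{\Sigma_j}[\x]\,\gaussiand{H\x}{R}[\xobs] \eqsp.
$$
Conditioning on $\Xobs=\xobs$ only requires normalizing this product over $(\x,j)$, and then marginalizing over $j$.

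\textbf{Step 1: per-component posterior.} For fixed $j$, I will complete the square in $\x$ in
$$
\tfrac12(\x-\mu_j)^\top\Sigma_j^{-1}(\x-\mu_j)+\tfrac12(\xobs-H\x)^\top R^{-1}(\xobs-H\x) \eqsp.
$$
This identifies
$$
\widetilde\Sigma_j\eqdef(\Sigma_j^{-1}+H^\top R^{-1}H)^{-1}, \qquad \widetilde\mu_j\eqdef\widetilde\Sigma_j(\Sigma_j^{-1}\mu_j+H^\top R^{-1}\xobs) \eqsp.
$$
Equivalently, in Kalman-gain form, with $K_j\eqdef\Sigma_jH^\top(H\Sigma_jH^\top+R)^{-1}$,
$$
\widetilde\mu_j=\mu_j+K_j(\xobs-H\mu_j), \qquad \widetilde\Sigma_j=(\Id_\xdim-K_jH)\Sigma_j \eqsp.
$$
The two forms agree by the Woodbury identity. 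Invertibility of $\widetilde\Sigma_j^{-1}$ follows because $\Sigma_j^{-1}$ is positive definite and $H^\top R^{-1}H$ is positive semidefinite. The gain form is valid because $H\Sigma_jH^\top+R$ is positive definite, since $R$ is.

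\textbf{Step 2: factorization and component weights.} I will record the factorization
$$
\gaussiand{\mu_j}{\Sigma_j}[\x]\,\gaussiand{H\x}{R}[\xobs]=\gaussiand{H\mu_j}{H\Sigma_jH^\top+R}[\xobs]\,\gaussiand{\widetilde\mu_j}{\widetilde\Sigma_j}[\x] \eqsp.
$$
The cleanest justification is probabilistic. Under $J=j$, the pair $(\Xora_0,\Xobs)$ is jointly Gaussian with mean $(\mu_j,H\mu_j)$ and covariance blocks $\Sigma_j$, $\Sigma_jH^\top$, $H\Sigma_jH^\top+R$. The first factor is then the marginal of $\Xobs$, and the second is the Gaussian conditioning formula. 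This is the same formula already used to derive \eqref{eq:num_exact_backward_kernel}.

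Multiplying by $w_j$, summing over $j$, and dividing by the evidence $\sum_\ell w_\ell\gaussiand{H\mu_\ell}{H\Sigma_\ell H^\top+R}[\xobs]$ gives \eqref{eq:num_exact_posterior_gmm} with
$$
\widetilde w_j\eqdef\frac{w_j\gaussiand{H\mu_j}{H\Sigma_jH^\top+R}[\xobs]}{\sum_{\ell=1}^K w_\ell\gaussiand{H\mu_\ell}{H\Sigma_\ell H^\top+R}[\xobs]} \eqsp.
$$
These weights are strictly positive and sum to one. The evidence is a finite sum of strictly positive Gaussian densities, so it is positive and finite and the normalization is well defined.

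\textbf{Main obstacle.} The only real care needed is in Step 2: checking that the normalizing constant of each component equals exactly the marginal likelihood $\gaussiand{H\mu_j}{H\Sigma_jH^\top+R}[\xobs]$. Without this, the weights $\widetilde w_j$ would pick up spurious $j$-dependent determinant factors. Using the joint-Gaussian argument rather than raw algebra avoids tracking $\det\widetilde\Sigma_j$ explicitly.
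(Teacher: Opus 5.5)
Your proposal is correct and follows the paper's proof essentially step for step. Both arguments write the joint density of $(\Xora_0,\Xobs,J)$, complete the square to obtain the same $\widetilde\Sigma_j$ and $\widetilde\mu_j$, use the product identity with the marginal likelihood $\mathcal{N}(H\mu_j, H\Sigma_jH^\top+R)$, and apply Bayes' formula for $\widetilde w_j$; your joint-Gaussian justification of the product identity and the Kalman-gain form just make explicit what the paper states more tersely from $\Xobs\mid J=j\sim\mathcal{N}(H\mu_j,S_j)$.
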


\begin{proof}
Conditionally on $J=j$, the observation model $\Xobs  = H \Xora_0+\varepsilon$ is linear. Hence the joint density of $(\Xora_0,\Xobs,J)$ satisfies
\begin{equation} \label{eq:joint_p(x,y,J)}
p(\x,\xobs,J=j) 
= w_j \gaussiand{\mu_j}{\Sigma_j}[\x] \gaussiand{H\x}{R}[\xobs] \eqsp.
\end{equation}
For each $j$, completing the square in $\x$ yields
\begin{equation}
\gaussiand{\mu_j}{\Sigma_j}[\x] \gaussiand{H\x}{R}[\xobs]
\propto_{\x} \gaussiand{\widetilde\mu_j}{\widetilde\Sigma_j}[\x] \eqsp,
\end{equation}
with
$$
\widetilde\mu_j = \widetilde\Sigma_j \left(  \Sigma_j^{-1}\mu_j+H^\top R^{-1}\xobs \right) \eqsp, \qquad 
\widetilde\Sigma_j = \left(\Sigma_j^{-1}+H^\top R^{-1}H\right)^{-1} \eqsp.
$$
So we get
$$
\mathcal L(\Xora_0\in\rmd\x\mid \Xobs=\xobs,J=j)
= \gaussiand{\widetilde\mu_j}{\widetilde\Sigma_j}[\x]\rmd\x \eqsp.
$$
Moreover, since 
\begin{equation} \label{eq:Y_knowing_J}
\Xobs\mid J=j \sim \gaussiand{H\mu_j}{S_j}, \qquad S_j\eqdef H\Sigma_jH^\top+R \eqsp,
\end{equation}
we obtain 
\begin{equation} \label{eq:gaussian_product_identity_posterior} \gaussiand{\mu_j}{\Sigma_j}[\x] \gaussiand{H\x}{R}[\xobs] 
= \gaussiand{H\mu_j}{S_j}[\xobs] \gaussiand{\widetilde\mu_j}{\widetilde\Sigma_j}[\x] \eqsp. 
\end{equation} 
Integrating \eqref{eq:joint_p(x,y,J)} with respect to $\x$, we get
$$
p(\xobs,J=j) = w_j\gaussiand{H\mu_j}{S_j}[\xobs] \eqsp.
$$
Consequently, $p(\xobs)  = \sum_{\ell=1}^{K}w_\ell\gaussiand{H\mu_\ell}{S_\ell}[\xobs]$. By Bayes' formula,
$$
\widetilde w_j \eqdef \mathbb P(J=j\mid \Xobs=\xobs) = \frac{ w_j\gaussiand{H\mu_j}{S_j}[\xobs] }{ \sum_{\ell=1}^{K} w_\ell \gaussiand{H\mu_\ell}{S_\ell}[\xobs] } \eqsp.
$$
Hence, we obtain
$$
\mathcal L(\Xora_0\in\rmd\x\mid \Xobs=\xobs)
= \sum_{j=1}^{K} \widetilde w_j \gaussiand{\widetilde\mu_j}{\widetilde\Sigma_j}[\x]\rmd\x \eqsp.
$$
\end{proof}

\paragraph{Bridge likelihoods.}
For $0\le k\le N$, define the likelihood of the observation given the current reverse-time state by
\begin{equation}
\label{eq:num_bridge_likelihood}
h_k(\x) \eqdef p(\xobs\mid \Xola_{t_k}=\x) = p(\xobs\mid \Xora_{T-t_k}=\x).
\end{equation}
Set $u_k\eqdef T-t_k$, conditionally on $J=j$, the pair $\left(\Xobs,\Xora_{u_k}\right)$ is jointly Gaussian, using \eqref{eq:Xora_knowing_J} and \eqref{eq:Y_knowing_J}, we have
$$
\Xobs\mid J=j \sim \gaussiand{H\mu_j}{S_j} \eqsp, \qquad S_j\eqdef H\Sigma_jH^\top+R \eqsp,
$$
and
$$
\Xora_{u_k}\mid J=j \sim \gaussiand{\mu_{j,u_k}}{\Sigma_{j,u_k}} \eqsp, 
\qquad \mu_{j,u_k} = m_{0,u_k}\mu_j \eqsp,
\qquad \Sigma_{j,u_k} = m_{0,u_k}^2\Sigma_j+\sigma^2_{0,u_k}\Id_d \eqsp.
$$
Hence,
$$
\operatorname{Cov} \left( \Xobs,\Xora_{u_k}  \middle| J=j \right)  = m_{0,u_k}H\Sigma_j \eqsp .
$$
Therefore, by the Gaussian conditioning formula,
$$
\Xobs \mid \left\{ \Xora_{u_k}=\x,\ J=j \right\} \sim \gaussiand{\mu^y_{j,k}(\x)}{S^y_{j,k}},
$$
with
\begin{align*}
\mu^y_{j,k}(\x) &\eqdef H\mu_j + m_{0,u_k} H \Sigma_j \Sigma_{j,u_k}^{-1} (\x-\mu_{j,u_k}) \eqsp, \\
S^y_{j,k} &\eqdef S_j - m_{0,u_k}^2 H \Sigma_j \Sigma_{j,u_k}^{-1} \Sigma_jH^\top \eqsp.
\end{align*}
The posterior probability of component $j$ given the current state $\Xora_{u_k}=\x$ is
$$
r_j(u_k,\x) = \frac{ w_j\gaussiand{\mu_{j,u_k}}{\Sigma_{j,u_k}}[\x] }{ \sum_{\ell=1}^{K} w_\ell \gaussiand{\mu_{\ell,u_k}}{\Sigma_{\ell,u_k}}[\x] } \eqsp.
$$
Hence,
\begin{equation} \label{eq:num_hk_formula}
h_k(\x)  = \sum_{j=1}^{K} r_j(u_k,\x) \gaussiand{\mu^y_{j,k}(\x)}{S^y_{j,k}}[\xobs] \eqsp.
\end{equation}
At the terminal reverse time $k=N$, we have $u_N=0$. Hence
$$
\mu_{j,u_N}=\mu_j \eqsp, \qquad \Sigma_{j,u_N}=\Sigma_j \eqsp,
$$
and the conditional observation law reduces to
$$
\Xobs\mid \{\Xora_0=\x,J=j\} \sim \gaussiand{H\x}{R} \eqsp.
$$
\paragraph{Bridge weights and numerical implementation.}
The abstract Feynman--Kac model in \Cref{sec:fk_smc_general} is written in selection-before-propagation form: at time $k$, particles are first selected according to a state potential $\pot{k}(\x)$, and are then propagated with the
mutation kernel $\mk{k+1}$. In this controlled numerical conditional-sampling experiment, the observation is more naturally introduced through a bridge weight depending on two consecutive states. We introduce,
\begin{equation} \label{eq:num_bridge_potential}
\pot{k}^{\rm br}(\x,\z) \eqdef \frac{h_{k+1}(\z)}{h_k(\x)} \eqsp,  \qquad k=0,\ldots,N-1 \eqsp,
\end{equation}
The reason for using the ratio \eqref{eq:num_bridge_potential} is the telescoping identity
$$
h_0(\Xola_{t_0}) \prod_{k=0}^{N-1} \pot{k}^{\rm br}(\Xola_{t_k},\Xola_{t_{k+1}}) = h_N(\Xola_{t_N}) \eqsp.
$$
Hence, for every bounded measurable test function $\psi$,
$$
\frac{ \E\left[ \psi(\Xola_{t_N}) h_0(\Xola_{t_0}) \prod_{k=0}^{N-1} \pot{k}^{\rm br}(\Xola_{t_k},\Xola_{t_{k+1}}) \right] }{ \E\left[ h_0(\Xola_{t_0}) \prod_{k=0}^{N-1} \pot{k}^{\rm br}(\Xola_{t_k},\Xola_{t_{k+1}}) \right]}
= \frac{ \E\left[ \psi(\Xola_{t_N})h_N(\Xola_{t_N}) \right] }{ \E\left[ h_N(\Xola_{t_N}) \right] } \eqsp.
$$
The right-hand side is precisely
$$
\E[\psi(\Xola_{t_N})\mid \Xobs=\xobs] \eqsp.
$$
Strictly speaking, \eqref{eq:num_bridge_potential} depends on both endpoints of the transition, whereas the abstract presentation uses state potentials $\pot{k}(\x)$. This is only a notational issue: the bridge formulation can be written as a standard Feynman--Kac model on the augmented state $(\Xola_{t_k},\Xola_{t_{k+1}})$ as we will show in the next paragraph.
\paragraph{Relation with the selection-before-propagation form.}
For some generic proposal kernel $Q_{k+1}$, define
$$
g_k^{\star}(\x) \eqdef \int Q_{k+1}(\x,\rmd \z) \pot{k}^{\rm br}(\x,\z)
= \frac{  Q_{k+1}h_{k+1}(\x) }{ h_k(\x) } \eqsp,
$$
and, whenever $g_k^\star(\x)>0$,
$$
Q_{k+1}^\star(\x,\rmd \z) \eqdef \frac{ Q_{k+1}(\x,\rmd \z) \pot{k}^{\rm br}(\x,\z)}{ g_k^\star(\x)} \eqsp.
$$
Then $Q^\star_{k+1}$ is a Markov kernel and
$$
Q_{k+1}(\x,\rmd \z) \pot{k}^{\rm br}(\x,\z)
= g^{\star}(\x)Q_{k+1}^\star(\x,\rmd \z) \eqsp.
$$
Hence the transition-potential update defined by \eqref{eq:num_bridge_potential} can be written in the same selection-before-propagation form as \Cref{sec:fk_smc_general}: first select with the state potential $\bar g_k^Q(\x)$, then propagate with the twisted Markov kernel $\bar Q_{k+1}$.




\subsection{Two-dimensional Gaussian mixture}
\label{sec:num_2d_gmm_experiment}

The prior target is the Gaussian mixture
$$
    p_0(\x)
    =
    \sum_{j=1}^{4} w_j
    \gaussiand{\mu_j}{\Sigma_j}[\x],
    \qquad \x\in\rset^2,
$$
with weights $(w_1,w_2,w_3,w_4) = (0.05,0.15,0.30,0.50)$, means
$$
\mu_1 = (-5,-4)^\top \eqsp, \qquad \mu_2 = (-4,4.5)^\top \eqsp, \qquad \mu_3 = (4,-3.5)^\top \eqsp, \qquad \mu_4 = (5,4)^\top \eqsp,
$$
and covariance matrices
\begin{align*}
& \Sigma_1 =
\begin{pmatrix}
0.20 & 0.12\\
0.12 & 0.50
\end{pmatrix} \eqsp, \qquad
\Sigma_2 =
\begin{pmatrix}
0.80 & -0.35\\
-0.35 & 0.30
\end{pmatrix} \eqsp, \\
& \Sigma_3 =
\begin{pmatrix}
0.35 & 0.20\\
0.20 & 0.90
\end{pmatrix} \eqsp, \qquad
\Sigma_4 =
\begin{pmatrix}
0.50 & -0.25\\
-0.25 & 0.40
\end{pmatrix} \eqsp.
\end{align*}
The components are deliberately separated and anisotropic, so that conditioning on a one-dimensional observation produces a posterior with two visible modes. The observation model is scalar and acts only on the first coordinate:
$$
\Xobs = H \Xora_0 + \varepsilon \eqsp, \qquad H = (1,0) \eqsp, \qquad \varepsilon \sim \gaussiand{0}{R} \eqsp, \qquad R = 0.16 \eqsp.
$$
In the reported experiment we condition on the realization $\xobs = 3.8$. This observation is chosen so that the two left components receive negligible posterior mass, while the two right components remain visible with unequal weights.

We test different SMC sampling methods to approximate the posterior distribution. To make the comparison quantitative, we choose a visible region, the red rectangle, around one posterior mode. For each method, we estimate the posterior probability of this region by the weighted fraction of final particles falling inside it, and compare this estimate with the exact posterior probability, which is accurately estimated by standard numerical routines for the Gaussian CDF, as
\begin{equation} \label{eq:exact_posterior_value_ref}
\filter{N}[\psi_R]  = \sum_{j=1}^{4} \widetilde w_j \mathbb{P} \left(
Z_j \in \mathcal{R} \right) \eqsp, \qquad Z_j \sim \gaussiand{\widetilde\mu_j}{\widetilde\Sigma_j} \eqsp.
\end{equation}
The selected rectangle $\mathcal{R}$ is the red rectangle in \Cref{fig:num_2d_geometry_main}. The left panel shows samples from the original GMM and the right panel shows samples from the exact
conditional posterior.  Points are colored according to whether $\psi_R(\x)=1$. Moreover, we show a representative conditional particle cloud for the three mutation mechanisms used in the experiment.  The first panel in \Cref{fig:num_2d_conditional_clouds} shows independent samples from the exact posterior GMM.  The remaining panels show the final SMC particles obtained when the mutation kernels are, respectively, the exact reverse kernels $\{\mk{k+1}\}_{k=0}^{N-1}$ in \eqref{eq:num_exact_backward_kernel}, the oracle Euler proposals $\{\eulermk{k+1}\}_{k=0}^{N-1}$ in \eqref{eq:num_oracle_euler_proposal}, and the perturbed-score Euler proposals $\{\eulermk{k+1}[\theta]\}_{k=0}^{N-1}$ in \eqref{eq:num_learned_euler_proposal}. This plot is meant as a qualitative diagnostic: it shows whether the particle system places mass in the correct posterior regions and around the selected event $\{\psi_R=1\}$.


\begin{figure}[t]
    \centering
    \includegraphics[width=.95\linewidth]{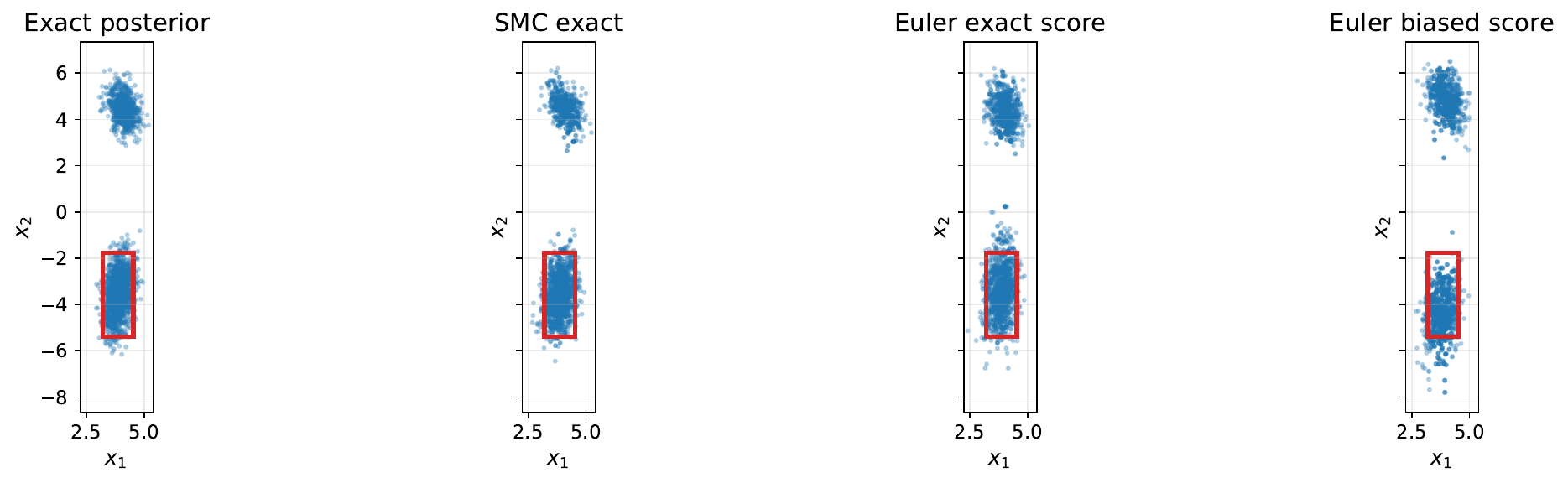}
    \caption{
Conditional samples in the two-dimensional benchmark.  The first panel shows
samples from the exact posterior GMM.  The other panels show final SMC particle
clouds obtained with the exact reverse kernels $\mk{k+1}$, the oracle Euler
kernels $\eulermk{k+1}$, and the perturbed-score Euler kernels
$\eulermk{k+1}[\theta]$.  The red rectangle is the modal region defining
$\psi_R$.
}
    \label{fig:num_2d_conditional_clouds}
\end{figure}

\paragraph{Illustration of \Cref{thm:total_smc_kernel_bias_error}}
We next quantify the two effects predicted by \Cref{thm:total_smc_kernel_bias_error}: \emph{the finite-particle Monte Carlo error} and \emph{the deterministic bias} induced by approximate mutation kernels.

For each particle count $M$, we run independent SMC repetitions and report the empirical $L_2$ error with respect to the exact posterior value $\filter{N}[\psi_{\mathcal R}]$ given in \eqref{eq:exact_posterior_value_ref}.  The three curves in \Cref{fig:num_2d_theorem_illustration} (Left) and in \Cref{fig:num_2d_kernel_bias_log} (same plot in log scale) correspond to SMC with the exact reverse kernels $\mk{k+1}$, the Euler kernels with true score function $\eulermk{k+1}$, and the Euler kernels with perturbed-score $\eulermk{k+1}[\theta]$.  The exact reverse kernels introduce no time-discretization error and therefore exhibit the expected Monte Carlo decay. The Euler kernels with true score function isolate the effect of time discretization, while the perturbed-score Euler kernels exhibit an additional non-vanishing bias plateau.

\begin{figure}[t]
    \centering
    \includegraphics[width=.65\linewidth]{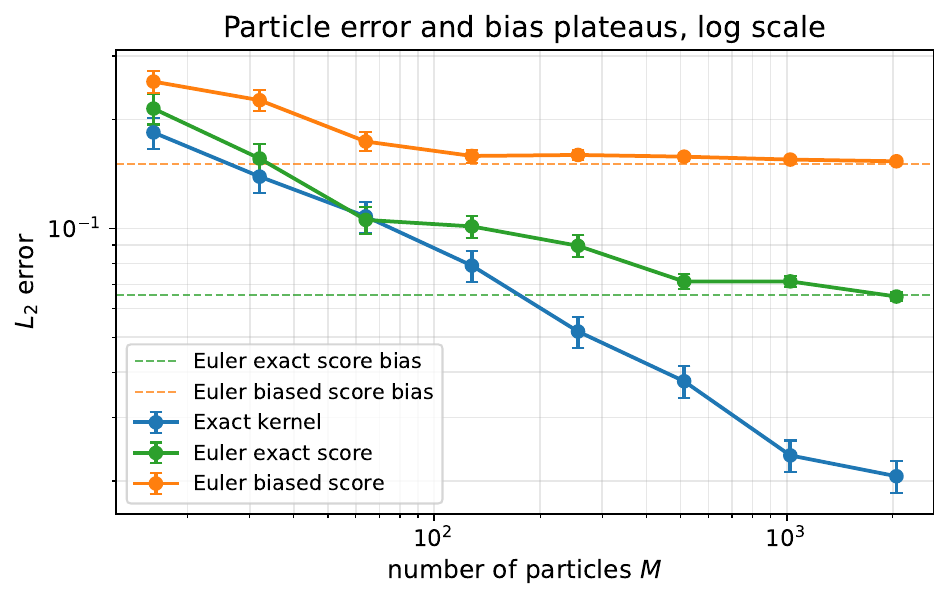}
    \caption{
    Log-scale version of the particle error and kernel bias plot for the
    two-dimensional GMM.  The empirical $L_2$ error is plotted as a function of
    the number of particles $M$, using the same repetitions and methods as in
    \Cref{fig:num_2d_theorem_illustration}.  The log scale makes the initial
    Monte Carlo decay and the subsequent bias plateaus easier to distinguish.
    }
    \label{fig:num_2d_kernel_bias_log}
\end{figure}

We then isolate the forgetting of a local kernel perturbation, which includes both discretization and score-approximation effects.  In this experiment all mutation kernels are exact, except at one selected reverse step where $\mk{k+1}$ is replaced by a perturbed-score Euler kernel $\eulermk{k+1}[\theta]$.  We vary the location of this single perturbation and measure the resulting absolute bias in the final estimate of $\filter{N}[\psi_{\mathcal R}]$.  The horizontal axis in \Cref{fig:num_2d_theorem_illustration} (Right) is the number of remaining reverse steps after the perturbation.  Moving to the right therefore corresponds to placing the
local perturbation earlier in the reverse samlping process, closer to the initial distribution $p_T$ and farther from the data distribution $p_0$, leaving more subsequent kernels through which its effect can be forgotten.

As an additional diagnostic, \Cref{fig:num_2d_discretization_error} varies the number of reverse steps $N$ while keeping the particle count fixed.  This plot checks that the discrepancy of the oracle Euler kernel is indeed tied to the time discretization of the reverse dynamics.  Increasing $N$ refines the reverse-time grid, and therefore reduces the discretization component of the
error, up to the remaining finite-particle variability and score perturbation
bias.

\begin{figure}[t]
    \centering
    \includegraphics[width=.65\linewidth]{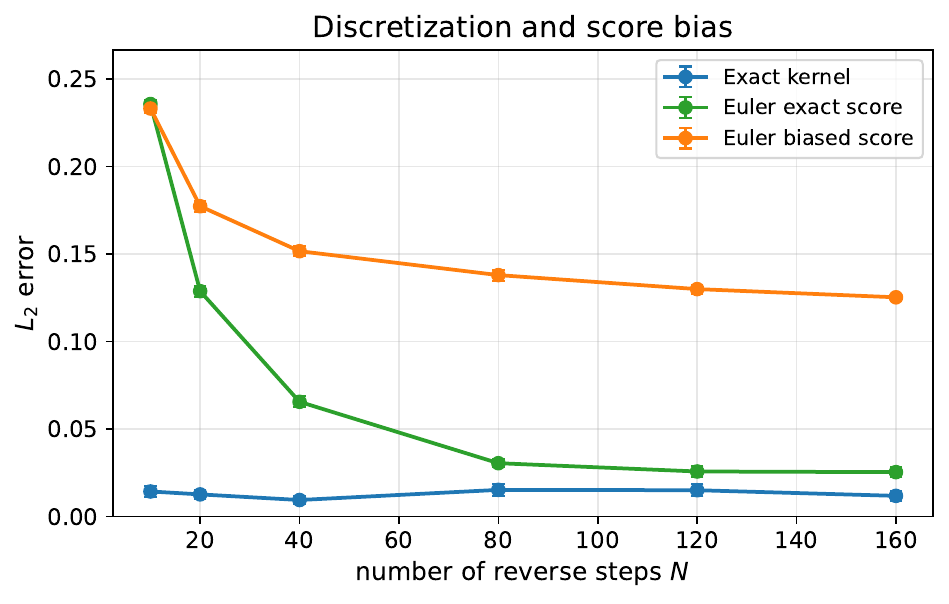}
    \caption{
    Discretization diagnostic for the two-dimensional GMM.  The empirical
    $L_2$ error is plotted as a function of the number of reverse steps $N$,
    with the particle count kept fixed.  This separates the effect of refining
    the Euler time discretization from the particle-count experiment in
    \Cref{fig:num_2d_theorem_illustration}.
    }
    \label{fig:num_2d_discretization_error}
\end{figure}

\subsection{A 25-component Gaussian mixture in dimension 50}
\label{sec:num_50d_gmm_experiment}

We next consider a higher-dimensional benchmark.  The data distribution is a Gaussian mixture on $\rset^{50}$ with $K=25$ components,
$$
p_0 (\x) = \sum_{j=1}^{25} w_j \gaussiand{\mu_j}{\Sigma_j}[\x] \eqsp, \qquad \x\in\rset^{50} \eqsp.
$$
The component means are arranged on a $5\times5$ grid in the first two coordinates and are zero in the remaining coordinates.  More precisely, if $(a_j,b_j)_{j=1}^{25}$ denotes the grid $\{-10,-5,0,5,10\}^2$, then
$$
\mu_j = (a_j,b_j,0,\ldots,0)^\top \in \rset^{50}.
$$
The weights are generated once from independent chi-square random variables with three degrees of freedom and then normalized. The covariance matrices are full and anisotropic: for each component,
$$
\Sigma_j = U_j \operatorname{diag}(1,2^{-1},\ldots,50^{-1}) U_j^\top \eqsp,
$$
where $U_j$ is an orthogonal matrix obtained from a random Gaussian matrix. All random quantities defining the mixture are fixed using the same seed in all runs. 

As in the two-dimensional experiment, we use a scalar observation acting only on the first coordinate,
$$
\Xobs = H\Xora_0+\varepsilon \eqsp, \qquad H=(1,0,\ldots,0) \eqsp,
\qquad \varepsilon\sim\gaussiand{0}{R} \eqsp,
\qquad R=1 \eqsp.
$$
In the reported experiment we condition on the fixed realization $\xobs=5.0$.  The observation is not resampled across repetitions: all methods are compared on the same conditional target $\mathcal L(\Xora_0\mid \Xobs=\xobs)$.

Although the sampler evolves in the full ambient space $\rset^{50}$, the test function is defined through the first two coordinates.  Let $\Pi_{1:2}\x=(x_1,x_2)$ and let $\mathcal R\subset\rset^2$ be an axis-aligned rectangle centered at the dominant posterior mode in the projected $(x_1,x_2)$ plane. We define, $\psi_{\mathcal R}(\x) \eqdef \indi{\mathcal R}[\Pi_{1:2}\x]$. Thus $\psi_{\mathcal R}$ measures the posterior probability assigned to one projected modal region, while still testing an SMC sampler operating in dimension $50$.

The exact posterior value is again computed deterministically from the closed-form posterior GMM.  If $\widetilde\mu_{j,1:2}$ and $\widetilde\Sigma_{j,1:2}$ denote the first two coordinates and the corresponding $2\times2$ marginal covariance of posterior component $j$, then
$$
\filter{N}[\psi_{\mathcal R}] = \sum_{j=1}^{25} \widetilde w_j \mathbb P(Z_j\in\mathcal R) \eqsp, \qquad Z_j\sim \gaussiand{\widetilde\mu_{j,1:2}}{\widetilde\Sigma_{j,1:2}} \eqsp.
$$

\begin{figure}[t]
    \centering
    \includegraphics[width=.78\linewidth]{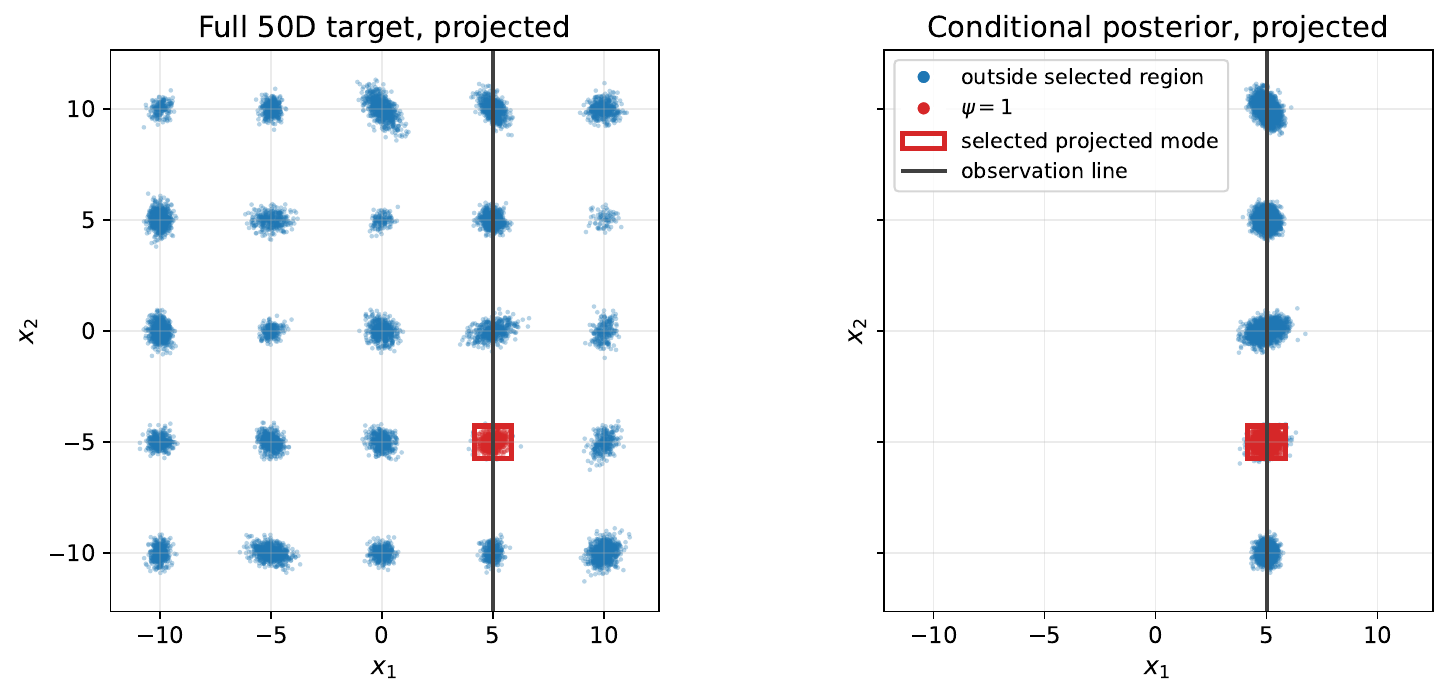}
    \caption{
    Projected geometry of the 50-dimensional test function.  The sampler evolves in $\rset^{50}$, but the plotted coordinates are $(x_1,x_2)$. The red rectangle defines the projected event $\{\psi_{\mathcal R}=1\}$ around the selected posterior mode.  Left: samples from the original GMM projected onto the first two coordinates.
    Right: samples from the exact conditional posterior projected onto the same
    plane. The black vertical line indicates the observed value $\xobs=5$
    }
    \label{fig:num_50d_geometry}
\end{figure}

\Cref{fig:num_50d_geometry} displays the projected geometry of the test function.  The figure is only a two-dimensional visualization of a 50-dimensional experiment: the  SMC sampler, the weights, and the proposal kernels all act in the full ambient space.

We also show representative projected conditional particle clouds in \Cref{fig:num_50d_conditional_clouds}.  The first panel shows independentssamples from the exact posterior GMM, while the remaining panels show the final
SMC particles obtained with the exact reverse kernels $\mk{k+1}$, the oracle Euler kernels $\eulermk{k+1}$, and the perturbed-score Euler kernels $\eulermk{k+1}[\theta]$.  The projection makes it possible to inspect whether
the particle system places mass in the correct visible posterior regions.

\begin{figure}[t]
    \centering
    \includegraphics[width=.98\linewidth]{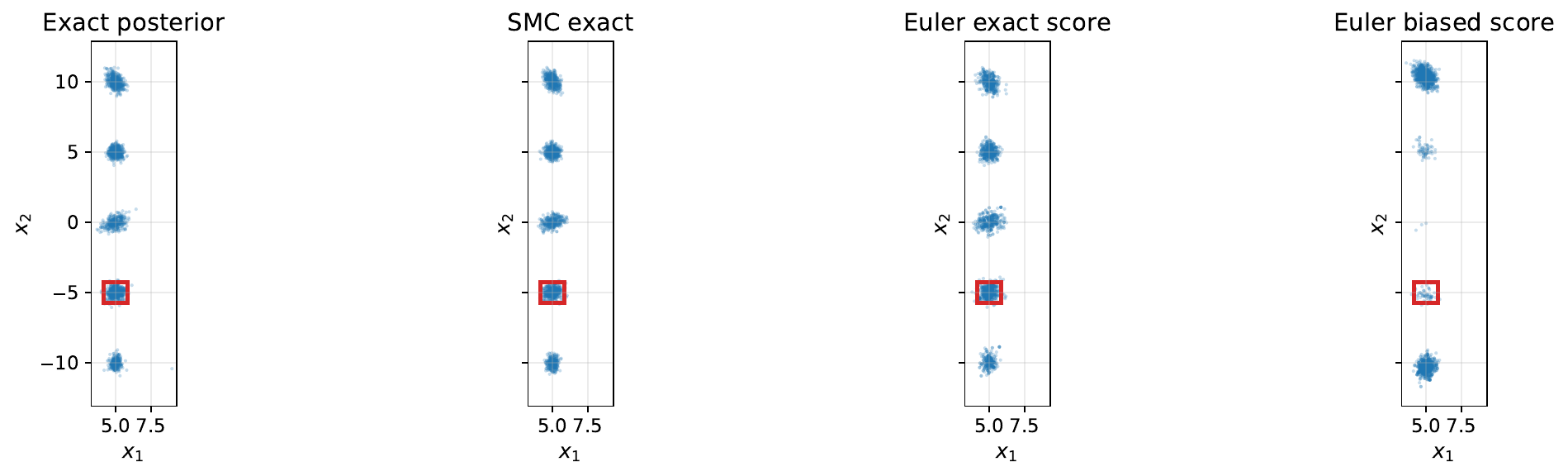}
    \caption{
    Conditional samples in the 50-dimensional benchmark, projected onto the
    first two coordinates.  The first panel shows samples from the exact
    posterior GMM.  The other panels show final SMC particle clouds obtained
    with the exact reverse kernels $\mk{k+1}$, the oracle Euler kernels
    $\eulermk{k+1}$, and the perturbed-score Euler kernels
    $\eulermk{k+1}[\theta]$.  The red rectangle is the projected modal region
    defining $\psi_{\mathcal R}$.
    }
    \label{fig:num_50d_conditional_clouds}
\end{figure}

\paragraph{Illustration of \Cref{thm:total_smc_kernel_bias_error}}
We repeat the same particle-error experiment as in the two-dimensional benchmark.  For each particle count $M$, we run independent SMC repetitions and report the empirical $L_2$ error with respect to the exact posterior value
$\filter{N}[\psi_{\mathcal R}]$.  The three curves in \Cref{fig:num_50d_kernel_bias} correspond to the exact reverse kernels, the oracle Euler kernels, and the perturbed-score Euler kernels.  This experiment checks whether the same Monte Carlo decay and deterministic bias plateau remain visible in a substantially higher-dimensional ambient space.

\begin{figure}[t]
    \centering
    \includegraphics[width=.65\linewidth]{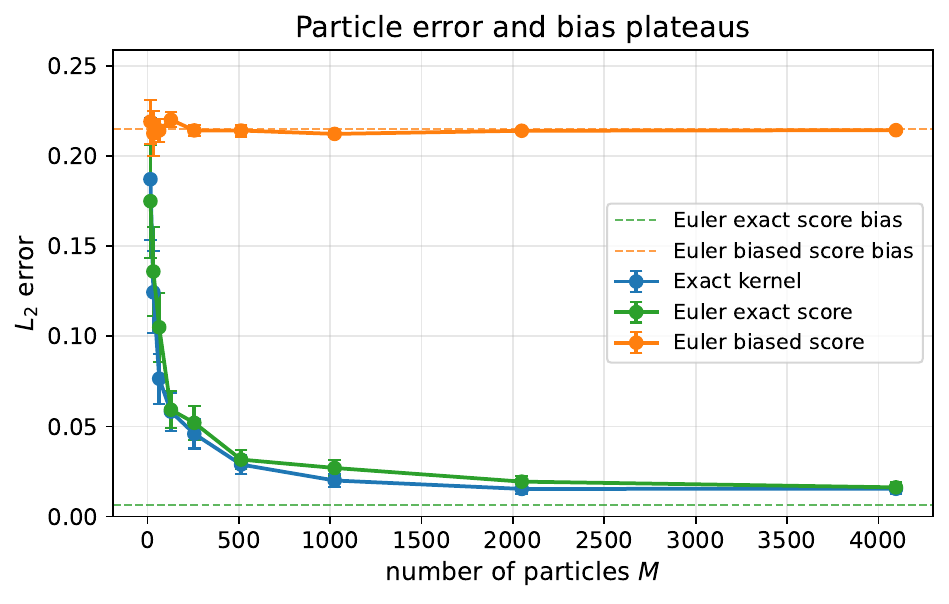}
    \caption{
    Particle error and kernel bias for the 50-dimensional benchmark.  The
    empirical $L_2$ error is plotted as a function of the number of particles
    $M$, using $30$ independent SMC repetitions for each value of $M$ and each
    method.  The horizontal dashed lines show
    large-particle estimates of the Euler bias plateaus, computed with
    $M_{\rm ref}=40\,000$ particles and $4$ independent reference runs.  Error
    bars show empirical standard errors across repetitions.
    }
    \label{fig:num_50d_kernel_bias}
\end{figure}

We finally report the local-perturbation diagnostic in the 50-dimensional setting.  As before, all mutation kernels are exact except at one selected reverse step, where the exact kernel is replaced by a perturbed-score Euler kernel.  This diagnostic is more demanding than in the two-dimensional example:
the sampler evolves in the full 50-dimensional space, while the test function is a discontinuous indicator depending only on a projected modal region.  As a result, the attenuation pattern is less smooth, but the experiment still illustrates how the effect of a local kernel perturbation depends on the number
of subsequent remaining steps.

\begin{figure}[t]
    \centering
    \includegraphics[width=.65\linewidth]{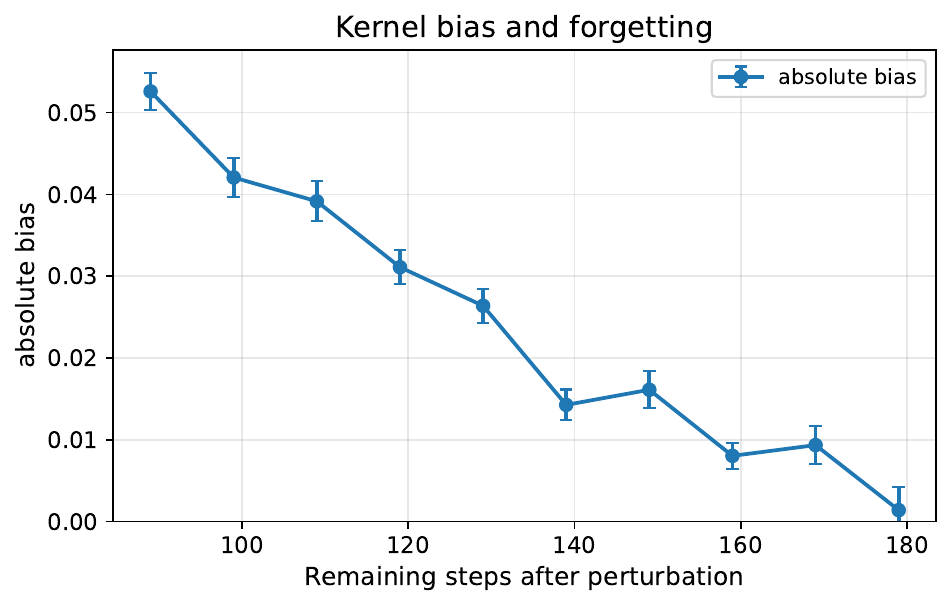}
    \caption{
    Kernel bias and forgetting for the 50-dimensional benchmark.  A single
    reverse transition is replaced by the perturbed-score Euler kernel
    $\eulermk{k+1}[\theta]$.  Error bars show empirical standard errors across independent repetitions.
    }
    \label{fig:num_50d_forgetting}
\end{figure}

\end{document}